\documentclass{article}

\PassOptionsToPackage{numbers,sort}{natbib}

\usepackage[utf8]{inputenc} 
\usepackage[T1]{fontenc}    
\usepackage[hyperfootnotes=false]{hyperref}       
\usepackage{url}            
\usepackage{booktabs}       
\usepackage{amsfonts}       
\usepackage{nicefrac}       
\usepackage{microtype}      
\usepackage{enumitem}
\usepackage{graphicx}
\usepackage{hhline}
\usepackage{algorithm}
\usepackage[table]{xcolor}
\usepackage{tikzscale}
\usepackage{standalone}
\usepackage{algpseudocode}
\usepackage{comment}
\usepackage{algorithmicx}
\usepackage[font={small}]{caption}
\usepackage{subcaption}
\usepackage[most]{tcolorbox}

\usepackage{tabu}
\usepackage{tikz}

\usepackage{amsmath,amssymb,amsthm}
\usepackage{bm}

\usepackage{xcolor}
\usepackage{etoolbox}

\newtoggle{arxiv}
\toggletrue{arxiv}

\iftoggle{arxiv}{
  \usepackage{authblk}
  \usepackage[numbers]{natbib}
  \setlength{\textwidth}{6.5in}
  \setlength{\textheight}{9in}
  \setlength{\oddsidemargin}{0in}
  \setlength{\evensidemargin}{0in}
  \setlength{\topmargin}{-0.5in}
  \newlength{\defbaselineskip}
  \setlength{\defbaselineskip}{\baselineskip}
  \setlength{\marginparwidth}{0.8in}
}{
  \PassOptionsToPackage{numbers}{natbib}
  \usepackage[final]{neurips_2018}
}

\newtheorem{theorem}{Theorem}
\newtheorem{proposition}{Proposition}
\newtheorem{lemma}{Lemma}
\newtheorem{corollary}{Corollary}
\theoremstyle{definition}

\newcommand{\R}{\mathbb{R}}
\DeclareMathOperator{\diag}{diag}
\DeclareMathOperator{\rank}{rank}
\newcommand{\defeq}{:=}
\providecommand{\sign}{\mathop{\rm sign}}
\providecommand{\abs}[1]{\left\lvert#1\right\rvert}
\newcommand\VCdim{{\operatorname{VCdim}}}
\DeclareMathOperator{\vect}{vec}

\newcommand{\vzero}{\mathbf{0}}
\newcommand{\vA}{\mathbf{A}}
\newcommand{\vB}{\mathbf{B}}
\newcommand{\vC}{\mathbf{C}}
\newcommand{\vD}{\mathbf{D}}
\newcommand{\vF}{\mathbf{F}}
\newcommand{\vG}{\mathbf{G}}
\newcommand{\vH}{\mathbf{H}}
\newcommand{\vI}{\mathbf{I}}
\newcommand{\vM}{\mathbf{M}}
\newcommand{\vN}{\mathbf{N}}

\newcommand{\vR}{\mathbf{R}}
\newcommand{\vS}{\mathbf{S}}
\newcommand{\vT}{\mathbf{T}}
\newcommand{\vZ}{\mathbf{Z}}
\newcommand{\vb}{\mathbf{b}}
\newcommand{\ve}{\mathbf{e}}
\newcommand{\vg}{\mathbf{g}}
\newcommand{\vh}{\mathbf{h}}
\newcommand{\vi}{\mathbf{i}}
\newcommand{\vs}{\mathbf{s}}
\newcommand{\vu}{\mathbf{u}}
\newcommand{\vv}{\mathbf{v}}
\newcommand{\vx}{\mathbf{x}}
\newcommand{\vy}{\mathbf{y}}

\newcommand{\cD}{\mathcal{D}}
\newcommand{\cF}{\mathcal{F}}
\newcommand{\cK}{\mathcal{K}}
\newcommand{\cS}{\mathcal{S}}

\newcommand{\LDRSD}{LDR-SD}
\newcommand{\LDRTD}{LDR-TD}
\newcommand*\samethanks[1][\value{footnote}]{\footnotemark[#1]}

\title{Learning Compressed Transforms \\ with Low Displacement Rank}

\iftoggle{arxiv}{
\author[$\dagger$]{Anna T. Thomas\thanks{These authors contributed equally.}}
\author[$\dagger$]{Albert Gu\samethanks}
\author[$\dagger$]{Tri Dao}
\author[$\ddagger$]{Atri Rudra}
\author[$\dagger$]{Christopher R{\'e}}
\affil[$\dagger$]{Department of Computer Science, Stanford University}
\affil[$\ddagger$]{Department of Computer Science and Engineering, University at Buffalo, SUNY\vspace{4pt}}
\affil[ ]{{\texttt{\{thomasat,albertgu,trid\}@stanford.edu}, \texttt{atri@buffalo.edu}, \texttt{chrismre@cs.stanford.edu}}}
}
{
\author{
Anna T. Thomas${^\dagger}$\thanks{These authors contributed equally.}, Albert Gu${^\dagger}$\samethanks, Tri Dao$^\dagger$, Atri Rudra${^\ddagger}$, Christopher R{\'e}$^{\dagger}$\\
$^\dagger$ Department of Computer Science, Stanford University\\
$^\ddagger$ Department of Computer Science and Engineering, University at Buffalo, SUNY\\
\footnotesize{\texttt{\{thomasat,albertgu,trid\}@stanford.edu}, \texttt{atri@buffalo.edu}, \texttt{chrismre@cs.stanford.edu}} \\
}}\begin{document}

\maketitle

\begin{abstract}
  The low displacement rank (LDR) framework for structured matrices represents a matrix through two displacement operators and a low-rank residual. Existing use of LDR matrices in deep learning has applied fixed displacement operators encoding forms of shift invariance akin to convolutions. We introduce a class of LDR matrices with more general displacement operators, and explicitly learn over both the operators and the low-rank component. This class generalizes several previous constructions while preserving compression and efficient computation. We prove bounds on the VC dimension of multi-layer neural networks with structured weight matrices and show empirically that our compact parameterization can reduce the sample complexity of learning. 
  When replacing weight layers in fully-connected, convolutional, and recurrent neural networks for image classification and language modeling tasks, our new classes
  exceed the accuracy of existing compression approaches, and on some tasks also outperform general unstructured layers while using more than 20x fewer parameters.

\end{abstract}

\section{Introduction}

Recent years have seen a surge of interest in structured representations for deep learning, motivated by achieving compression and acceleration while maintaining generalization properties.
A popular approach for learning compact models 
involves constraining the weight matrices to exhibit
some form of dense but compressible structure and learning directly over the parameterization of
this structure.
Examples of structures explored for the weight matrices of deep learning pipelines include low-rank matrices~\cite{denil2013predicting,sainath2013low},
low-distortion projections~\cite{yang2015deep},
(block-)circulant matrices~\cite{cheng2015exploration,ding2017circnn},
Toeplitz-like matrices~\cite{lu2016learning,sindhwani2015structured},
and constructions derived from Fourier-related transforms~\cite{moczulski2015acdc}.
Though they confer significant storage and computation benefits, these constructions tend to underperform general fully-connected layers in deep learning.
This raises the question of whether broader classes of structured matrices can achieve superior downstream performance while retaining compression guarantees.

Our approach leverages the \textbf{low displacement rank} (LDR) framework (Section~\ref{sec:DR-background}), which encodes structure through two sparse \emph{displacement operators} and a low-rank residual term~\cite{kailath1979displacement}.
Previous work studying neural networks with LDR weight matrices assumes fixed displacement operators and learns only over the residual~\cite{sindhwani2015structured,zhao2017theoretical}.
The only case attempted in practice that explicitly employs the LDR framework uses fixed operators encoding shift invariance,
producing weight matrices which were found to achieve superior downstream quality than several other compression approaches~\cite{sindhwani2015structured}.
Unlike previous work, we consider learning the displacement operators \emph{jointly} with the low-rank residual.
Building upon recent progress on structured dense matrix-vector multiplication~\cite{desa2018two}, we introduce a more general class of LDR matrices and develop practical algorithms for using these matrices in deep learning architectures.
We show that the resulting class of matrices subsumes many previously used structured layers, including constructions that did not explicitly use the LDR framework~\cite{moczulski2015acdc,ding2017circnn}.
When compressing weight matrices in fully-connected, convolutional, and recurrent neural networks, we empirically demonstrate improved accuracy over existing approaches.
Furthermore, on several tasks our constructions achieve higher accuracy than general unstructured layers while using an order of magnitude fewer parameters.

To shed light on the empirical success of LDR matrices in machine learning, we draw connections to recent work on learning equivariant representations, and hope to motivate further investigations of this link.
Notably, many successful previous methods for compression apply classes of structured matrices related to convolutions~\cite{cheng2015exploration,ding2017circnn,sindhwani2015structured};
while their explicit aim is to accelerate training and reduce memory costs,
this constraint implicitly encodes a shift-invariant structure that is well-suited for image and audio data.
We observe that the LDR construction enforces a natural notion of approximate equivariance to transformations governed by the displacement operators, suggesting that, in contrast, our approach of learning the operators allows for modeling and learning more general latent structures in data that may not be precisely known in advance.

Despite their increased expressiveness, our new classes retain the storage and computational benefits of conventional structured representations. 
Our construction provides guaranteed compression (from quadratic to linear parameters) and matrix-vector multiplication algorithms that are quasi-linear in the number of parameters.
We additionally provide the first analysis of the sample complexity of learning neural networks with LDR weight matrices, which extends to low-rank, Toeplitz-like and other previously explored fixed classes of LDR matrices. More generally, our analysis applies to structured matrices whose parameters can interact multiplicatively with high degree.
We prove that the class of neural networks constructed from these matrices retains VC dimension almost linear in the number of parameters, which implies that LDR matrices with learned displacement operators are still efficiently recoverable from data.
This is consistent with our empirical results, which suggest that constraining weight layers to our broad class of LDR matrices can reduce the sample complexity of learning compared to unstructured weights.

We provide a detailed review of previous work and connections to our approach in Appendix~\ref{sec:related-work}.

\paragraph{Summary of contributions:}
\begin{itemize}
  \item We introduce a rich class of LDR matrices where the displacement operators are explicitly learned from data, and provide multiplication algorithms implemented in PyTorch (Section~\ref{sec:our-approach}).\footnote{Our code is available at \url{https://github.com/HazyResearch/structured-nets}.}
\item We prove that the VC dimension of multi-layer neural networks with LDR weight matrices, which encompasses a broad class of previously explored approaches including the low-rank and Toeplitz-like classes, is quasi-linear in the number of parameters~(Section~\ref{sec:theory}). 
\item We empirically demonstrate that our construction improves downstream quality when compressing weight layers in fully-connected, convolutional, and recurrent neural networks compared to previous compression approaches, and on some tasks can even outperform general unstructured layers (Section~\ref{sec:eval}).

\end{itemize}

\section{Background: displacement rank}
\label{sec:DR-background}

The generic term \emph{structured matrix} refers to an \(m \times n\) matrix that can be represented in much fewer than \(mn\) parameters, and admits fast operations such as matrix-vector multiplication.
The displacement rank approach represents a structured matrix \(\mathbf{M} \in \R^{m \times n}\) through \textbf{displacement operators} \((\mathbf{A}\in\R^{m \times m},\mathbf{B}\in\R^{n \times n})\) defining a linear map \(\nabla_{\vA,\vB} : \mathbf{M} \mapsto \mathbf{A}\mathbf{M}-\mathbf{M}\mathbf{B}\) on matrices, and a \textbf{residual} \(\mathbf{R}\), so that if
\begin{equation}
  \label{eq:DR}
  \mathbf{A}\mathbf{M}-\mathbf{M}\mathbf{B}=\mathbf{R}
\end{equation}
then \(\mathbf{M}\) can be manipulated solely through the compressed representation \((\mathbf{A},\mathbf{B},\mathbf{R})\).  We assume that $\vA$ and $\vB$ have disjoint eigenvalues, which guarantees that $\vM$ can be recovered from $\vA,\vB,\vR$ (c.f.\ Theorem 4.3.2, \citet{pan2012structured}). 
The rank of $\mathbf{R}$ (also denoted $\nabla_{\vA,\vB}[\vM]$) is called the \textbf{displacement rank} of $\vM$ w.r.t. $(\vA,\vB)$.\footnote{Throughout this paper, we use square matrices for simplicity, but LDR is well-defined for rectangular.}

The displacement approach was originally introduced to describe the \emph{Toeplitz-like} matrices, which are not perfectly Toeplitz but still have shift-invariant structure~\cite{kailath1979displacement}.
These matrices have LDR with respect to \emph{shift/cycle} operators.
A standard formulation uses $\vA = \vZ_1, \vB = \vZ_{-1}$, where $\vZ_f = \begin{bmatrix} 0_{1 \times (n-1)} & f\\ \vI_{n-1} & 0_{(n-1) \times 1} \end{bmatrix}$ denotes the matrix with $1$ on the subdiagonal and $f$ in the top-right corner. The Toeplitz-like matrices have previously been applied in deep learning and kernel approximation, and in several cases have performed significantly better than competing compressed approaches~\cite{sindhwani2015structured,lu2016learning,choromanski2016recycling}. Figure~\ref{fig:toeplitz} illustrates the displacement~\eqref{eq:DR} for a Toeplitz matrix, showing how the shift invariant structure of the matrix leads to a residual of rank at most 2.

\begin{figure}[th]
  \centering
  \includegraphics[width=\linewidth]{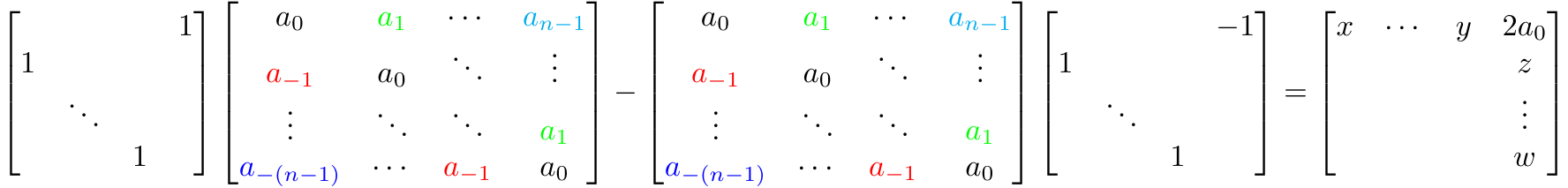}
  \caption{Displacement equation for a Toeplitz matrix with respect to shift operators $\vZ_1, \vZ_{-1}$.} 
  \label{fig:toeplitz}
\end{figure}

A few distinct classes of useful matrices are known to satisfy a displacement property: the classic types are the Toeplitz-, Hankel-, Vandermonde-, and Cauchy-like matrices (Appendix~\ref{sec:DR-proofs}, Table~\ref{table:displacements}), which are ubiquitous in other disciplines~\cite{pan2012structured}.
These classes have fixed operators consisting of diagonal or shift matrices, and LDR properties have traditionally been analyzed in detail only for these special cases.
Nonetheless, a few elegant properties hold for generic operators, stating that certain combinations of (and operations on) LDR matrices preserve low displacement rank.
We call these \emph{closure properties}, and introduce an additional block closure property that is related to convolutional filter channels (Section~\ref{sec:conv}).

We use the notation $\cD_{\vA,\vB}^r$ to refer to the matrices of displacement rank $\le r$ with respect to $(\vA,\vB)$.
\begin{proposition}
  \label{prop:closure}
  LDR matrices are closed under the following operations:

  \begin{enumerate}[label=(\alph*)]
    \item\label{prop:closure:transpose-inverse} \textbf{Transpose/Inverse} If $\vM \in \cD_{\vA,\vB}^r$, then $\vM^T \in \cD_{\vB^T,\vA^T}^r$ and $\vM^{-1} \in \cD_{\vB,\vA}^r$.

    \item\label{prop:closure:sum} \textbf{Sum} If $\vM \in \cD_{\vA,\vB}^r$ and $\vN \in \cD_{\vA,\vB}^s$, then $\vM+\vN \in \cD_{\vA,\vB}^{r+s}$.
    \item\label{prop:closure:product} \textbf{Product} If $\vM \in \cD_{\vA,\vB}^r$ and $\vN \in \cD_{\vB,\vC}^s$, then $\vM\vN \in \cD_{\vA,\vC}^{r+s}$.
    \item\label{prop:closure:block} \textbf{Block}
    Let $\vM_{ij}$ satisfy $\vM_{ij} \in \cD_{\vA_i,\vB_j}^r$ for $i = 1\dots k, j = 1 \dots \ell$.
    Then the $k \times \ell$ block matrix $(\vM_{ij})_{ij}$ has displacement rank $rk\ell$.

  \end{enumerate}
\end{proposition}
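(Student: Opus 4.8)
The plan is to derive every closure property directly from the defining displacement identity $\nabla_{\vA,\vB}[\vM] = \vA\vM - \vM\vB = \vR$, using only two elementary facts about rank: subadditivity, $\rank(\vX + \vY) \le \rank\vX + \rank\vY$, and invariance under multiplication by invertible matrices. Under this strategy parts \ref{prop:closure:transpose-inverse}--\ref{prop:closure:product} reduce to short manipulations of the displacement equation, while the block statement \ref{prop:closure:block} is the one piece that requires first identifying the correct operators and then a slightly more careful rank count.

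For the transpose I would transpose the identity to obtain $\vM^T\vA^T - \vB^T\vM^T = \vR^T$, i.e.\ $\nabla_{\vB^T,\vA^T}[\vM^T] = -\vR^T$, whose rank equals $\rank\vR \le r$. For the inverse, left- and right-multiplying the identity by $\vM^{-1}$ gives $\vM^{-1}\vA - \vB\vM^{-1} = \vM^{-1}\vR\vM^{-1}$, so $\nabla_{\vB,\vA}[\vM^{-1}] = -\vM^{-1}\vR\vM^{-1}$ has rank at most $\rank\vR$ because $\vM^{-1}$ is invertible. The sum is immediate from linearity of $\nabla_{\vA,\vB}$ and subadditivity, since $\nabla_{\vA,\vB}[\vM+\vN] = \vR + \vS$. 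For the product, the key is the telescoping substitution
\begin{align*}
  \vA(\vM\vN) - (\vM\vN)\vC
    &= (\vA\vM)\vN - \vM(\vN\vC) \\
    &= (\vM\vB + \vR)\vN - \vM(\vB\vN - \vS) = \vR\vN + \vM\vS,
\end{align*}
which uses $\vA\vM - \vM\vB = \vR$ and $\vB\vN - \vN\vC = \vS$ and whose right-hand side has rank at most $r+s$.

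The main work is part \ref{prop:closure:block}, where the natural choice is the block-diagonal operators $\vA = \diag(\vA_1,\dots,\vA_k)$ and $\vB = \diag(\vB_1,\dots,\vB_\ell)$. Because these are block-diagonal, the $(i,j)$ block of $\vA\vM - \vM\vB$ is exactly $\vA_i\vM_{ij} - \vM_{ij}\vB_j$, which has rank at most $r$ by hypothesis; hence the residual of the full block matrix is block-structured with each of its $k\ell$ blocks of rank $\le r$. The crux is then to bound the global rank: writing each block as $\vR_{ij} = \sum_{p=1}^{r} \vu_{ij}^{(p)} (\vv_{ij}^{(p)})^T$ and embedding each rank-one term (padded with zeros outside block position $(i,j)$) into the full matrix exhibits the residual as a sum of at most $rk\ell$ rank-one matrices, so its displacement rank is at most $rk\ell$. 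The only subtlety worth flagging is that one must combine the ranks across distinct blocks through this zero-padding embedding rather than assuming the blocks contribute independently; this is exactly where I expect the argument to need the most care, though the bound itself then follows from plain subadditivity.
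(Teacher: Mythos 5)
Your proof is correct and takes essentially the same route as the paper's: the identical displacement identities for transpose, inverse, sum, and product, and the same block-diagonal operators $\vA = \diag(\vA_1,\dots,\vA_k)$, $\vB = \diag(\vB_1,\dots,\vB_\ell)$ for the block case. Your zero-padding decomposition into $rk\ell$ rank-one terms simply spells out the subadditivity step that the paper leaves implicit when it observes that the residual is a sum of $k\ell$ (zero-padded) matrices of rank at most $r$.
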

Proposition~\ref{prop:closure} is proved in Appendix~\ref{sec:DR-proofs}.

\section{Learning displacement operators}
\label{sec:our-approach}

We consider two classes of new displacement operators.
These operators are fixed to be matrices with particular sparsity patterns, where the entries are treated as learnable parameters.

The first operator class consists of \textbf{subdiagonal} (plus corner) matrices: $\vA_{i+1,i}$, along with the corner $\vA_{0,n-1}$, are the only possible non-zero entries. As $\vZ_f$ is a special case matching this sparsity pattern, this class is the most direct generalization of Toeplitz-like matrices with learnable operators.

The second class of operators are \textbf{tridiagonal} (plus corner) matrices: with the exception of the outer corners $\vA_{0,n-1}$ and $\vA_{n-1,0}$, $\vA_{i,j}$ can only be non-zero if $|i-j| \le 1$. Figure~\ref{fig:operator_matrices} shows the displacement operators for the Toeplitz-like class and our more general operators.
We henceforth let \LDRSD{} and \LDRTD{} denote the classes of matrices with low displacement rank with respect to subdiagonal and tridiagonal operators, respectively. Note that \LDRTD{} contains \LDRSD{}.

\begin{figure}[th]
  \centering
  \begin{alignat*}{3}
      \begin{bmatrix}
        0 & & \cdots & 0 & f \\
        1 & 0 & & \ddots & 0 \\
        \vdots & 1 & \ddots & & \vdots \\
        0 & \ddots & \ddots & \ddots &  \\
        0 & 0 & \dots & 1 & 0
      \end{bmatrix}
      \quad
  &
      \quad
      \begin{bmatrix}
        0 & & \cdots & 0 & x_0 \\
        x_1 & 0 & & \ddots & 0 \\
        \vdots & x_2 & \ddots & & \vdots \\
        0 & \ddots & \ddots & \ddots &  \\
        0 & 0 & \dots & x_{n-1} & 0
      \end{bmatrix}
      \quad
          &
      \quad
      \begin{bmatrix}
        b_0 & a_0 & \cdots & 0 & s \\
        c_0 & b_1 & a_1 & & 0 \\
        \vdots & c_1 & \ddots & \ddots & \vdots \\
        0 & & \ddots & b_{n-1} & a_{n-2} \\
        t & 0 & \dots & c_{n-2} & b_{n-1}
      \end{bmatrix}
  \end{alignat*}
  \caption{The $\vZ_f$ operator (left), and our learnable subdiagonal (center) and tridiagonal (right) operators, corresponding to our proposed LDR-SD and LDR-TD classes.}
  \label{fig:operator_matrices}
\end{figure}

\paragraph{Expressiveness}
The matrices we introduce can model rich structure and subsume many types of linear transformations used in machine learning.
We list some of the structured matrices that have LDR with respect to tridiagonal displacement operators:
\begin{proposition}
  \label{prop:richness}
  The LDR-TD matrices contain:

  \begin{enumerate}[label=(\alph*)]
    \item Toeplitz-like matrices, which themselves include many Toeplitz and circulant variants (including standard convolutional filters - see Section~\ref{sec:conv} and Appendix~\ref{sec:DR-proofs}, Corollary~\ref{cor:toeplitz})~\cite{sindhwani2015structured,cheng2015exploration,ding2017circnn}.
     \item low-rank matrices.
     \item the other classic displacement structures: Hankel-like, Vandermonde-like, and Cauchy-like matrices.
     \item orthogonal polynomial transforms, including the Discrete Fourier and Cosine Transforms.
     \item combinations and derivatives of these classes via the closure properties (Proposition~\ref{prop:closure}), including structured classes previously used in machine learning such as ACDC~\cite{moczulski2015acdc} and block circulant layers~\cite{ding2017circnn}.
   \end{enumerate}

\end{proposition}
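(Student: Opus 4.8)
The plan is to prove each containment by exhibiting explicit tridiagonal-plus-corner operators $(\vA,\vB)$ for which the matrices in question lie in $\cD_{\vA,\vB}^{r}$ for small $r$. The unifying observation is that the tridiagonal-plus-corner sparsity pattern simultaneously subsumes (i) all diagonal matrices, (ii) the subdiagonal-plus-corner shifts $\vZ_f$ together with their transposes $\vZ_f^{T}$, and (iii) the tridiagonal Jacobi matrices that encode three-term recurrences. Since every classical displacement structure is defined relative to operators drawn from (i)--(iii), most of the listed containments reduce to checking a sparsity pattern; only one case requires a genuine computation. Throughout, I would choose the operators to have disjoint eigenvalues where recoverability matters, but the essential content is bounding the residual rank.

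For \ref{prop:richness}(a), the Toeplitz-like matrices are by definition in $\cD_{\vZ_1,\vZ_{-1}}^{2}$, and both $\vZ_1$ and $\vZ_{-1}$ are subdiagonal-plus-corner, hence tridiagonal; the circulant and convolutional specializations are handled by the appendix corollary (Corollary~\ref{cor:toeplitz}). For (b), a rank-$r$ matrix $\vM$ satisfies $\nabla_{\vI,\mathbf{0}}[\vM]=\vM$, so $\vM\in\cD_{\vI,\mathbf{0}}^{r}$ with the diagonal operators $\vI,\mathbf{0}$, which are tridiagonal and have disjoint eigenvalues. For (c), I would read off the standard operators of the remaining classical classes: Hankel-like uses a shift and its transpose, Vandermonde-like uses $\diag(x_i)$ together with a shift, and Cauchy-like uses two diagonals $\diag(x_i),\diag(y_j)$ — each pair lying in the tridiagonal-plus-corner pattern.

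The substantive step is (d). I would invoke the three-term recurrence $x\,p_j(x)=\alpha_j p_{j+1}(x)+\beta_j p_j(x)+\gamma_j p_{j-1}(x)$ satisfied by any orthogonal polynomial family and consider the evaluation matrix $M_{ij}=p_j(x_i)$. Left-multiplication by $\vA=\diag(x_i)$ rewrites each entry $x_i p_j(x_i)$ through the recurrence, which is exactly the action of a tridiagonal matrix $\mathbf{J}$ of recurrence coefficients on the columns of $\vM$; consequently $\vA\vM-\vM\mathbf{J}$ vanishes except at the boundary columns where the recurrence is truncated, leaving a residual of rank $O(1)$. Both $\vA$ and $\mathbf{J}$ are tridiagonal, yielding the containment. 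The DFT specializes the Vandermonde case of (c) with nodes $\omega^{j}$, while the DCT is the Chebyshev instance here, where the $p_{j-1}$ term forces a nonzero superdiagonal of $\mathbf{J}$ and so genuinely requires the tridiagonal rather than merely subdiagonal pattern — pinpointing why \LDRTD{} strictly enlarges \LDRSD{}.

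Finally, (e) follows from the closure properties of Proposition~\ref{prop:closure}: ACDC is a product of diagonal matrices with (inverse) cosine/Fourier transforms, obtained by chaining the product closure, and block-circulant layers are block matrices of circulant (Toeplitz-like) blocks, handled by the block closure. I expect the main obstacle to be the boundary-term bookkeeping in (d) — confirming the exact residual rank and fixing the correct operator convention (which side, and whether to transpose $\mathbf{J}$). A secondary subtlety lies in (e): the product closure requires the right operator of one factor to equal the left operator of the next, so I would need to assign each diagonal factor a compatible tridiagonal operator while keeping the operators' eigenvalues disjoint for recoverability.
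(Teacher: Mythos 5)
Your proposal is correct and follows essentially the same route as the paper's proof in Appendix~\ref{sec:expressiveness-proof}: the classic types by inspecting their standard operators, low-rank via $(\vI,\mathbf{0})$, orthogonal polynomial transforms via the three-term recurrence producing a tridiagonal/diagonal operator pair with a rank-one boundary residual (the paper uses the transposed convention $\vM_{ij}=p_i(\lambda_j)$, so its residual sits in the last row rather than your last column, equivalent by the transpose closure), and part (e) via the closure properties with block-circulant handled by Corollary~\ref{cor:toeplitz}. The one step you flag but leave unresolved---assigning the outer diagonal factor $\vA$ of ACDC an operator compatible with the product chain---is settled in the paper by conjugation: with $\vC \in \cD_{\vT,\Lambda}^{1}$ one takes $\vS = \vA\vT\vA^{-1}$, which is again tridiagonal because conjugating by a diagonal matrix preserves the sparsity pattern, giving $\vA \in \cD_{\vS,\vT}^{0}$ and hence $\vA\vC\vD\vC^{-1} \in \cD_{\vS,\vT}^{2}$.
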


These reductions are stated more formally and proved in Appendix~\ref{sec:expressiveness-proof}. We also include a diagram of the structured matrix classes included by the proposed LDR-TD class in Figure~\ref{fig:expressivity} in Appendix~\ref{sec:expressiveness-proof}.

\paragraph{Our parameterization}
Given the parameters $\vA,\vB,\vR$, the operation that must ultimately be performed is matrix-vector multiplication by $\vM = \nabla_{\vA,\vB}^{-1}[\vR]$.
Several schemes for explicitly reconstructing $\vM$ from its displacement parameters are known for specific cases~\cite{pan2003inversion,simoncini2016computational}, but do not always apply to our general operators.
Instead, we use $\vA,\vB,\vR$ to implicitly construct a slightly different matrix with at most double the displacement rank, which is simpler to work with.
\begin{proposition}
  \label{prop:reconstruction}
  Let $\cK(\vA,\vv)$ denote the $n \times n$ \emph{Krylov matrix}, defined to have $i$-th column $\vA^i\vv$.
  For any vectors $\vg_1, \dots, \vg_r, \vh_1, \dots, \vh_r \in \R^n$, then the matrix
  \begin{equation}
    \label{eq:reconstruction}
    \sum_{i=1}^r \cK(\vA,\vg_i)\cK(\vB^T,\vh_i)^T
  \end{equation}
  has displacement rank at most $2r$ with respect to $\vA^{-1},\vB$.
\end{proposition}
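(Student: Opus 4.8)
The plan is to avoid reconstructing $\vM$ explicitly and instead build the displacement rank of the sum out of the displacement ranks of its individual building blocks, invoking the closure properties of Proposition~\ref{prop:closure}. The only genuinely new ingredient is a displacement identity for a \emph{single} Krylov matrix; once that is in hand, the transpose, product, and sum rules assemble the claim mechanically. Note that $\vA$ is assumed invertible, since $\vA^{-1}$ appears in the target operator pair.

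First I would establish the base case: a single Krylov matrix $\cK(\vA,\vg)$ has displacement rank at most $1$ with respect to $(\vA^{-1}, \vS)$, where $\vS$ denotes the shift matrix with ones on the superdiagonal (the transpose of the nilpotent subdiagonal shift). Writing $\cK(\vA,\vg) = [\vg, \vA\vg, \dots, \vA^{n-1}\vg]$ column by column, left-multiplication by $\vA^{-1}$ decrements every exponent, giving $[\vA^{-1}\vg, \vg, \vA\vg, \dots, \vA^{n-2}\vg]$, while right-multiplication by $\vS$ shifts the columns one place to the right and zeros the first, giving $[\vzero, \vg, \vA\vg, \dots, \vA^{n-2}\vg]$. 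Subtracting, every interior column cancels and
\begin{equation}
  \vA^{-1}\cK(\vA,\vg) - \cK(\vA,\vg)\vS = \vA^{-1}\vg\, \ve_0^T,
\end{equation}
which has rank at most $1$, so $\cK(\vA,\vg) \in \cD_{\vA^{-1},\vS}^1$. The same bookkeeping with $\vA$ in place of $\vA^{-1}$ and the subdiagonal shift $\vS^T$ in place of $\vS$ shows $\cK(\vB^T,\vh) \in \cD_{\vB^T,\, \vS^T}^1$; applying the transpose rule (Proposition~\ref{prop:closure}\ref{prop:closure:transpose-inverse}) then yields $\cK(\vB^T,\vh)^T \in \cD_{\vS,\vB}^1$.

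With both factors now expressed relative to the common intermediate operator $\vS$, the product rule (Proposition~\ref{prop:closure}\ref{prop:closure:product}) gives, for each $i$,
\[
  \cK(\vA,\vg_i)\,\cK(\vB^T,\vh_i)^T \in \cD_{\vA^{-1},\vB}^{2},
\]
since the shared operator $\vS$ matches between the two factors and the ranks add as $1+1$. Finally the sum rule (Proposition~\ref{prop:closure}\ref{prop:closure:sum}), applied across $i = 1, \dots, r$, shows that $\sum_{i=1}^r \cK(\vA,\vg_i)\cK(\vB^T,\vh_i)^T$ lies in $\cD_{\vA^{-1},\vB}^{2r}$, i.e.\ has displacement rank at most $2r$ with respect to $(\vA^{-1}, \vB)$, as claimed.

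The main obstacle is the base-case identity, which is the only place the specific algebra of Krylov matrices enters: care is needed to pair the shift operator with the correct side so that $\cK(\vA,\vg)$ and $\cK(\vB^T,\vh)^T$ share \emph{exactly one} operator, letting the product rule telescope. Everything after that is a routine application of the closure properties, and the factor of $2$ (rather than $1$) per summand is precisely the cost of bridging the two Krylov factors through the auxiliary shift $\vS$ while targeting the operator pair $(\vA^{-1},\vB)$.
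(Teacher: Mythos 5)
Your proof is correct and follows essentially the same route as the paper's: establish that $\cK(\vA,\vg) \in \cD_{\vA^{-1},\vZ_0^T}^1$ and $\cK(\vB^T,\vh)^T \in \cD_{\vZ_0^T{}^T,\vB}^1$ via the shift operators, then chain the transpose, product, and sum closure rules of Proposition~\ref{prop:closure}. You even make the base-case displacement computation fully explicit (the paper only asserts the residual is supported on one column), and your target pair $(\vA^{-1},\vB)$ correctly matches the proposition statement where the paper's proof contains a small typo writing $\cD_{\vA,\vB}^2$.
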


Thus our representation stores the parameters $\vA,\vB,\vG,\vH$, where $\vA,\vB$ are either subdiagonal or tridiagonal operators (containing $n$ or $3n$ parameters), and $\vG,\vH \in \R^{n \times r}$.
These parameters implicitly define the matrix~\eqref{eq:reconstruction}, which is the LDR weight layer we use.

\paragraph{Algorithms for LDR-SD}
Generic and near-linear time algorithms for matrix-vector multiplication by LDR matrices with even more general operators, including both the LDR-TD and LDR-SD classes, were recently shown to exist~\cite{desa2018two}.
However, complete algorithms were not provided, as they relied on theoretical results such as the transposition principle~\cite{transposition} that only imply the existence of algorithms.
Additionally, the recursive polynomial-based algorithms are difficult to implement efficiently. 
For LDR-SD, we provide explicit and complete near-linear time algorithms for multiplication by~\eqref{eq:reconstruction}, as well as substantially simplify them to be useful in practical settings and implementable with standard library operations.
We empirically compare the efficiency of our implementation and unstructured matrix-vector multiplication in Figure~\ref{fig:speed} and Table~\ref{table:speed} in Appendix~\ref{sec:additional-results}, showing that LDR-SD accelerates inference by 3.34-46.06x for $n \geq 4096$.
We also show results for the low-rank and Toeplitz-like classes, which have a lower computational cost.
For LDR-TD, we explicitly construct the $\cK(\vA,\vg_i)$ and $\cK(\vB^T,\vh_i)$ matrices for $i=1,...,r$ from Proposition~\ref{prop:reconstruction} and then apply the standard $O(n^2)$ matrix-vector multiplication algorithm.
Efficient implementations of near-linear time algorithms for LDR-TD are an interesting area of future work.
\begin{theorem}
  \label{thm:algo}
  Define the simultaneous computation of $k$ Fast Fourier Transforms (FFT), each with size $m$, to be a \emph{batched FFT} with total size $km$.

  Consider any subdiagonal matrix $\vA \in \R^{n \times n}$ and vectors $\vg, \vh \in \R^n$.
  Then $\cK(\vA,\vg)^T$ or $\cK(\vA,\vg)$ can be multiplied by any vector $\vx$ by computing $8\log_2(n)$ batched FFTs, each of total size $2n$.
  The total number of computations is $O(n \log^2 n)$.
\end{theorem}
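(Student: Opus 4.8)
The plan is to reduce both $\cK(\vA,\vg)\vx$ and $\cK(\vA,\vg)^T\vx$ to FFT-based convolutions through a balanced divide-and-conquer recursion on the index set, exploiting the fact that a subdiagonal operator acts as a weighted cyclic shift. Writing $p_\vx(z)=\sum_{i=0}^{n-1}x_i z^i$, the forward product is $\cK(\vA,\vg)\vx=\sum_i x_i\vA^i\vg=p_\vx(\vA)\vg$, while the $i$-th coordinate of the transpose product is $(\vA^i\vg)^T\vx=\vg^T(\vA^T)^i\vx$, the analogous quantity for the superdiagonal operator $\vA^T$. As the recursion will reveal, these two are coupled and naturally computed together, so I would design a single procedure returning both and apply it to $\vA$ and to $\vA^T$.

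For the recursion, split $\{0,\dots,n-1\}$ into two halves and write $\vA$ in the corresponding block form. Ignoring the corner for now, a strictly subdiagonal $\vA$ is block lower triangular with half-size subdiagonal diagonal blocks $\vA_1,\vA_2$ and a single nonzero (hence rank-one) lower-left coupling entry. Since the top block evolves independently, the top half of $\cK(\vA,\vg)\vx$ equals the half-size product $\cK(\vA_1,\vg^{(1)})\vx_0$, where $\vx_0$ is the first half of $\vx$; expanding $\vA^i\vg$ and using that the coupling block is rank one, the bottom half reduces to
\[
  (\cK(\vA,\vg)\vx)_{\mathrm{bot}} = \cK(\vA_2,\vg^{(2)})\vx_0 + d\,\cK(\vA_2,\ve_0)\,\mathbf{e},
\]
where $d$ is the coupling scalar and $\mathbf{e}$ is the correlation of $\vx$ with a boundary sequence $\mathbf{c}$ whose entries $c_j=(\vA_1^j\vg^{(1)})_{n/2-1}$ form the last row of $\cK(\vA_1,\vg^{(1)})$ --- i.e. a transpose-Krylov quantity. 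Thus each level produces half-size forward products, the transpose-Krylov boundary sequence, and one length-$O(n)$ correlation (one FFT-based product), giving $T(n)=2T(n/2)+O(n\log n)$ over $\log_2 n$ levels. This is exactly why the forward and transpose recursions intertwine and are solved jointly.

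The FFT accounting then follows the usual subproduct-tree pattern: at depth $\ell$ there are $2^\ell$ subproblems of size $n/2^\ell$, and batching their coupling convolutions into one \emph{batched} FFT of total size $O(n)$ costs $O(n\log n)$ work; over $\log_2 n$ levels this is a constant number of batched FFTs per level (bounded by the stated $8$, once one counts both directions together with the correlation and its transpose), each of total size $2n$, for $O(n\log^2 n)$ operations overall.

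The main obstacle is the corner entry $f$ at position $(0,n-1)$, which couples the last index back to the first and so destroys the clean block-lower-triangular structure the recursion relies on --- with the corner present $\vA$ is a genuinely cyclic weighted shift. I would handle it by writing $\vA=\vA_{\mathrm{sd}}+f\,\ve_0\ve_{n-1}^T$ and treating the corner as one extra rank-one coupling, tracking the single additional convolution it contributes so that the FFT count stays $O(\log n)$. The delicate points are verifying that the boundary sequences $\mathbf{c}$ (and their transpose analogues) are computed within the same FFT budget --- rather than by explicitly forming the non-sparse powers $\vA^i$ --- so that the recursion genuinely closes, and that this cyclic rank-one correction does not inflate the per-level cost. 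A secondary concern, and the reason a closed-form diagonalization to an $f$-circulant is deliberately avoided, is keeping the algorithm fully explicit and numerically stable; the recursive FFT formulation, not the diagonalization shortcut, is the right vehicle for that.
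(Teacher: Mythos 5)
Your transpose-direction argument is essentially the paper's proof. The paper likewise splits the strictly subdiagonal $\vA$ into half-size blocks with a single rank-one coupling entry, encodes the sequence $(\vx^T \vA^i \vg)_i$ as the polynomial $\vx^T(\vI-\vA X)^{-1}\vg$ (valid precisely because $\vA^n = 0$), and closes the recursion by tracking the $2\times 2$ matrix of polynomials $\begin{bmatrix} \vu & \ve_n \end{bmatrix}^T (\vI-\vA X)^{-1} \begin{bmatrix} \vv & \ve_1 \end{bmatrix}$ --- your boundary sequences $\mathbf{c}$ and their ``transpose analogues'' are exactly its off-diagonal and corner entries --- doing the level-$d$ polynomial multiplications as batched FFTs ($4$ FFTs plus $4$ inverse FFTs per level after the paper's optimizations, giving the stated count of $8$). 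One side remark: your worry about the corner entry at position $(0,n-1)$ is moot for this theorem. The learned corner belongs to the operator class of Section~\ref{sec:our-approach}, but the statement and the paper's algorithm concern strictly subdiagonal $\vA$; nilpotency is what makes the resolvent identity exact, and the paper claims nothing for the cyclic case.

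The genuine gap is in your forward (non-transpose) direction, where you depart from the paper. Your displayed recursion spawns \emph{three} half-size forward products --- $\cK(\vA_1,\vg^{(1)})\vx_0$ for the top half, and both $\cK(\vA_2,\vg^{(2)})\vx_0$ and $\cK(\vA_2,\ve_0)\mathbf{e}$ for the bottom --- so the recurrence is $T(n) = 3T(n/2) + O(n \log n)$, which solves to $O(n^{\log_2 3})$, not the claimed $T(n) = 2T(n/2) + O(n\log n)$; on top of this, each node needs its boundary sequence $\mathbf{c}$, and you explicitly leave open how all of these are obtained within the FFT budget, so the recursion does not close as written. Two repairs exist. (i) Close the recursion over the generalized problem $\cK(\vA,\vg)\vx + \cK(\vA,\ve_0)\vy$: since $\ve_0$ restricts to $\ve_0$ on the top block and to $\vzero$ on the bottom block, and every coupling correction has the same generator $\ve_0$ so the corrections merge by linearity in the second input, both children are again of this rank-two form, restoring branching factor $2$; the boundary polynomials for \emph{all} nodes are then precomputed in a single bottom-up pass --- which is exactly the paper's Algorithm~\ref{alg:KT-recursive}, whose nodes already produce $\ve^T(\vI-\vA X)^{-1}\vv$ and $\ve^T(\vI-\vA X)^{-1}\ve_1$ as byproducts in $O(n\log^2 n)$ total. (ii) Do what the paper actually does: since the transpose algorithm computes the linear map $\vu \mapsto \cK(\vA,\vg)^T\vu$ in $O(n\log^2 n)$ operations, invoke the transposition principle, which the paper makes constructive via reverse-mode automatic differentiation (its Proposition~\ref{prop:transposition}: $\vM^T\vy = \frac{\partial}{\partial \vx}(\vy^T\vM\vx)$), and which was used to hand-derive the explicit non-transpose routine. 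Until one of these is in place, your proposal establishes the theorem's FFT count and $O(n\log^2 n)$ bound only for $\cK(\vA,\vg)^T\vx$, not for $\cK(\vA,\vg)\vx$.
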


These algorithms are also automatically differentiable, which we use to compute the gradients when learning.
More complete descriptions of these algorithms are presented in Appendix~\ref{sec:DR-proofs}.

\section{Theoretical properties of structured matrices}
\label{sec:theory}

\paragraph{Complexity of LDR neural networks}

The matrices we use~\eqref{eq:reconstruction} are unusual in that the parameters interact multiplicatively (namely in $\vA^i,\vB^i$) to implicitly define the actual layer.
In contrast, fully-connected layers are linear and other structured layers, such as Fastfood and ACDC~\cite{pmlr-v28-le13,yang2015deep,moczulski2015acdc}, are constant degree in their parameters.
However, we can prove that this does not significantly change the learnability of our classes:
\begin{theorem}
  \label{thm:vc}
  Let $\cF$ denote the class of neural networks with $L$ LDR layers, $W$ total parameters, and piecewise linear activations.
  Let $\sign \cF$ denote the corresponding classification functions, i.e. $\{x \mapsto \sign f(x) : f \in \cF\}$.
  The VC dimension of this class is
  \begin{equation*}
    \VCdim(\sign \mathcal{F}) = O(L W \log W).
  \end{equation*}
\end{theorem}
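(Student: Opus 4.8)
The plan is to bound the VC dimension by counting, for a fixed set of $m$ inputs, the number of distinct sign patterns (equivalently, classification labelings) the network can produce as its $W$ parameters range over $\R^W$. This is the standard route for piecewise-polynomial function classes going back to Goldberg and Jerrum, refined for piecewise-linear networks by Bartlett, Harvey, Liaw, and Mehrabian; the only genuinely new ingredient here is controlling the way the LDR parameters enter the computed function. I would work directly with the explicit polynomial parameterization of Proposition~\ref{prop:reconstruction}: the weight layer is the matrix~\eqref{eq:reconstruction}, $\sum_{i=1}^r \cK(\vA,\vg_i)\cK(\vB^T,\vh_i)^T$, which involves only nonnegative powers $\vA^j,(\vB^T)^j$ and is therefore an honest polynomial in the layer parameters $\vA,\vB,\vG,\vH$ (no matrix inverse appears). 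Since the $j$-th column of a Krylov matrix $\cK(\vA,\vg)$ is $\vA^j\vg$, each entry of the weight matrix is a polynomial of total degree $O(n)$ in that layer's parameters.

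The core estimate is a degree bound for the pre-activations, proved by induction over the $L$ layers. Fix a region of parameter space on which the activation pattern of layers $1,\dots,k-1$ is constant; there each piecewise-linear unit acts as a fixed affine map, so $h_k = D_k \vM_k h_{k-1}$ for a fixed diagonal $D_k$. Because the parameters of distinct layers are disjoint variables, total degree adds across the composition: if the coordinates of $h_{k-1}$ are polynomials of total degree $d_{k-1}$ in the first $W_{\le k-1}$ parameters, then multiplying by $\vM_k$ (degree $O(n)$ in the layer-$k$ parameters) gives $d_k = d_{k-1} + O(n)$, hence $d_k = O(kn)$ and $d_L = O(Ln)$. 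The high per-layer degree $O(n)$ --- caused by the Krylov powers $\vA^j$ --- is exactly the feature that distinguishes these layers from constant-degree structured layers, and is the point one must track carefully; crucially it will enter the final bound only logarithmically.

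Next I would run the layer-by-layer region-refinement argument. Let $W_{\le k}$ denote the number of parameters in layers $1,\dots,k$ and $n_k$ the width of layer $k$. Processing layer $k$ within a region where earlier activation patterns are fixed, the sign of each of the (at most) $m n_k (p-1)$ breakpoint polynomials --- where $p$ bounds the number of linear pieces of the activation and the $m$ comes from the inputs --- determines the new activation pattern; each such polynomial has degree $\le d_k = O(kn)$ in $W_{\le k}$ variables. By Warren's bound on the number of sign patterns of a system of polynomials, each existing region splits into at most $\left(\frac{C\, m\, n_k\, p\, d_k}{W_{\le k}}\right)^{W_{\le k}}$ sub-regions. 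Multiplying over the $L$ layers (and the output unit) bounds the total number of labelings of the $m$ points by $\prod_{k=1}^{L}\left(\frac{C\, m\, n_k\, p\, d_k}{W_{\le k}}\right)^{W_{\le k}}$. Taking logarithms,
\begin{equation*}
  \log_2(\#\text{labelings}) \;\le\; \sum_{k=1}^{L} W_{\le k}\,\log_2\!\left(\frac{C\, m\, n_k\, p\, d_k}{W_{\le k}}\right) \;\le\; \Big(\textstyle\sum_{k=1}^{L} W_{\le k}\Big)\cdot O(\log(mW)),
\end{equation*}
where I used $d_k = O(Ln)$, $n_k \le W$, and $L \le W$, so that every argument of the logarithm is polynomial in $m$ and $W$.

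Finally, $\sum_{k=1}^{L} W_{\le k} \le L W$ supplies the factor $L$ (this summation of cumulative parameter counts, rather than any growth of degree, is the origin of the extra $L$). If the $m$ points are shattered then $2^m \le \#\text{labelings}$, so $m \le c\,LW\,\log_2(mW)$; the standard inequality ``$m \le a\log_2(mb)$ implies $m = O(a\log_2(ab))$'' then yields $m = O(LW\log(LW^2)) = O(LW\log W)$, which is the claimed bound. The main obstacle is the high-degree, multiplicative dependence of the layer on its parameters; the two things to get right are (i) that using the polynomial form~\eqref{eq:reconstruction} keeps everything polynomial of degree only $O(n)$ per layer, and (ii) that this degree survives only inside the logarithm, so it is harmlessly absorbed into the $O(\log W)$ factor, leaving the $L$ to come purely from the per-layer region refinement.
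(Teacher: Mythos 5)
Your proposal is correct and follows essentially the same route as the paper's proof in Appendix~\ref{sec:vc_dim}: a polynomial degree bound of $O(n)$ per layer on the entries of the Krylov-product parameterization~\eqref{eq:reconstruction} (the paper's Lemma~\ref{lem:degree_bound_ldr}), an inductive degree bound across layers within regions of fixed activation pattern (Lemma~\ref{lem:degree_bound_nn} with $k=1$), layer-by-layer refinement of the parameter space via Warren's lemma, and the standard Bartlett--Harvey--Liaw--Mehrabian inversion to extract the VC bound (the paper's Lemma~\ref{lem:growthtovc}). The only difference is that the paper proves the more general statement for piecewise-polynomial activations of degree $k$ (where degrees multiply rather than add, yielding the extra $\bar{L}LW\log k$ term) and specializes to $k=1$, whereas you work directly in the piecewise-linear case -- which suffices for the theorem as stated.
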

Theorem~\ref{thm:vc} matches the standard bound for unconstrained weight matrices~\cite{bartlett1999almost,bartlett2017nearly}.
This immediately implies a standard PAC-learnable guarantee~\cite{vapnik1998statistical}.
Theorem~\ref{thm:vc} holds for even more general activations and matrices that for example include the broad classes of~\cite{desa2018two}.
The proof is in Appendix~\ref{sec:vc_dim},
and we empirically validate the generalization and sample complexity properties of our class in Section~\ref{sec:generalization}.

\paragraph{Displacement rank and equivariance}
\label{sec:equivariance}
We observe that displacement rank is related to a line of work outside the resource-constrained learning community, specifically on building \textbf{equivariant} (also called covariant in some contexts~\cite{bronstein2017geometric,marcos2017rotation}) feature representations that transform in predictable ways when the input is transformed. 
An equivariant feature map $\Phi$ satisfies
\begin{equation}
  \label{eq:equivariance-simple}
  \Phi(B(x)) = A(\Phi(x))
\end{equation}
for transformations $A,B$ (invariance is the special case when $A$ is the identity)~\cite{lenc2015understanding,dieleman2016exploiting,schmidt2012learning}.
This means that perturbing the input by a transformation $B$ before passing through the map $\Phi$ is equivalent to first finding the features $\Phi$ then transforming by $A$.

Intuitively, LDR matrices are a suitable choice for modeling \emph{approximately equivariant} linear maps, since the residual $\vA\mathbf{\Phi}-\mathbf{\Phi}\vB$ of~\eqref{eq:equivariance-simple} has low complexity.
Furthermore, approximately equivariant maps should retain the compositional properties of equivariance, which LDR satisfies via Proposition~\ref{prop:closure}. 
For example, Proposition~\ref{prop:closure}\ref{prop:closure:product} formalizes the notion that the composition of two approximately equivariant maps is still approximately equivariant. Using this intuition, the displacement representation~\eqref{eq:DR} of a matrix decomposes into two parts: the operators \({\mathbf{A},\mathbf{B}}\) define transformations to which the model is approximately equivariant, and the low complexity residual \(\mathbf{R}\) controls standard model capacity.

Equivariance has been used in several ways in the context of machine learning.
One formulation, used for example to model ego-motions, supposes that~\eqref{eq:equivariance-simple} holds only approximately, and uses a fixed transformation $B$ along with data for~\eqref{eq:equivariance-simple} to learn an appropriate $A$~\cite{agrawal2015learning,lenc2015understanding}.
Another line of work uses the representation theory formalization of equivariant maps~\cite{cohen2016group,DBLP:conf/icml/KondorT18}.
We describe this formulation in more detail and show how LDR satisfies this definition as well in Appendix~\ref{sec:group_rep}, Proposition~\ref{prop:equivariance}.
In contrast to previous settings, which fix one or both of $A,B$, our formulation stipulates that $\Phi$ can be uniquely determined from $A$, $B$, and learns the latter as part of an end-to-end model.
In Section~\ref{sec:visualization} we include a visual example of latent structure that our displacement operators learn, where they recover centering information about objects from a 2D image dataset.

\section{Empirical evaluation}
\label{sec:eval}

\paragraph{Overview}
In Section~\ref{sec:FC} we consider a standard setting of compressing
  a single hidden layer (SHL) neural network and the fully-connected (FC) layer of a
  CNN for image classification tasks. Following previous
  work~\cite{chen2015compressing,sindhwani2015structured}, we test on
  two challenging MNIST variants~\cite{larochelle2007empirical}, and include two
  additional datasets with more realistic objects (CIFAR-10~\cite{krizhevsky2009learning} and NORB~\cite{lecun2004learning}).
  Since SHL models take a single channel as input, we converted CIFAR-10 to grayscale for this task.
  Our classes and the structured baselines are tested across different parameter budgets in order to show tradeoffs between compression and accuracy. 
    As shown in Table~\ref{table:images}, in the SHL model, our methods consistently have higher test accuracy than baselines for compressed training and inference, by 3.14, 2.70, 3.55, and
 3.37 accuracy points on MNIST-bg-rot, MNIST-noise, CIFAR-10, and NORB respectively. In the CNN model, as shown in Table~\ref{table:images} in Appendix~\ref{sec:additional-results}, we found improvements of 5.56, 0.95, and 1.98 accuracy points over baselines on MNIST-bg-rot, MNIST-noise, and NORB respectively.
Additionally, to explore whether learning the displacement operators can facilitate adaptation to other domains, we replace the input-hidden weights in an LSTM for a language modeling task,
  and show improvements of 0.81-30.47 perplexity points compared to baselines at several parameter budgets.
  
  In addition to experiments on replacing fully-connected layers, in Section~\ref{sec:conv} we also replace the convolutional layer of a
  simple CNN while preserving performance within 1.05 accuracy points on CIFAR-10.
  In Section~\ref{sec:generalization}, we consider the effect of a higher parameter budget. By increasing the rank to just $16$, the \LDRSD{} class meets or exceeds the accuracy of the unstructured FC layer in all datasets we tested on, for both SHL and CNN.\footnote{In addition to the results reported in Table~\ref{table:images}, Figure~\ref{fig:rank-vs-accuracy} and Table~\ref{table:images-extended-cnn} in Appendix~\ref{sec:additional-results}, we also found that at rank 16 the \LDRSD{} class on the CNN architecture achieved test accuracies of 68.48\% and 75.45\% on CIFAR-10 and NORB respectively.} 
Appendix~\ref{sec:exp-details} includes more experimental details and protocols.
Our PyTorch code is publicly available at \url{github.com/HazyResearch/structured-nets}.

\subsection{Compressing fully-connected layers}
\label{sec:FC}

\paragraph{Image classification}

\citet{sindhwani2015structured} showed that for a fixed parameter budget, the Toeplitz-like class significantly outperforms several other compression approaches, including Random Edge Removal~\cite{cirecsan2011high}, Low Rank Decomposition~\cite{denil2013predicting}, Dark Knowledge~\cite{hinton2015distilling}, HashedNets~\cite{chen2015compressing}, and HashedNets with Dark Knowledge.
Following previous experimental settings~\cite{chen2015compressing,sindhwani2015structured}, Table~\ref{table:images} compares our proposed classes to several baselines using dense structured matrices to compress the hidden layer of a single hidden layer neural network. In addition to Toeplitz-like, we implement and compare to other classic LDR types, Hankel-like and Vandermonde-like, which were previously indicated as an unexplored possibility~\citep{sindhwani2015structured,zhao2017theoretical}. 
We also show results when compressing the FC layer of a 7-layer CNN based on LeNet in Appendix~\ref{sec:additional-results}, Table~\ref{table:images-extended-cnn}. 
In Appendix~\ref{sec:additional-results}, we show comparisons to additional baselines at multiple budgets, including network pruning~\cite{han2015learning} and a baseline used in~\cite{chen2015compressing}, in which the number of hidden units is adjusted to meet the parameter budget.

\begin{table}[ht!]
  \centering
  \caption{Test accuracy when replacing the hidden layer with structured classes. Where applicable, rank ($r$) is in parentheses, and the number of parameters in the architecture is in italics below each method. Comparisons to previously unexplored classic LDR types as well as additional structured baselines are included, with the ranks adjusted to match the parameter count of LDR-TD where possible. The Fastfood~\cite{yang2015deep} and Circulant~\cite{cheng2015exploration} methods do not have rank parameters, and the parameter count for these methods cannot be exactly controlled. Additional results when replacing the FC layer of a CNN are in Appendix~\ref{sec:additional-results}. Details for all experiments are in Appendix~\ref{sec:exp-details}.}
  \begin{tabular}{@{}lllll@{}}
    \toprule
    \textbf{Method} &  \textbf{MNIST-bg-rot} & \textbf{MNIST-noise} & \textbf{CIFAR-10} & \textbf{NORB}\\ \midrule
       Unstructured     & 44.08 & 65.15 & 46.03 &  59.83                 \\
                 & \textit{\textcolor{gray}{622506}} & \textit{\textcolor{gray}{622506}} & \textit{\textcolor{gray}{1058826}}   & \textit{\textcolor{gray}{1054726}}                       \\ 
        \hhline{=====}
   LDR-TD ($r=1$)  &    \textbf{45.81} & \textbf{78.45} & \textbf{45.33} & \textbf{62.75}                     \\
           & \textit{\textcolor{gray}{14122}} & \textit{\textcolor{gray}{14122}} & \textit{\textcolor{gray}{18442}}   & \textit{\textcolor{gray}{14342}}                      \\ 

         \midrule
     Toeplitz-like ~\cite{sindhwani2015structured} ($r=4$)         & 42.67 & 75.75 & 41.78 &  59.38                    \\ 
                 & \textit{\textcolor{gray}{14122}} & \textit{\textcolor{gray}{14122}} & \textit{\textcolor{gray}{18442}}   & \textit{\textcolor{gray}{14342}}                      \\ 
          \midrule
       Hankel-like ($r=4$)     &     42.23 & 73.65 & 41.40 & 60.09                \\
                  & \textit{\textcolor{gray}{14122}} & \textit{\textcolor{gray}{14122}} & \textit{\textcolor{gray}{18442}}   & \textit{\textcolor{gray}{14342}}                      \\ 
           \midrule
   Vandermonde-like ($r=4$)      & 37.14 & 59.80 & 33.93 & 48.98                   \\
                  & \textit{\textcolor{gray}{14122}} & \textit{\textcolor{gray}{14122}} & \textit{\textcolor{gray}{18442}}   & \textit{\textcolor{gray}{14342}}                      \\ 
            \midrule
        Low-rank ~\cite{denil2013predicting} ($r=4$)     & 35.67 & 52.25 & 32.28 & 43.66                   \\
                   & \textit{\textcolor{gray}{14122}} & \textit{\textcolor{gray}{14122}} & \textit{\textcolor{gray}{18442}}   & \textit{\textcolor{gray}{14342}}                      \\ 
 	\midrule
      Fastfood  ~\cite{yang2015deep}   & 38.13 & 63.55 & 39.64 & 59.02                   \\
                   & \textit{\textcolor{gray}{10202}} & \textit{\textcolor{gray}{10202}} & \textit{\textcolor{gray}{13322}}   & \textit{\textcolor{gray}{9222}}                      \\
          \midrule
    Circulant  ~\cite{cheng2015exploration}  & 34.46 & 65.35 & 34.28 & 46.45                      \\
                   & \textit{\textcolor{gray}{8634}} & \textit{\textcolor{gray}{8634}} & \textit{\textcolor{gray}{11274}}   & \textit{\textcolor{gray}{7174}}                      \\
    \bottomrule
  \end{tabular}
  \label{table:images}
\end{table}

At rank one (the most compressed setting), our classes with learned operators achieve higher accuracy than the fixed operator classes, and on the MNIST-bg-rot, MNIST-noise, and NORB datasets even improve on FC layers of the same dimensions, by 1.73, 13.30, and 2.92 accuracy points respectively on the SHL task, as shown in Table~\ref{table:images}. On the CNN task, our classes improve upon unstructured fully-connected layers by 0.85 and 2.25 accuracy points on the MNIST-bg-rot and MNIST-noise datasets (shown in Table~\ref{table:images-extended-cnn} in Appendix~\ref{sec:additional-results}). As noted above, at higher ranks our classes meet or improve upon the accuracy of FC layers on all datasets in both the SHL and CNN architectures.

Additionally, in Figure~\ref{fig:rank-vs-accuracy} we evaluate the performance of LDR-SD at higher ranks. Note that the ratio of parameters between LDR-SD and the Toeplitz-like or low-rank is $\frac{r+1}{r}$, which becomes negligible at higher ranks.
Figure~\ref{fig:rank-vs-accuracy} shows that at just rank $16$, the LDR-SD class meets or exceeds the performance of the FC layer on all four datasets, by 5.87, 15.05, 0.74, and 6.86 accuracy points on MNIST-bg-rot, MNIST-noise, CIFAR-10, and NORB respectively, while still maintaining at least 20x fewer parameters.

Of particular note is the poor performance of low-rank matrices.
As mentioned in Section~\ref{sec:DR-background},
every fixed-operator class has the same parameterization (a low-rank matrix).
We hypothesize that the main contribution to their marked performance difference is the effect of the learned displacement operator modeling latent invariances in the data, and that the improvement in the displacement rank classes---from low-rank to Toeplitz-like to our learned operators---comes from more accurate representations of these invariances.
As shown in Figure~\ref{fig:rank-vs-accuracy}, broadening the operator class (from Toeplitz-like at $r=1$ to LDR-SD at $r=1$) is consistently a more effective use of parameters than increasing the displacement rank (from Toeplitz-like at $r=1$ to $r=2$). Note that LDR-SD ($r=1$) and Toeplitz-like ($r=2$) have the same parameter count.

\begin{figure}[!t]
  \centering
    \includegraphics[width=\textwidth]{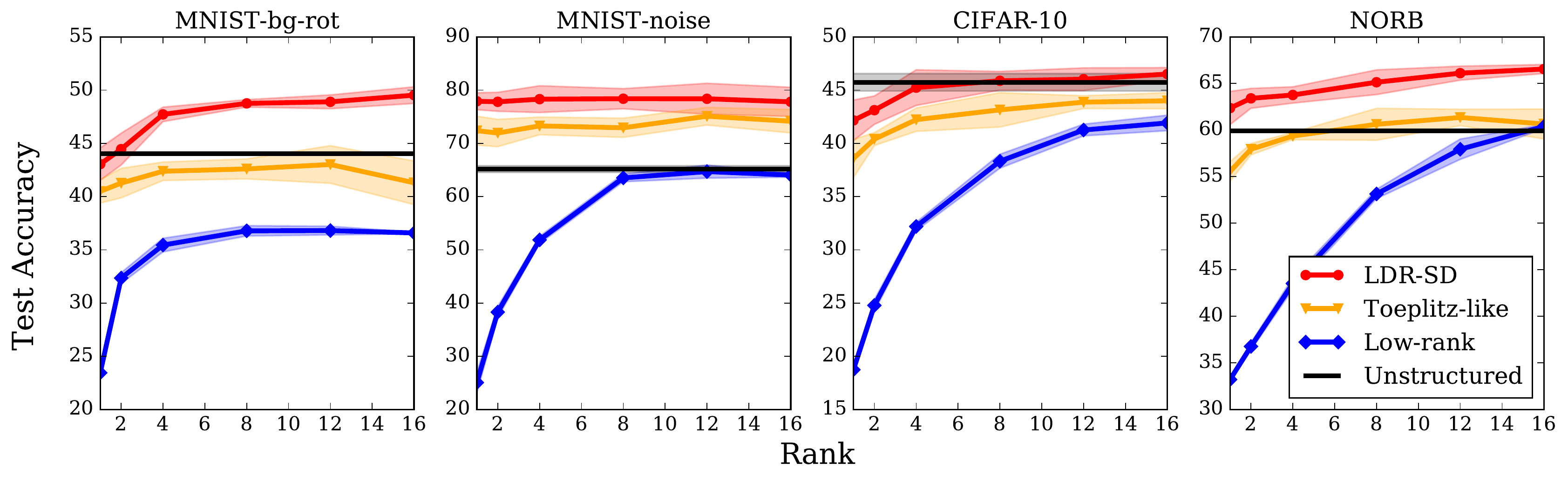}
  \caption{Test accuracy vs. rank for unstructured, \LDRSD{}, Toeplitz-like, low-rank classes. On each dataset, \LDRSD{} meets or exceeds the accuracy of the unstructured FC baseline at higher ranks. At rank 16, the compression ratio of an LDR-SD layer compared to the unstructured layer ranges from $23$ to $30$. Shaded regions represent two standard deviations from the mean, computed over five trials with randomly initialized weights.\vspace*{-0.0em}}
  \label{fig:rank-vs-accuracy}
\end{figure}

For the rest of our experiments outside Section~\ref{sec:FC} we use the algorithms in Appendix~\ref{sec:DR-proofs} specifically for \LDRSD{} matrices, and focus on further evaluation of this class on more expensive models.  

\paragraph{Language modeling}
Here, we replace the input-hidden weights in a single layer long short-term memory network (LSTM) for a language modeling task. We evaluate on the WikiText-2 dataset, consisting of 2M training tokens and a vocabulary size of 33K~\cite{merity2016pointer}.
We compare to Toeplitz-like and low-rank baselines, both previously investigated for compressing recurrent nets~\cite{lu2016learning}.
As shown in Table~\ref{table:lstm}, LDR-SD improves upon the baselines for each budget tested. 
Though our class does not outperform the unstructured model, we did find that it achieves a significantly lower perplexity than the fixed Toeplitz-like class (by 19.94-42.92 perplexity points), suggesting that learning the displacement operator can help adapt to different domains.

\begin{table}[ht!]
  \centering
  \caption{Test perplexity when replacing input-hidden matrices of an LSTM with structured classes on WikiText-2. An unconstrained layer, with 65536 parameters, has perplexity 117.74. Parameter budgets correspond to ranks 1,2,4,8,16,24 for LDR-SD. Lower is better.}
  \begin{tabular}{@{}lllll@{}}
    \toprule
    \textbf{Num. Parameters} & \textbf{LDR-SD} & \textbf{Toeplitz-like} & \textbf{Low-rank} \\ \midrule
    2048                     & \textbf{166.97} & 186.91                 & 205.72 \\
    3072                     & \textbf{154.51} & 177.60                 & 179.46 \\ 
    5120                     & \textbf{141.91} & 178.07                 & 172.38 \\
    9216                     & \textbf{143.60} & 186.52                 & 144.41 \\
    17408                    & \textbf{132.43} & 162.58                 & 135.65 \\
    25600                    & \textbf{129.46} & 155.73                 & 133.37 \\
    \bottomrule
  \end{tabular}
  \label{table:lstm}
\end{table}

\subsection{Replacing convolutional layers}
\label{sec:conv}

Convolutional layers of CNNs are a prominent example of equivariant feature maps.\footnote{Convolutions are designed to be shift equivariant, i.e. shifting the input is equivalent to shifting the output.}
It has been noted that convolutions are a subcase of Toeplitz-like matrices with a particular sparsity pattern\footnote{E.g.\ a $3 \times 3$ convolutional filter on an $n \times n$ matrix has a Toeplitz weight matrix supported on diagonals $-1, 0, 1, n-1, n, n+1, 2n-1, \dots$.}~\cite{cheng2015exploration,sindhwani2015structured}.
As channels are simply block matrices\footnote{A layer consisting of $k$ in-channels and $\ell$ out-channels, each of which is connected by a weight matrix of class $\mathcal{C}$, is the same as a $k \times \ell$ block matrix.}, the block closure property implies that multi-channel convolutional filters are simply a Toeplitz-like matrix of higher rank (see Appendix~\ref{sec:DR-proofs}, Corollary~\ref{cor:toeplitz}).
In light of the interpretation of LDR of an approximately equivariant linear map (as discussed in Section~\ref{sec:equivariance}), we investigate whether replacing convolutional layers with more general representations can recover similar performance, without needing the hand-crafted sparsity pattern.

Briefly, we test the simplest multi-channel CNN model on the CIFAR-10 dataset, consisting of one layer of convolutional channels ($3$ in/out channels), followed by a FC layer, followed by the softmax layer.
The final accuracies are listed in Table~\ref{table:conv-filter}.
The most striking result is for the simple architecture consisting of two layers of a single structured matrix.
This comes within 1.05 accuracy points of the highly specialized architecture consisting of convolutional channels + pooling + FC layer, while using fewer layers, hidden units, and parameters.
The full details are in Appendix~\ref{sec:exp-details}.

\begin{table}[ht]
\setlength\tabcolsep{4.0pt}
  \centering
  \caption{Replacing a five-layer CNN consisting of convolutional channels, max pooling, and FC layers with two generic LDR matrices results in only slight test accuracy decrease while containing fewer layers, hidden units, and parameters. Rank ($r$) is in parentheses.}
  \begin{tabular}{@{}lllll@{}}
    \toprule
    \textbf{First hidden layer(s)}  & \textbf{Last hidden layer} & \textbf{Hidden units} & \textbf{Parameters} & \textbf{Test Acc.} \\ \midrule
    3 Convolutional Channels (CC)   & FC                         & 3072, 512             & 1573089             & 54.59              \\
    3CC + Max Pool                  & FC                         & 3072, 768, 512        & 393441              & 55.14              \\
    4CC + Max Pool                  & FC                         & 4096, 1024, 512       & 524588              & \textbf{60.05}     \\ \midrule
    Toeplitz-like $(r=16)$ channels & Toeplitz-like $(r=16)$     & 3072, 512             & 393216              & 57.29              \\
    \LDRSD{} $(r=16)$ channels      & \LDRSD{} $(r=16)$          & 3072, 512             & 417792              & 59.36              \\
    Toeplitz-like $(r=48)$ matrix   & Toeplitz-like $(r=16)$     & 3072, 512             & 393216              & 55.29              \\
    \LDRSD{} $(r=48)$ matrix        & \LDRSD{} $(r=16)$          & 3072, 512             & 405504              & \textbf{59.00}     \\ \bottomrule
  \end{tabular}
  \label{table:conv-filter}
\end{table}

\subsection{Generalization and sample complexity}
\label{sec:generalization}

Theorem~\ref{thm:vc} states that the theoretical sample complexity of neural
networks with structured weight matrices scales almost linearly in the total
number of parameters, matching the results for networks with fully-connected layers~\cite{bartlett1999almost,bartlett2017nearly}.
As LDR matrices have far fewer parameters, the VC dimension bound for LDR
networks are correspondingly lower than that of general unstructured networks. 
Though the VC dimension bounds are sufficient but not necessary for learnability,
one might still expect to be able to learn over compressed networks with fewer
samples than over unstructured networks.
We empirically investigate this result using the same experimental setting as Table~\ref{table:images} and Figure~\ref{fig:rank-vs-accuracy}. As shown in Table~\ref{table:gen-error} (Appendix~\ref{sec:additional-results}), the structured classes consistently have lower generalization error (measured by the difference between training and test error) than the unstructured baseline.

\paragraph{Reducing sample complexity}
We investigate whether LDR models with learned displacement operators require fewer samples to achieve the same test error, compared to unstructured weights, in both the single hidden layer and CNN architectures. Tables~\ref{table:sample-complexity-shl} and~\ref{table:sample-complexity-cnn} in Appendix~\ref{sec:additional-results} show our results.
In the single hidden layer architecture, when using only 25\% of the training data the LDR-TD class exceeds the performance of an unstructured model trained on the full MNIST-noise dataset. On the CNN model, only 50\% of the training data is sufficient for the LDR-TD to exceed the performance of an unstructured layer trained on the full dataset.

\subsection{Visualizing learned weights}
\label{sec:visualization}
Finally, we examine the actual structures that our models learn.
Figure~\ref{fig:visualization}(a,b) 
shows the heat map of the weight matrix $\mathbf{W} \in \R^{ 784 \times 784 }$ for the Toeplitz-like and \LDRSD{} classes, trained on MNIST-bg-rot with a single hidden layer model.
As is convention, the input is flattened to a vector in $\R^{784}$. 
The Toeplitz-like class is unable to determine that the input is actually a $28 \times 28$ image instead of a vector.
In contrast, LDR-SD class is able to pick up regularity in the
input, as the weight matrix displays grid-like periodicity of size 28.

Figure~\ref{fig:visualization}(c) reveals why the weight matrix displays this pattern.
The equivariance interpretation (Section~\ref{sec:equivariance}) predicts that $\vB$ should encode a meaningful transformation of the inputs.
The entries of the learned subdiagonal are in fact recovering a latent invariant of the 2D domain: when visualized as an image, the pixel intensities correspond to how the inputs are centered in the dataset (Figure~\ref{fig:visualization}(d)).
Figure~\ref{fig:visualization-NORB} in Appendix~\ref{sec:additional-results} shows a similar figure for the NORB dataset, which has smaller objects, and we found that the subdiagonal learns a correspondingly smaller circle.

\begin{figure}[!t]
  \centering
  \begin{subfigure}{0.25\linewidth}
    \centering
    \includegraphics[width=\linewidth]{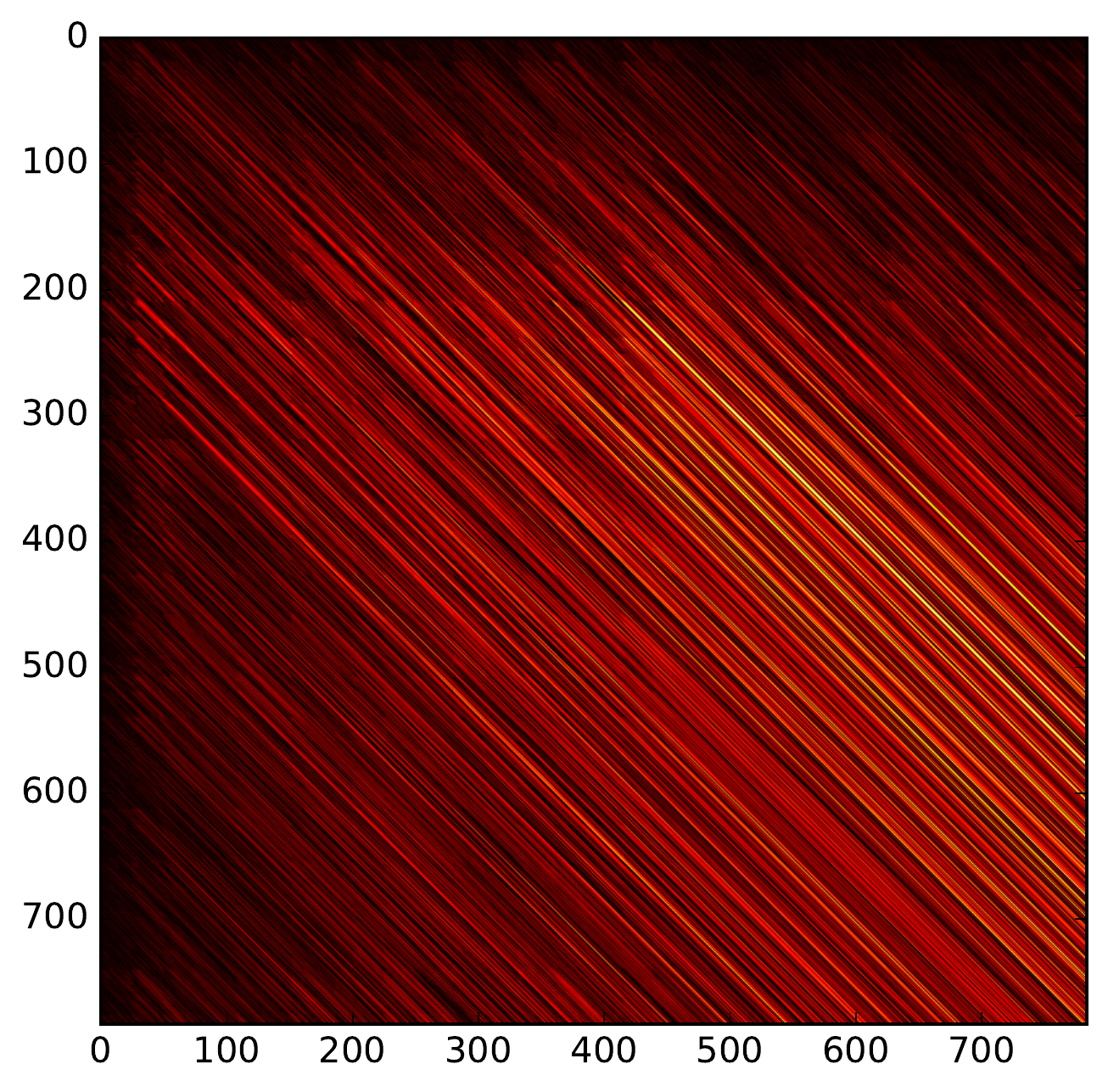}
    \caption{Toeplitz-like}
    \label{fig:heatmap_weight_toeplitz}
  \end{subfigure}\hfill
  \begin{subfigure}{0.25\linewidth}
    \centering
    \includegraphics[width=\linewidth]{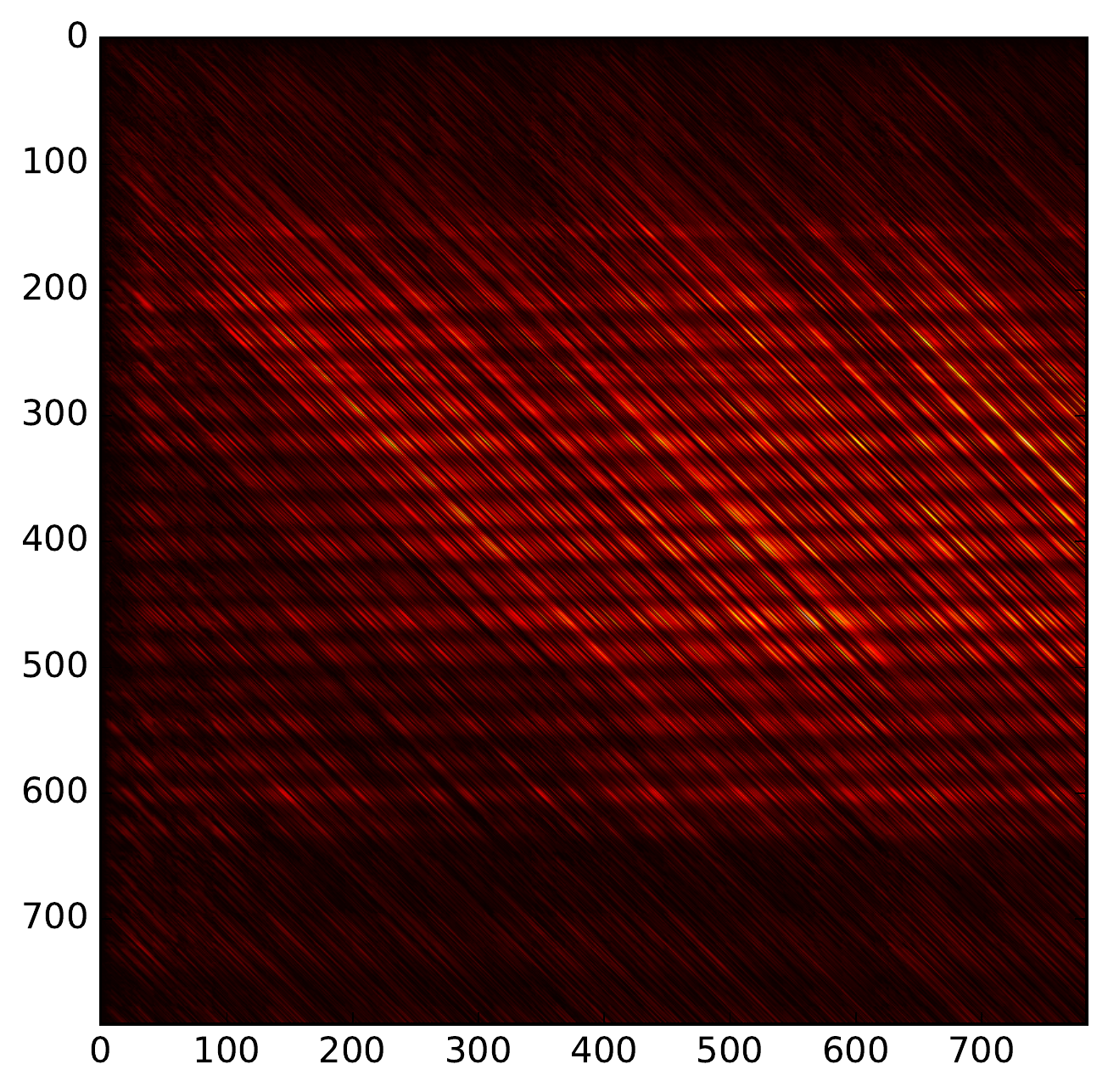}
    \caption{LDR-SD}
    \label{fig:heatmap_weight_subdiag}
  \end{subfigure}\hfill
  \begin{subfigure}{0.25\linewidth}
    \centering
    \includegraphics[width=\linewidth]{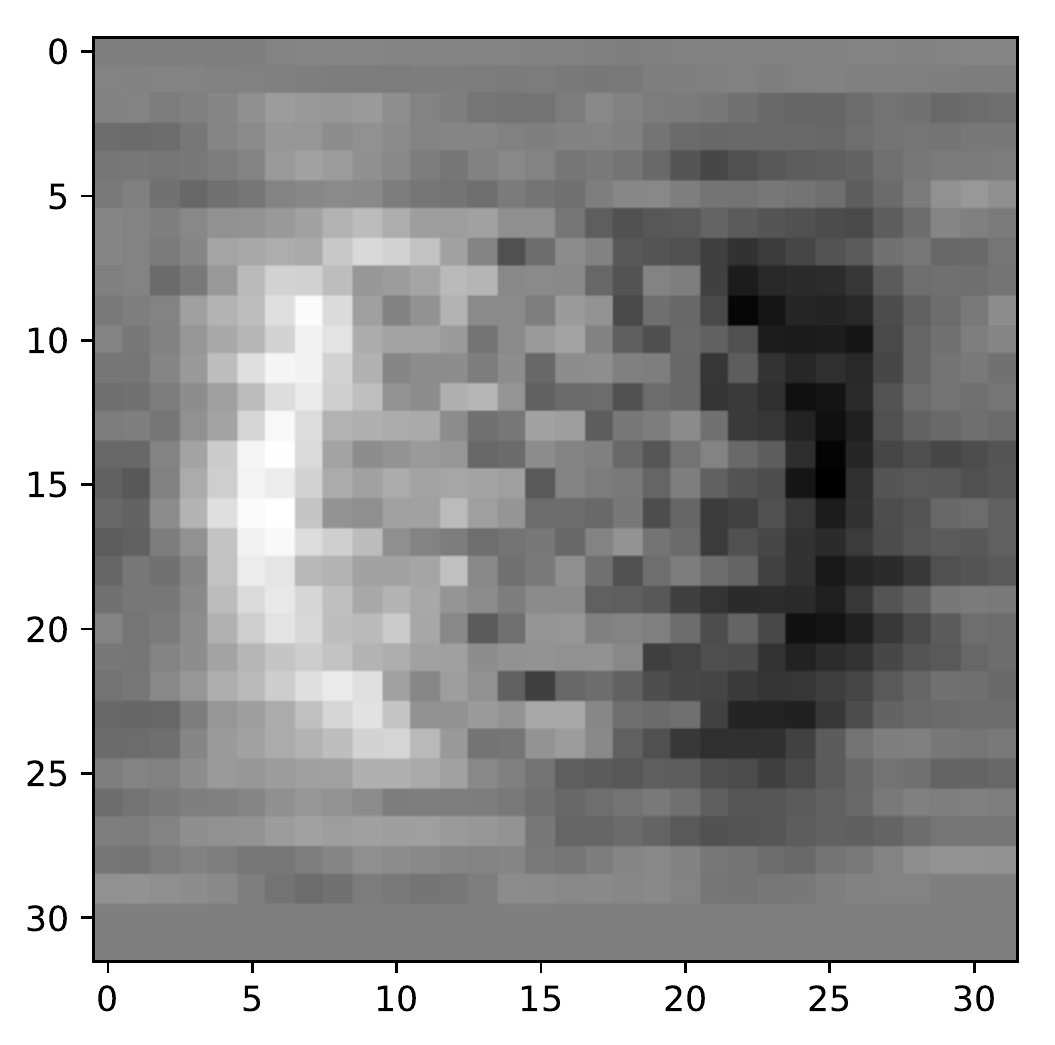}
    \caption{Subdiagonal of $\vB$}
    \label{fig:subdiag_bgrot}
  \end{subfigure}\hfill
  \begin{subfigure}{0.25\linewidth}
    \centering
    \includegraphics[width=\linewidth]{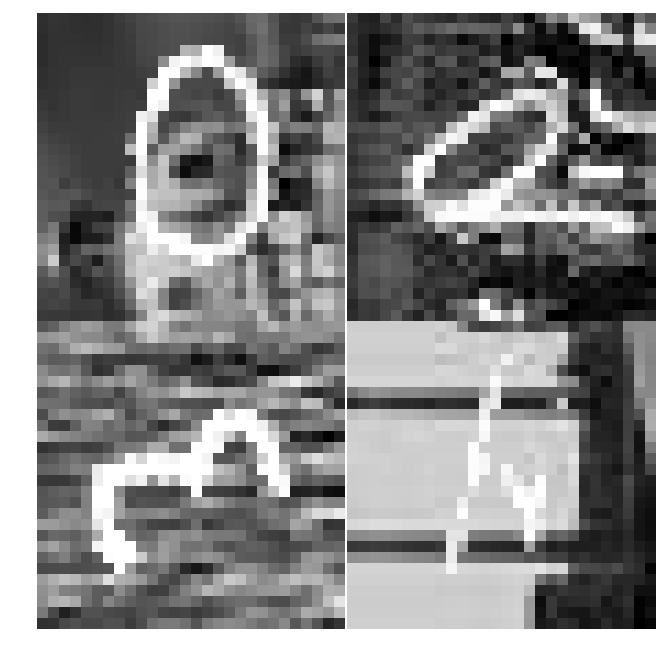}
    \caption{Input examples}
    \label{fig:digits_bgrot}
  \end{subfigure}\hfill

  \caption{The learned weight matrices (a,b) of models trained on MNIST-bg-rot. Unlike the Toeplitz-like matrix, the LDR-SD matrix displays grid-like periodicity corresponding to the 2D input. Figure (c) shows the values of the subdiagonal of $\vB$, reshaped as an image. The size and location of the circle roughly corresponds to the location of objects of interest in the 2D inputs. A similar centering phenomenon was found on the NORB dataset, shown in Figure~\ref{fig:visualization-NORB} in Appendix~\ref{sec:additional-results}.}
  \label{fig:visualization}
\end{figure}

\section{Conclusion}
We generalize the class of low displacement rank matrices explored in machine learning by considering classes of LDR matrices with displacement operators that can be learned from data.
We show these matrices can improve performance on downstream tasks compared to compression baselines and, on some tasks, general unstructured weight layers.
We hope this work inspires additional ways of using structure to achieve both more compact and higher quality representations, especially for deep learning models, which are commonly acknowledged to be overparameterized.

\subsubsection*{Acknowledgments}
We thank Taco Cohen, Jared Dunnmon, Braden Hancock, Tatsunori Hashimoto, Fred Sala, Virginia Smith, James Thomas, Mary Wootters, Paroma Varma, and Jian Zhang for helpful discussions and feedback. 

We gratefully acknowledge the support of DARPA under Nos.\ FA87501720095 (D3M) and FA86501827865 (SDH), NIH under No.\ N000141712266 (Mobilize), NSF under Nos.\ CCF1763315 (Beyond Sparsity) and CCF1563078 (Volume to Velocity), ONR under No.\ N000141712266 (Unifying Weak Supervision), the Moore Foundation, NXP, Xilinx, LETI-CEA, Intel, Google, NEC, Toshiba, TSMC, ARM, Hitachi, BASF, Accenture, Ericsson, Qualcomm, Analog Devices, the Okawa Foundation, and American Family Insurance, and members of the Stanford DAWN project: Intel, Microsoft, Teradata, Facebook, Google, Ant Financial, NEC, SAP, and VMWare. The U.S.\ Government is authorized to reproduce and distribute reprints for Governmental purposes notwithstanding any copyright notation thereon. Any opinions, findings, and conclusions or recommendations expressed in this material are those of the authors and do not necessarily reflect the views, policies, or endorsements, either expressed or implied, of DARPA, NIH, ONR, or the U.S.\ Government.

\bibliography{nips_2018}
\bibliographystyle{plainnat}

\newpage
\appendix

\section{Symbols and abbreviations}
\begin{table}[ht!]
\begin{center}
\caption{Symbols and abbreviations used in this paper.}
\begin{tabular}{ |c|l| } 
\hline
Symbol & Used For \\ 
 \hline
 LDR & low displacement rank \\ 
  \LDRSD{} & matrices with low displacement rank with respect to subdiagonal operators \\
 \LDRTD{} & matrices with low displacement rank with respect to tridiagonal operators\\
 $(\vA,\vB)$ & displacement operators \\ 
 \(\nabla_{\mathbf{A},\mathbf{B}}[\mathbf{M}]\) & Sylvester displacement, $\vA\vM - \vM\vB$ \\ 
 $r$ &  (displacement) rank \\ 
$(\vG,\vH)$ & parameters which define the rank $r$ residual matrix $\vG\vH^T$, where $\vG,\vH \in \mathbb{R}^{n \times r}$ \\ 
$\mathbf{Z_f}$ & unit-f-circulant matrix, defined as $\mathbf{Z_f} = \begin{bmatrix}\mathbf{e_2},\mathbf{e_3},...,\mathbf{e_n},f\mathbf{e_1} \end{bmatrix}$ \\ 
$\cK(\vA,\vv)$  & Krylov matrix, with $i^{th}$ column $\vA^i\vv$\\ 
 $\cD_{\vA,\vB}^r$ & matrices of displacement rank $\le r$ with respect to $(\vA,\vB)$ \\
    $\Phi$ & feature map\\
  CC & convolutional channels\\
    FC & fully-connected\\
 \hline
\end{tabular}
\end{center}
\end{table}

\section{Related work}
\label{sec:related-work}

Our study of the potential for structured matrices for compressing deep learning pipelines was motivated by exciting work along these lines from~\citet{sindhwani2015structured}, the first to suggest the use of low displacement rank (LDR) matrices in deep learning.
They specifically explored applications of the Toeplitz-like class, and empirically show that this class is competitive against many other baselines for compressing neural networks on image and speech domains.
Toeplitz-like matrices were similarly found to be effective at compressing RNN and LSTM architectures on a voice search task~\cite{lu2016learning}.
Another special case of LDR matrices are the circulant (or block-circulant) matrices, which have also been used for compressing CNNs~\cite{cheng2015exploration}; 
more recently, these have also been further developed and shown to achieve state-of-the-art results on FPGA and ASIC platforms~\cite{ding2017circnn}. Earlier works on compressing deep learning pipelines investigated the use of low-rank matrices~\cite{sainath2013low,denil2013predicting}---perhaps the most canonical type of dense structured matrix---which are also encompassed by our framework, as shown in Proposition~\ref{prop:richness}.
Outside of deep learning,~\citet{choromanski2016recycling} examined a structured matrix class that includes Toeplitz-like, circulant, and Hankel matrices (which are all LDR matrices) in the context of kernel approximation.

On the theoretical side, Zhao et al.~\citep{zhao2017theoretical} study properties of neural networks with LDR weight matrices, proving results including a universal approximation property and error bounds.
However, they retain the standard paradigm of fixing the displacement operators and varying the low-rank portion.
Another natural theoretical question that arises with these models is whether the resulting hypothesis class is still efficiently learnable, especially when learning the structured class (as opposed to these previous fixed classes).
Recently,~\citet{pmlr-v80-oymak18a} proved a Rademacher complexity bound for one layer neural networks with low-rank weight matrices.
To the best of our knowledge, Theorem~\ref{thm:vc} provides the first sample complexity bounds for neural networks with a broad class of structured weight matrices including low-rank, our LDR classes, and other general structured matrices~\cite{desa2018two}.

In Section~\ref{sec:our-approach} we suggest that the LDR representation enforces a natural notion of approximate equivariance and satisfies closure properties that one would expect of equivariant representations. 
The study of equivariant feature maps is of broad interest for constructing more effective representations when known symmetries exist in underlying data.
Equivariant linear maps have long been used in algebraic signal processing to derive efficient transform algorithms~\cite{egner2001automatic,egner2004symmetry}.
The fact that convolutional networks induce equivariant representations, and the importance of this effect on sample complexity and generalization, has been well-analyzed~\cite{cohen2016group,Anselmi:2016:ULI:2951995.2952046,Giles:87,pmlr-v54-sokolic17a}.
Building upon the observation that convolutional filters are simply linear maps constructed to be translation equivariant\footnote{Shifting the input to a convolutional feature map is the same as shifting the output.},
exciting recent progress has been made on crafting representations invariant to more complex symmetries such as the spherical rotation group~\cite{s.2018spherical} and egomotions~\cite{agrawal2015learning}.
Generally, however, underlying assumptions are made about the domain and invariances present in order to construct feature maps for each application.
A few works have explored the possibility of learning invariances automatically from data, and design deep architectures that are in principle capable of modeling and learning more general symmetries~\cite{gens2014deep,jaderberg2015spatial,pal2018non}.

\section{Properties of displacement rank}
\label{sec:DR-proofs}

Displacement rank has traditionally been used to describe the Toeplitz-like, Hankel-like, Vandermonde-like, and Cauchy-like matrices, which are ubiquitous in disciplines such as engineering, coding theory, and computer algebra.
Their associated displacement representations are shown in Table~\ref{table:displacements}.

\begin{center}
\begin{table}[ht!]
\centering
\caption{Traditional classes of structured matrices analyzed with displacement rank. In the Vandermonde and Cauchy cases, the displacement operators are parameterized by $v \in \mathbb{R}^n$ and $s, t \in \mathbb{R}^n$ respectively.}
    \begin{tabular}{| l | l | l | l |}
    \hline
    Structured Matrix $\mathbf{M}$ & $\mathbf{A}$ & $\mathbf{B}$ & Displacement Rank $r$ \\ \hline
    Toeplitz & $\mathbf{Z}_1$ & $\mathbf{Z}_{-1}$ & $\leq 2$  \\ \hline
    Hankel & $\mathbf{Z}_1$ & $\mathbf{Z}_{0}^T$ & $\leq 2$  \\ \hline
    Vandermonde & $\diag(v)$ & $\mathbf{Z}_0$ & $\leq 1$  \\ \hline
    Cauchy & $\diag(s)$ & $\diag(t)$ & $\leq 1$  \\ \hline
\end{tabular}
\label{table:displacements}
\end{table}
\end{center}

\begin{proof}[Proof of Proposition~\ref{prop:closure}]
  The following identities are easily verified:
  \begin{description}
    \item[Transpose]
    \[
      \nabla_{\vB^T,\vA^T} \vM^T = -\left( \nabla_{\vA,\vB} \vM \right)^T
    \]
    \item[Inverse]
    \[
      \nabla_{\vB,\vA} \vM^{-1} = -\vM^{-1} \left( \nabla_{\vA,\vB} \vM \right) \vM^{-1}
    \]
    \item[Sum]
    \[
      \nabla_{\vA,\vB} (\vM+\vN) = \nabla_{\vA,\vB} \vM + \nabla_{\vA,\vB} \vN
    \]
    \item[Product]
    \[
      \nabla_{\vA,\vC} \vM\vN = (\nabla_{\vA,\vB} \vM)\vN + \vM\left( \nabla_{\vB,\vC}\vN \right)
    \]
    \item[Block]
    The remainder
    \[ \diag(\vA_1, \dots, \vA_k) \vM - \vM \diag(\vB_1, \dots, \vB_\ell) \]
    is the block matrix
    \[
      (\nabla_{\vA_i,\vB_j} \vM_{ij})_{1 \le i \le k, 1 \le j \le \ell}.
    \]
    This is the sum of $k\ell$ matrices of rank $r$ and thus has rank $rk\ell$.
  \end{description}
\end{proof}

\begin{corollary}
\label{cor:toeplitz}
  A $k \times \ell$ block matrix $\vM$, where each block is a Toeplitz-like matrix of displacement rank $r$, is Toeplitz-like with displacement rank $rk\ell + 2k + 2\ell$.
\end{corollary}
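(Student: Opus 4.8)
The plan is to combine the block closure property with a low-rank perturbation of the displacement operators. Since each block $\vM_{ij}$ is Toeplitz-like, it lies in $\cD^r_{\vZ_1,\vZ_{-1}}$ for shift operators of size $n \times n$. Applying Proposition~\ref{prop:closure}\ref{prop:closure:block} with every $\vA_i = \vZ_1$ and every $\vB_j = \vZ_{-1}$ shows that $\vM$ has displacement rank at most $rk\ell$ with respect to the block-diagonal operators $\vA' \defeq \diag(\vZ_1,\dots,\vZ_1)$ ($k$ blocks) and $\vB' \defeq \diag(\vZ_{-1},\dots,\vZ_{-1})$ ($\ell$ blocks). The remaining task is to replace these block-diagonal operators by the genuine full-size cyclic shifts $\vZ_1 \in \R^{kn \times kn}$ and $\vZ_{-1} \in \R^{\ell n \times \ell n}$, which are the operators defining the Toeplitz-like class for the $kn \times \ell n$ matrix $\vM$.

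Next I would control this swap by writing each full-size shift as its block-diagonal truncation plus a sparse correction, $\vZ_1 = \vA' + \Delta_A$ and $\vZ_{-1} = \vB' + \Delta_B$. Expanding the Sylvester displacement gives
\begin{equation*}
  \nabla_{\vZ_1,\vZ_{-1}}[\vM] = \nabla_{\vA',\vB'}[\vM] + \Delta_A \vM - \vM \Delta_B ,
\end{equation*}
so by subadditivity of rank together with $\rank(\Delta_A \vM) \le \rank(\Delta_A)$ and $\rank(\vM \Delta_B) \le \rank(\Delta_B)$, the displacement rank of $\vM$ with respect to $(\vZ_1,\vZ_{-1})$ is at most $rk\ell + \rank(\Delta_A) + \rank(\Delta_B)$.

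Finally I would bound the two corrections. The matrix $\Delta_A = \vZ_1 - \vA'$ is nonzero only at the block seams: the full shift carries a $1$ on each of the $k-1$ subdiagonal links joining consecutive blocks and a single wrap-around $1$ in the global top-right corner, none of which appear in $\vA'$, while $\vA'$ carries a spurious corner $1$ inside each of its $k$ diagonal blocks. This is at most $2k$ nonzero entries, each of rank at most one, so $\rank(\Delta_A) \le 2k$; by the same argument $\rank(\Delta_B) \le 2\ell$. Substituting gives the claimed bound $rk\ell + 2k + 2\ell$. (In fact every nonzero entry of $\Delta_A$ lies in one of the $k$ rows indexed $1, n+1, \dots, (k-1)n+1$, so $\rank(\Delta_A) \le k$ and one obtains the sharper $rk\ell + k + \ell$; the crude count already suffices.) I expect this last step to be the main obstacle: correctly pinning down the sparsity pattern of the difference between a full cyclic shift and its block-diagonal truncation requires carefully tracking the wrap-around corner entry of $\vZ_f$ and the off-by-one placement of each seam, so that no correction is double-counted and the rank bound is sound.
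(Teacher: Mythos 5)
Your proposal is correct and follows essentially the same route as the paper's proof: apply the block closure property with the block-diagonal shift operators, then swap to the full-size $\vZ_1$ and $\vZ_{-1}$ via rank subadditivity, bounding each correction by its sparsity ($2k$ and $2\ell$ nonzero entries, hence rank at most $2k$ and $2\ell$). Your parenthetical observation that the nonzero entries of $\Delta_A$ all lie in $k$ rows, yielding the sharper bound $rk\ell + k + \ell$, is a valid refinement the paper does not record, but the argument is otherwise identical.
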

\begin{proof}
  Apply Proposition~\ref{prop:closure:block} where each $\vA_k,\vB_k$ has the form $\vZ_f$.
  Let $\vA = \diag(\vA_1, \dots, \vA_k)$ and $\vB = \diag(\vB_1, \dots, \vB_\ell)$.
  Note that $\vA$ and $\vZ_1$ (of the same size as $\vA$) differ only in $2k$ entries, and similarly $\vB$ and $\vZ_{-1}$ differ in $2\ell$ entries.
  Since an $s$-sparse matrix also has rank at most $s$,
  \begin{align*}
    \vZ_1 \vM - \vM \vZ_{-1} &= \vA\vM - \vM\vB + (\vZ_1-\vA)\vM - \vM(\vZ_{-1}-\vB)
  \end{align*}
  has rank at most $rk\ell + 2k + 2\ell$.
\end{proof}

\begin{proof}[Proof of Proposition~\ref{prop:reconstruction}]
  First consider the rank one case, $\vR = \vg\vh^T$.
  It is easy to check that $\nabla_{\vA^{-1},\vZ^T}\cK(\vA,\vg)$ will only be non-empty in the first column, hence $\cK(\vA,\vg) \in \cD_{\vA^{-1},\vZ^T}^1$.
  Similarly, $\cK(\vB^T,\vh) \in \cD_{\vB^T,\vZ}^1$ and Proposition~\ref{prop:closure}\ref{prop:closure:transpose-inverse} implies $\cK(\vB^T,\vh)^T \in \cD_{\vZ^T,\vB}^1$.
  Then Theorem~\ref{prop:closure}\ref{prop:closure:product} implies that $\cK(\vA,\vg)\cK(\vB,\vh)^T \in \cD_{\vA,\vB}^2$.
  The rank $r$ case follows directly from Theorem~\ref{prop:closure}\ref{prop:closure:sum}.
\end{proof}

\subsection{Expressiveness}
\label{sec:expressiveness-proof}

Expanding on the claim in Section~\ref{sec:our-approach},
we formally show that these structured matrices are contained in the tridiagonal (plus corners) LDR class.
This includes several types previously used in similar works.

\begin{comment}
\begin{figure}[htb!]
    \centering
	\includestandalone[scale = 0.5]{expressivity}
    \caption{Our proposed LDR-TD and LDR-SD structured matrix classes contain a number of other classes including Toeplitz-like~\cite{sindhwani2015structured} (and other classic displacement types, such as Hankel-like, Vandermonde-like, and Cauchy-like), low-rank~\cite{denil2013predicting}, circulant~\cite{cheng2015exploration}, and standard convolutional filters. As noted in Section~\ref{sec:our-approach}, here we implement the near-linear time algorithms from~\cite{desa2018two} for the LDR-SD class and have made our PyTorch implementation \href{https://github.com/HazyResearch/structured-nets}{public}. 
  Captions for each class show storage cost and number of operations needed for matrix-vector multiplication.}
	\label{fig:expressivity}
  \end{figure}
  \end{comment}
  
  \begin{figure}[htb!]
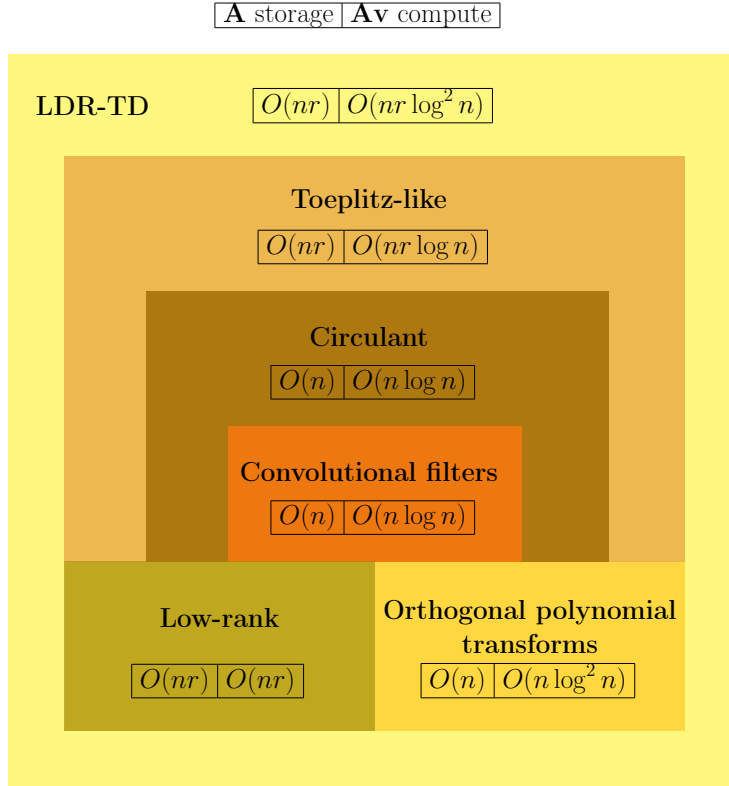

    \centering
	\includestandalone[scale = 0.5]{expressivity-td}
    \caption{Our proposed LDR-TD structured matrix class contains a number of other classes including Toeplitz-like~\cite{sindhwani2015structured} (and other classic displacement types, such as Hankel-like, Vandermonde-like, and Cauchy-like), low-rank~\cite{denil2013predicting}, circulant~\cite{cheng2015exploration}, standard convolutional filters, and orthogonal polynomial transforms, including the Discrete Fourier and Cosine Transforms. Captions for each class show storage cost and operation count for matrix-vector multiplication.}
	\label{fig:expressivity}
  \end{figure}

\paragraph{Classic displacement rank}
The Toeplitz-like, Hankel-like, Vandermonde-like, and Cauchy-like matrices are defined as having LDR with respect to $\vA, \vB \in \{ \vZ_f, \vZ_f^T, \mathbb{D} \}$ where $\mathbb{D}$ is the set of diagonal matrices~\cite{pan2012structured}.
(For example, \citep{sindhwani2015structured} defines the Toeplitz-like matrices as $(\vA,\vB) = (\vZ_1, \vZ_{-1})$.)
All of these operator choices are only non-zero along the three main diagonals or opposite corners, and hence these classic displacement types belong to the LDR-TD class.

\paragraph{Low-rank}
A rank $r$ matrix $R$ trivially has displacement rank $r$ with respect to $(\vA,\vB) = (\vI, \mathbf{0})$.
It also has displacement rank $r$ with respect to $(\vA, \vB) = (\vZ_1, \mathbf{0})$,
since $\vZ_1$ is full rank (it is a permutation matrix) and so $\rank(Z_1 R) = \rank(R) = r$.
Thus low-rank matrices are contained in both the LDR-TD and LDR-SD classes.

\paragraph{Orthogonal polynomial transforms}

The \textbf{polynomial transform} matrix $\vM$ with respect to polynomials $(p_0(X), \dots, p_{m-1}(X))$ and nodes $(\lambda_0, \dots, \lambda_{n-1})$ is defined by $\vM_{ij} = p_i(\lambda_j)$.
When the $p_i(X)$ are a family of orthogonal polynomials, it is called an \textbf{orthogonal polynomial transform}.
\begin{proposition}
  Orthogonal polynomial transforms have displacement rank $1$ with respect to tridiagonal operators. 
\end{proposition}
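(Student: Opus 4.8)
The plan is to exploit the one algebraic feature that defines orthogonal polynomials—their three-term recurrence—to write down an explicit displacement equation. Every orthogonal family satisfies a recurrence of the form $X p_i(X) = a_i p_{i+1}(X) + b_i p_i(X) + c_i p_{i-1}(X)$ (with the convention $p_{-1} \equiv 0$) for scalars $a_i, b_i, c_i$ depending only on the family. My aim is to read off a diagonal operator $\vB$ and a tridiagonal operator $\vA$ directly from this recurrence so that $\nabla_{\vA,\vB}[\vM] = \vA\vM - \vM\vB$ collapses to a matrix of rank one.

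First I would take $\vB = \diag(\lambda_0, \dots, \lambda_{n-1})$, the diagonal matrix of nodes. Since $\vM_{ij} = p_i(\lambda_j)$, right multiplication by $\vB$ scales column $j$ by $\lambda_j$, so $(\vM\vB)_{ij} = \lambda_j p_i(\lambda_j) = (Xp_i)(\lambda_j)$; that is, $\vM\vB$ evaluates the polynomials $X p_i(X)$ at the nodes. Next I would let $\vA$ be the tridiagonal matrix whose nonzero entries encode the recurrence coefficients, namely $\vA_{i,i-1} = c_i$, $\vA_{i,i} = b_i$, and $\vA_{i,i+1} = a_i$, so that left multiplication forms the corresponding combination of adjacent rows: $(\vA\vM)_{ij} = c_i p_{i-1}(\lambda_j) + b_i p_i(\lambda_j) + a_i p_{i+1}(\lambda_j)$.

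The heart of the argument is that the recurrence forces these two expressions to agree row by row: for every $i \le n-2$ we get $(\vA\vM)_{ij} = (Xp_i)(\lambda_j) = (\vM\vB)_{ij}$, so all but the last row of $\vA\vM - \vM\vB$ vanish identically. The subtlety—and what I expect to be the only genuine obstacle—is the final row $i = n-1$: here $\vA$ cannot supply the term $a_{n-1} p_n(\lambda_j)$, because $p_n$ is not among the rows of $\vM$. Carrying the recurrence through this row shows the residual equals exactly $-a_{n-1} p_n(\lambda_j)$ in entry $(n-1, j)$. I must check this boundary bookkeeping carefully, confirming that no corner entries of the tridiagonal operator are invoked and that the overflow is precisely one polynomial degree beyond the transform.

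Finally I would observe that $\nabla_{\vA,\vB}[\vM]$ is supported entirely on its last row, and a matrix nonzero in a single row is the outer product $\ve_{n-1}\mathbf{w}^T$ with $\mathbf{w}_j = -a_{n-1} p_n(\lambda_j)$, hence has rank at most one. Since $\vB$ is diagonal and $\vA$ is tridiagonal—both instances of the tridiagonal-plus-corner sparsity pattern defining \LDRTD{}—this exhibits the orthogonal polynomial transform as a \LDRTD{} matrix of displacement rank $1$.
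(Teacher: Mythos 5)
Your proof is correct and follows essentially the same route as the paper's: both take $\vB = \diag(\lambda_0,\dots,\lambda_{n-1})$ and a tridiagonal $\vA$ encoding the three-term recurrence, and both show the residual $\vA\vM - \vM\vB$ is supported on the last row, hence rank one. The only difference is cosmetic—you use the Jacobi-matrix normalization $Xp_i = a_ip_{i+1} + b_ip_i + c_ip_{i-1}$, while the paper writes $p_{i+1} = (a_iX+b_i)p_i + c_ip_{i-1}$ and divides through by $a_i$, yielding the same operator up to rescaling of coefficients; your explicit identification of the last-row residual as $-a_{n-1}p_n(\lambda_j)$ is a slightly more detailed account of the boundary than the paper gives.
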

\begin{proof}
  Every orthogonal polynomial family satisfies a three-term recurrence
  \begin{equation}
    \label{eq:OP}
    p_{i+1}(X) = (a_iX + b_i)p_i(X) + c_i p_{i-1}(X)
  \end{equation}
  where $a_i > 0$~\cite{chihara}.
  Let $\vM$ be an orthogonal polynomial transform with respect to polynomials $(p_i(X))_{0 \le i < m}$ and nodes $(\lambda_j)_{0 \le j < n}$.
  Define the tridiagonal and diagonal matrix
  \begin{align*}
    &\vA =
    \begin{bmatrix}
      -\frac{b_0}{a_0} & \frac{1}{a_0} & 0 & \dots & 0 & 0 \\
      -\frac{c_1}{a_1} & -\frac{b_1}{a_1} & \frac{1}{a_1} & \dots & 0 & 0 \\
      0 & -\frac{c_1}{a_1} & -\frac{b_1}{a_1} & \dots & 0 & 0 \\
      \vdots & \vdots & \vdots & \ddots & \vdots & \vdots \\
      0 & 0 & 0 & \dots & -\frac{b_{m-2}}{a_{m-2}} & \frac{1}{a_{m-2}} \\
      0 & 0 & 0 & \dots & -\frac{c_{m-1}}{a_{m-1}} & -\frac{b_{m-1}}{a_{m-1}} \\
    \end{bmatrix}
    \\&\vB = \diag(\lambda_0, \dots, \lambda_{n-1}).
  \end{align*}
  For any $i \in \{0, \dots, m-2\}$ and any $j$, consider entry $ij$ of $\vA \vM - \vM \vB$.
  This is
  \[
    \frac{1}{a_i}\left[ -c_ip_{i-1}(\lambda_j)  - b_ip_i(\lambda_j) + p_{i+1}(\lambda_j) - \lambda_jp_i(\lambda_j) \right]
  \]
  which is $0$ by plugging $\lambda_j$ into~\eqref{eq:OP}.
  
  Thus $\nabla_{\vA,\vB}\vM$ can only non-zero in the last row, so $\vM \in \cD_{\vA,\vB}^1$.
\end{proof}

\paragraph{Fourier-like transforms}
Orthogonal polynomial transforms include many special cases.
We single out the Discrete Fourier Transform (DFT) and Discrete Cosine Transform (DCT) for their ubiquity.

The $N \times N$ DFT and DCT (type II) are defined as matrix multiplication by the matrices
\begin{align*}
  \vF &= \left(e^{-2\pi\frac{ij}{N}} \right)_{ij} \\
  \vC &= \left(\cos\left[ \frac{\pi}{N}i(j+1/2) \right] \right)_{ij}
\end{align*}
respectively.

The former is a special type of Vandermonde matrix, which were already shown to be in LDR-TD.
Also note that Vandermonde matrices $(\lambda_j^i)_{ij}$ are themselves orthogonal polynomial transforms with $p_i(X) = X^i$.

The latter can be written as
\[
  \left( T_i\left(  \cos\left[ \frac{\pi}{N}(j+\frac{1}{2}) \right] \right) \right)_{ij},
\]
where $T_i$ are the \textbf{Chebyshev polynomials} (of the first kind) defined such that
\[
  T_n(X) = \cos(n \arccos x).
\]
Thus this is an orthogonal polynomial transform with respect to the Chebyshev polynomials.

\paragraph{Other constructions}
From these basic building blocks, interesting constructions belonging to LDR-TD can be found via the closure properties.
For example, several types of structured layers inspired by convolutions, including Toeplitz~\cite{sindhwani2015structured}, circulant~\cite{cheng2015exploration} and block-circulant~\cite{ding2017circnn} matrices,
are special instances of Toeplitz-like matrices.
We also point out a more sophisticated layer~\cite{moczulski2015acdc} in the tridiagonal LDR class, which requires more deliberate use of Proposition~\ref{prop:closure} to show.
\begin{proposition}
  The $\vA\vC\vD\vC^{-1}$ layer, where $\vA,\vD$ are diagonal matrices and $\vC$ is the Discrete Cosine Transform~\citep{moczulski2015acdc}, has displacement rank $2$ with respect to tridiagonal operators.
\end{proposition}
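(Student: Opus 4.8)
The plan is to build $\vA\vC\vD\vC^{-1}$ up from its factors by repeated application of the product closure property (Proposition~\ref{prop:closure}\ref{prop:closure:product}), carrying along the displacement operators so that they remain tridiagonal at every stage. The starting point is the fact established just above: the DCT $\vC$ is an orthogonal polynomial transform (for the Chebyshev polynomials), hence $\vC \in \cD_{\vT,\Lambda}^1$, where $\vT$ is the tridiagonal operator coming from the three-term recurrence and $\Lambda = \diag(\lambda_0,\dots,\lambda_{N-1})$ is the diagonal matrix of nodes.

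First I would record the displacement type of each factor against these two operators. The inverse part of Proposition~\ref{prop:closure}\ref{prop:closure:transpose-inverse} gives $\vC^{-1} \in \cD_{\Lambda,\vT}^1$. Since $\vD$ and $\Lambda$ are both diagonal they commute, so $\nabla_{\Lambda,\Lambda}[\vD] = 0$ and $\vD \in \cD_{\Lambda,\Lambda}^0$. Chaining through the product property then yields $\vD\vC^{-1} \in \cD_{\Lambda,\vT}^1$ and subsequently $\vC\vD\vC^{-1} \in \cD_{\vT,\vT}^{1+1} = \cD_{\vT,\vT}^2$, so the inner similarity already has the desired rank against the tridiagonal operator $\vT$ on both sides.

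The one nontrivial step is absorbing the left diagonal factor $\vA$ while keeping the operators tridiagonal, since a diagonal $\vA$ has full-rank displacement against $\vT$ for the naive operator choices. The key idea is to conjugate: set $\vP = \vA\vT\vA^{-1}$ (assuming, as in the ACDC construction, that $\vA$ is invertible). Because $\vA$ is diagonal, left and right multiplication merely rescale rows and columns, so $\vP_{ij} = (\alpha_i/\alpha_j)\vT_{ij}$ vanishes wherever $\vT_{ij}$ does; hence $\vP$ is again tridiagonal (with corners). By construction $\vP\vA = \vA\vT$, i.e.\ $\nabla_{\vP,\vT}[\vA] = 0$, so $\vA \in \cD_{\vP,\vT}^0$. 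A final use of the product property gives $\vA\cdot(\vC\vD\vC^{-1}) \in \cD_{\vP,\vT}^{0+2} = \cD_{\vP,\vT}^2$, and since both $\vP$ and $\vT$ are tridiagonal this establishes the claim.

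The hard part is precisely this conjugation trick for $\vA$: recognizing that the correct left operator is not $\vT$ itself but its diagonal conjugate $\vP = \vA\vT\vA^{-1}$, and verifying that conjugation by a diagonal matrix preserves the tridiagonal-plus-corner sparsity pattern. The invertibility of $\vA$ is what makes this well-defined and should be stated as a hypothesis; everything else is routine bookkeeping with the closure properties.
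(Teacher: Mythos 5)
Your proof is correct and essentially identical to the paper's: the paper uses the same conjugation trick, defining $\vS = \vA\vT\vA^{-1}$ (your $\vP$) so that $\vA \in \cD_{\vS,\vT}^0$, and then chains the product closure property through $\vC \in \cD_{\vT,\Lambda}^1$, $\vD \in \cD_{\Lambda,\Lambda}^0$, and $\vC^{-1} \in \cD_{\Lambda,\vT}^1$ to conclude $\vA\vC\vD\vC^{-1} \in \cD_{\vS,\vT}^2$. Your explicit flagging of the invertibility of $\vA$ is a hypothesis the paper leaves implicit, but otherwise the two arguments coincide step for step.
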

\begin{proof}
  Let $\vT,\Lambda$ be the tridiagonal and diagonal matrix such that $\vC \in \cD_{\vT,\Lambda}^1$.
  Define $\vS = \vA \vT \vA^{-1}$, which is also tridiagonal.
  Note that $\vA \in \cD_{\vS,\vT}^0$ by construction.
  Also note that $\vD \in \cD_{\Lambda,\Lambda}^0$ since $\Lambda$ is diagonal.
  An application of the inverse closure rule yields $\vC \in \cD_{\Lambda,\vT}^1$.
  Finally, the product closure property implies that
  \[
    \vA\vC\vD\vC^{-1} \in \cD_{\vS,\vT}^2.
  \]
\end{proof}

\subsection{Algorithm derivation and details}
\label{sec:algo}

De Sa et al.\ recently showed that a very general class of LDR matrices have asymptotically fast matrix-vector multiplication algorithms~\citep{desa2018two}.
However, parts of the argument are left to existential results.
Building upon~\citet{desa2018two}, we derive a simplified and self-contained algorithm for multiplication by LDR matrices with subdiagonal operators.

Since these matrices can be represented by the Krylov product formula~\eqref{eq:reconstruction}, it suffices to show multiplication algorithms separately for matrix-vector multiplication by $\cK(\vA,\vv)^T$ and $\cK(\vA, \vv)$.

\paragraph{Krylov transpose multiplication}
Let $\vA \in \R^{n \times n}$ be a subdiagonal matrix, i.e. $\vA_{i+1,i}$ are the only possible non-zero entries.
Let $\vu,\vv \in \R^n$, we wish to compute the product $\cK(\vA,\vv)^T\vu$.
For simplicity assume $n$ is a power of $2$.

Following~\citep{desa2018two}, the vector
\[
  \vu^T\cK(\vA,\vv) =
  \begin{bmatrix}
    \vu \vv & \vu \vA \vv & \dots & \vu \vA^{n-1} \vv
  \end{bmatrix}
\]
is the coefficient vector of the polynomial in $X$
\begin{align*}
  &\vu \vv + \vu \vA \vv X + \dots + \vu \vA^{n-1} \vv X^{n-1}
  \\&= \sum_{i=0}^\infty \vu \vA^i X^i \vv
  \\&= \vu (\vI - \vA X)^{-1} \vv,
\end{align*}
where we use the observation that $\vA^n = 0$.

By partitioning $\vA$ into $n/2 \times n/2$ blocks, it has the form
$\begin{bmatrix} \vA_0 & \vzero \\ a \ve_1 \ve_{n/2}^T & \vA_1 \end{bmatrix}$,
where $\vA_0,\vA_1$ are subdiagonal matrices of half the size, $a$ is a scalar, and $\ve_i$ are basis vectors.
Let also $\vu_0,\vu_1 \in \R^{n/2}$, $\vv_0,\vv_1 \in \R^{n/2}$ denote the first and second halves of $\vu,\vv$.

By block matrix inversion for triangular matrices $\begin{bmatrix} \vA & \vzero \\ \vC & \vB \end{bmatrix}^{-1} = \begin{bmatrix} \vA^{-1} & \vzero \\ -\vB^{-1} \vC \vA^{-1} & \vB^{-1} \end{bmatrix}$,
this can be written as
\begin{align*}
  & \vu^T (\vI-\vA X)^{-1} \vv = \begin{bmatrix} \vu_0^T & \vu_1^T \end{bmatrix}
    \begin{bmatrix} (\vI-\vA_0 X)^{-1} & \vzero \\ -(\vI-\vA_1 X)^{-1} (-a \ve_1 \ve_{n/2}^TX) (\vI-\vA_0 X)^{-1} & (\vI-\vA_1 X)^{-1} \end{bmatrix}
    \begin{bmatrix} \vv_0 \\ \vv_1 \end{bmatrix}
  \\&= \vu_0^T (\vI-\vA_0 X)^{-1} \vv_0 + \vu_1^T (\vI-\vA_1 X)^{-1} \vv_1
  + aX \left( \vu_1^T (\vI-\vA_1 X)^{-1} \ve_1 \right) \left( \ve_{n/2}^T (\vI-\vA_0 X)^{-1} \vv_0  \right)
\end{align*}
Therefore $\vu^T (\vI-\vA X)^{-1} \vv$ can be computed from
\begin{align*}
  \vu_0^T (\vI-\vA_0 X)^{-1} \vv_0
  \qquad& \vu_1^T (\vI-\vA_1 X)^{-1} \vv_1 \\
  \vu_1^T (\vI-\vA_1 X)^{-1} \ve_1
  \qquad& \ve_{n/2}^T (\vI-\vA_0 X)^{-1} \vv_0
\end{align*}
with an additional polynomial multiplication and 3 polynomial addition/subtractions.

A modification of this reduction shows that the $2 \times 2$ matrix of polynomials $\begin{bmatrix} \vu & \ve_n \end{bmatrix}^T (\vI-\vA X)^{-1} \begin{bmatrix} \vv & \ve_1 \end{bmatrix}$ can be computed from
\begin{align*}
  &\begin{bmatrix} \vu_0 & \ve_n \end{bmatrix}^T (\vI-\vA_0 X)^{-1} \begin{bmatrix} \vv_0 & \ve_1 \end{bmatrix}
  &\begin{bmatrix} \vu_1 & \ve_n \end{bmatrix}^T (\vI-\vA_1 X)^{-1} \begin{bmatrix} \vv_1 & \ve_1 \end{bmatrix}
\end{align*}
with an additional constant number of polynomial multiplications and additions.

The complete recursive algorithm is provided in Algorithm~\ref{alg:KT-recursive}, where subroutine \texttt{R} computes the above matrix of polynomials.
For convenience, Algorithm~\ref{alg:KT-recursive} uses Python indexing notation.

\begin{algorithm}
  \caption{Krylov Transpose (Recursive)}
  \label{alg:KT-recursive}
  \begin{algorithmic}[1]
    \renewcommand{\algorithmicrequire}{\textbf{Input:}}
    \renewcommand{\algorithmicensure}{\textbf{Output:}}
    \Function{Krylov\_Transpose}{$\vA \in \R^{n \times n}, \vu, \vv \in \R^n$}
      \State $\vs \gets \text{subdiagonal}(\vA)$
      \State \Return{\Call{R}{$\vs, \vu, \vv$}}
    \EndFunction
    \Function{R}{$\vs \in \R^{n-1}, \vu, \vv$} 
      \State $S_0 \gets \Call{R}{\vs[0:n/2-1], \vu[0:n/2], \vv[0:n/2]}$
      \State $S_1 \gets \Call{R}{\vs[n/2:n-1], \vu[n/2:n], \vv[n/2:n]}$
      \State\label{alg:KT-recursive:step:polymult} $L \gets \vs[n/2-1]X \cdot \begin{bmatrix} S_1[0,1]\cdot S_0[1,0] & S_1[0,1] \cdot S_0[1,1] \\ S_1[1,1] \cdot S_0[1,0] & S_1[1,1] \cdot S_0[1,1] \end{bmatrix}$
      \State \Return{$\begin{bmatrix} L[0,0] + S_0[0,0] + S_1[0,0] & L[0,1] + S_0[0,1] \\ L[1,0] + S_1[1,0] & L[1,1] \end{bmatrix}$}
    \EndFunction

  \end{algorithmic}
\end{algorithm}

A polynomial multiplication of degree $m$ in Step~\ref{alg:KT-recursive:step:polymult} can be computed as a convolution of size $2m$. This reduces to two Fast Fourier Transform (FFT) calls, an elementwise multiplication in the frequency domain, and an inverse FFT.
The total number of calls can be further reduced to 4 FFTs and 4 inverse FFTs.

Algorithm~\ref{alg:KT-recursive} defines a recursion tree, and in practice we compute this breadth first bottom-up to avoid recursive overhead.
This also allows the FFT operations to be batched and computed in parallel.
Thus the $d$-th layer of the algorithm (starting from the leaves) performs $\frac{n}{2^d}$ FFT computations of size $2^{d+1}$.

This completes the proof of Theorem~\ref{thm:algo}.

We note several optimizations that are useful for implementation:
\begin{enumerate}
  \item The polynomial $\ve_n^T (\vI - \vA_i X)^{-1} \ve_1$ for $i = 0, 1$ are
  in fact monomials, which can be shown inductively.
  To use the notation of Algorithm~\ref{alg:KT-recursive}, $S_0[1, 1]$,
  $S_1[1, 1]$, and $L[1, 1]$ are monomials.
  Therefore the polynomial multiplication with $S_0[1, 1]$ and $S_1[1, 1]$ can
  be done directly by coefficient-wise multiplication instead of using the FFT.

  \item
  \label{enum:sum_across_subproblems}
  We don't need the polynomials $\vu_0^T (\vI - \vA_0 X)^{-1} \vv_0$ and
  $\vu_1^T (\vI - \vA_1 X)^{-1} \vv_1$ separately, we only need their sum.
  To use the notation of Algorithm~\ref{alg:KT-recursive}, we don't need
  $S_0[0, 0]$ and $S_1[0, 0]$ separately, we only need their sum.
  In fact, by tracing the algorithm from the leaves of the recursion tree to the
  root, we see that across the same depth $d$, only the sum of the terms
  $S_0[0, 0] + S_1[0, 0]$ of the $n/2^d$ subproblems is required, not the
  individual terms.
  Therefore, when computing polynomial multiplication at depth $d$, we can
  perform the FFT of size $2^{d+1}$ and the pointwise multiplication, then sum
  across the $n/2^d$ problems before performing the inverse FFT of size
  $2^{d+1}$.
\end{enumerate}

\paragraph{Efficient batching with respect to input vector and rank.}
Optimization~\ref{enum:sum_across_subproblems} is especially important for
efficient multiplication with respect to batched input $\vu$ and higher rank
$\vv$.
Suppose that $\vu$ has size $n \times b$ and there are $r$ vectors
$\vv_1, \dots, \vv_r$, and we wish to compute
$\sum_{i=1}^r \mathcal{K}(\vA, \vv_i)^T \vu$.
Naively performing Algorithm~\ref{alg:KT-recursive} on each of the $b$ inputs
and each of the $r$ vectors then summing the results, takes $O(br n \log^2 n)$ time.
The bottleneck of the algorithm is the polynomial multiplication
$S_1[0, 1] \cdot S_0[1, 0]$.
At depth $d$, there are $n/2^d$ subproblems, and in each of those, $S_1[0, 1]$
consists of $b$ polynomials of degree at most $2^d$, while $S_0[1, 0]$ consists
of $r$ polynomials of degree at most $2^d$.
If we apply optimization~\ref{enum:sum_across_subproblems}, we first perform
the FFT of size $2^{d+1}$ on these $(b + r) n/2^d$ polynomials, then pointwise
multiplication in the frequency domain to get $br n/2^d$ vectors of size
$2^{d+1}$ each.
Next we sum across the $n/2^d$ problems to get $br$ vectors, before
performing the inverse FFT of size $2^{d+1}$ to these $br$ vectors.
The summing step allows us to reduce the number of inverse FFTs from $br n/2^d$
to $br$.
The total running time over all depth $d$ is then
$O((b + r) n \log^2 n + br n \log n)$ instead of $O(br n \log^2 n)$.

\paragraph{Krylov multiplication}

\citet{desa2018two} do not provide explicit algorithms for the more complicated problem of multiplication by $\cK(\vA,\vv)$, instead justifying the existence of such an algorithm with the \textbf{transposition principle}.
Traditional proofs of the transposition principle use circuit based arguments involving reversing arrows in the arithmetic circuit defining the algorithm's computation graph~\cite{transposition}.

Here we show an alternative simple way to implement the transpose algorithm using any automatic differentiation (AD) implementation, which all modern deep learning frameworks include.
AD states that for any computation, its derivative can be computed with only a constant factor more operations~\cite{griewank2008evaluating}.
\begin{proposition}[Transposition Principle]
  \label{prop:transposition}
  If the matrix $\vM \in \R^{n \times n}$ admits matrix-vector multiplication by any vector in $N$ operations, then $\vM^T$ admits matrix-vector multiplication in $O(N+n)$ operations.
\end{proposition}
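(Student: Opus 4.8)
The plan is to realize $\vM^T \vu$ as the gradient of a carefully chosen scalar-valued function whose forward evaluation reuses the given fast algorithm for $\vM$, and then to invoke the cheap-gradient principle of reverse-mode automatic differentiation to bound the cost of that gradient.

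First I would fix an arbitrary target vector $\vu \in \R^n$ and define the scalar function
\[
  f(\vx) = \vu^T \vM \vx = \langle \vu, \vM\vx \rangle.
\]
Evaluating $f$ costs $N + O(n)$ operations: by hypothesis $\vM\vx$ is computed in $N$ operations, and the inner product with $\vu$ contributes $O(n)$ more. The crucial observation is that $f$ is linear in $\vx$, with gradient
\[
  \nabla_\vx f(\vx) = \vM^T \vu,
\]
independent of $\vx$. Thus any procedure producing $\nabla_\vx f$ simultaneously produces the desired product $\vM^T\vu$, and the problem reduces to bounding the cost of this single gradient.

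Next I would apply the fundamental theorem of reverse-mode AD~\cite{griewank2008evaluating}: for any arithmetic computation of a scalar output, the entire gradient with respect to the inputs can be evaluated using at most a constant factor more operations than the forward pass. Since the forward evaluation of $f$ is an arithmetic circuit over the inputs $\vx$ (the fast multiplication routine for $\vM$ followed by one inner product, all built from additions, subtractions, and multiplications), its gradient $\nabla_\vx f = \vM^T\vu$ is computable in $O(N+n)$ operations. As $\vu$ was arbitrary, this exhibits an $O(N+n)$-operation matrix-vector multiplication algorithm for $\vM^T$, which is exactly the claim.

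The main subtlety to address is ensuring that the hypothesis genuinely supplies a differentiable straight-line program, so that the cheap-gradient theorem applies: the assumed $N$-operation algorithm for $\vM\vx$ must be expressible as a computation graph over $\vx$, with the entries (or defining parameters) of $\vM$ treated as constants. This holds because matrix-vector multiplication algorithms are composed of elementary differentiable operations; the FFT-based routines of Theorem~\ref{thm:algo} in particular satisfy this. I would also remark that the hidden constant is the standard small overhead of reverse mode, which is precisely why the transpose algorithm can be obtained automatically in any modern framework rather than by manually reversing the arithmetic circuit as in classical circuit-based proofs~\cite{transposition}.
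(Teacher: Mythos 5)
Your proof is correct and takes essentially the same route as the paper's: both evaluate the scalar $\vu^T \vM \vx$ in $N + O(n)$ operations and invoke the cheap-gradient theorem of reverse-mode automatic differentiation~\cite{griewank2008evaluating} to obtain $\vM^T\vu = \nabla_\vx\left(\vu^T\vM\vx\right)$ at only constant-factor overhead. The sole detail in the paper's proof absent from yours is the minor practical optimization of choosing $\vx = \vzero$ to construct the forward graph, which affects neither correctness nor the asymptotic bound.
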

\begin{proof}
  Note that for any $\vx$ and $\vy$, the scalar $\vy^T \vM \vx = \vy \cdot (\vM \vx)$ can be computed in $N + n$ operations.

  The statement follows from applying reverse-mode AD to compute $\vM^T \vy = \frac{\partial}{\partial \vx}(\vy^T \vM \vx)$.

  Additionally, the algorithm can be optimized by choosing $\vx = \vzero$ to construct the forward graph.
\end{proof}

To perform the optimization mentioned in Proposition~\ref{prop:transposition}, and avoid needing second-order derivatives when computing backprop for gradient descent, we provide an explicit \href{https://github.com/HazyResearch/structured-nets}{implementation} of non-transpose Krylov multiplication $\cK(\vA,\vv)$. This was found by using Proposition~\ref{prop:transposition} to hand-differentiate Algorithm~\ref{alg:KT-recursive}.

Finally, we comment on multiplication by the \LDRTD{} class.
Desa et al.\citep{desa2018two} showed that these matrices also have asymptotically efficient multiplication algorithms, of the order $O(r n\log^3 n)$ operations.
However, these algorithms are even more complicated and involve operations such as inverting matrices of polynomials in a modulus.
Practical algorithms for this class similar to the one we provide for \LDRSD{} matrices require more work to derive.

\subsection{Displacement rank and equivariance}
\label{sec:group_rep}
Here we discuss in more detail the connection between LDR and equivariance. One line of work~\cite{cohen2016group,DBLP:conf/icml/KondorT18} has used the group representation theory formalization of equivariant maps, in which the model is equivariant to a set of transformations which form a group $G$. Each transformation $g \in G$ acts on an input $x$ via a corresponding linear map $T_g$. For example, elements of the rotation group in two and three dimensions, $SO(2)$ and $SO(3)$, can be represented by 2D and 3D rotation matrices respectively.
Formally, a feature map $\Phi$ is equivariant if it satisfies
\begin{equation}
  \label{eq:equivariance}
  \Phi(T_g x) = T_g'(\Phi(x))
\end{equation}
for representations $T, T'$ of $G$~\cite{cohen2016group,DBLP:conf/icml/KondorT18}. This means that perturbing the input $x$ by a transformation $g \in G$  before computing the map $\Phi$ is equivalent to first finding the features $\Phi$ and then applying the transformation.
Group equivariant convolutional neural networks (G-CNNs) are a particular realization where $\Phi$ has a specific form $G\to\R^d$, and $T, T'$ are chosen in advance~\cite{cohen2016group}.
We use the notation $\mathbf{\Phi}$ to distinguish our setting, where the input $x$ is finite dimensional and $\Phi$ is linear.

\begin{proposition}
\label{prop:equivariance}
If $\mathbf{\Phi}$ has displacement rank 0 with respect to invertible $\mathbf{A},\mathbf{B}$, then $\mathbf{\Phi}$ is equivariant as defined by (\ref{eq:equivariance}).
\end{proposition}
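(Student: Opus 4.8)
The plan is to unfold the displacement-rank-zero hypothesis into a single intertwining relation and then manufacture a group together with two representations for which $\mathbf{\Phi}$ becomes equivariant in the sense of~\eqref{eq:equivariance}. Since $\mathbf{\Phi}$ is a linear map, \eqref{eq:equivariance} reads $\mathbf{\Phi} T_g = T_g' \mathbf{\Phi}$ for every $g \in G$ (applying both sides to an arbitrary input $x$). The hypothesis is that $\nabla_{\vA,\vB}[\mathbf{\Phi}] = \vA\mathbf{\Phi} - \mathbf{\Phi}\vB = 0$, which is exactly the commuting relation $\vA\mathbf{\Phi} = \mathbf{\Phi}\vB$. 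So the task reduces to exhibiting a group whose action on the input and output spaces reproduces precisely this kind of relation for $\mathbf{\Phi}$.

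First I would take $G = \mathbb{Z}$ (the integers under addition) and define the representations $T_k = \vB^k$ on the input space and $T_k' = \vA^k$ on the output space. These are well-defined for all $k \in \mathbb{Z}$ because $\vA$ and $\vB$ are invertible, and they are genuine representations since $T_{k+\ell} = \vB^{k+\ell} = \vB^k \vB^\ell = T_k T_\ell$ and likewise for $T'$. It then suffices to verify $\mathbf{\Phi}\vB^k = \vA^k\mathbf{\Phi}$ for every integer $k$. The case $k=0$ is trivial and $k=1$ is the hypothesis itself; the inductive step simply chains the relation, $\mathbf{\Phi}\vB^{k+1} = (\mathbf{\Phi}\vB^k)\vB = \vA^k(\mathbf{\Phi}\vB) = \vA^{k+1}\mathbf{\Phi}$, and negative powers follow after left-multiplying $\vA\mathbf{\Phi} = \mathbf{\Phi}\vB$ by $\vA^{-1}$ and right-multiplying by $\vB^{-1}$ to obtain $\mathbf{\Phi}\vB^{-1} = \vA^{-1}\mathbf{\Phi}$, then inducting downward. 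Once this holds, $\mathbf{\Phi}(T_k x) = T_k'(\mathbf{\Phi}(x))$ for all $x$ and all $k$, which is exactly~\eqref{eq:equivariance}.

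These calculations are routine once the setup is fixed, so the step I expect to require the most care is the choice of group and representations, not any of the algebra. The delicate point is to present $T$ and $T'$ as representations of \emph{one and the same} abstract group rather than two unrelated families of matrix powers; taking $G = \mathbb{Z}$ sidesteps any complication about the multiplicative orders of $\vA$ and $\vB$, which need not agree, since $k \mapsto \vB^k$ and $k \mapsto \vA^k$ are automatically homomorphisms out of $\mathbb{Z}$. One could alternatively use the cyclic group generated by $\vB$ when it has finite order, but this would force matching the order of $\vA$ as well, so the free choice $G = \mathbb{Z}$ is the cleanest formalization that makes $\mathbf{\Phi}$ equivariant.
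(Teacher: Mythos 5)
Your proposal is correct and takes essentially the same route as the paper's proof in Appendix~\ref{sec:group_rep}: both take $G = \mathbb{Z}$ with representations given by the integer powers of the invertible operators, and both reduce the claim to extending $\vA\mathbf{\Phi} = \mathbf{\Phi}\vB$ to $\vA^k\mathbf{\Phi} = \mathbf{\Phi}\vB^k$ for all $k \in \mathbb{Z}$. If anything, you are slightly more careful than the paper, spelling out the induction for negative powers and correctly assigning $T_k = \vB^k$ to the input side and $T_k' = \vA^k$ to the output side.
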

\begin{proof}
  Note that if $\mathbf{A}\mathbf{\Phi} = \mathbf{\Phi} \mathbf{B}$ for invertible matrices $\mathbf{A},\mathbf{B}$ (i.e. if a matrix $\mathbf{\Phi}$ has displacement rank 0 with respect to $\mathbf{A}$ and $\mathbf{B}$), then $\mathbf{A}^i\mathbf{\Phi} = \mathbf{\Phi} \mathbf{B}^i$ also holds for $i \in \mathbb{Z}$. Also note that the set of powers of any invertible matrix forms a cyclic group, where the group operation is multiplication. The statement follows directly from this fact, where the group $G$ is $\mathbb{Z}$, and the representations $T$ and $T'$ of $G$ correspond to the cyclic groups generated by $\mathbf{A}$ and $\mathbf{B}$, respectively consisting of $\vA^i$ and $\vB^i$ for all $i \in \mathbb{Z}$.
\end{proof}

More generally, a feature map $\Phi$ satisfying~\eqref{eq:equivariance} for a set of generators $S = \{g_i\}$ is equivariant with respect to the free group generated by $S$. Proposition~\ref{prop:equivariance} follows from the specific case of a single generator, i.e. $S = \{1\}$. 

\section{Bound on VC dimension and sample complexity}
\label{sec:vc_dim}

In this section we upper bound the VC dimension of a neural network where all
the weight matrices are LDR matrices and the activation functions are piecewise
polynomials.
In particular, the VC dimension is almost linear in the number of parameters,
which is much smaller than the VC dimension of a network with unstructured layers.
The bound on the VC dimension allows us to bound the sample complexity to learn
an LDR network that performs well among LDR networks.
This formalizes the intuition that compressed parameterization reduces the
complexity of the class.

\paragraph{Neural network model}
Consider a neural network architecture with $W$ parameters, arranged in $L$
layers.
Each layer $l$, has output dimension $n_{l}$, where $n_0$ is the dimension of
the input data and the output dimension is $n_{L} = 1$.
For $l = 1, \dots, L$, let $\vi_l \in \mathbb{R}^{n_l}$ be the input to the $l$-th
layer.
The input to the $(l+1)$-th layer is exactly the output of the $l$-th layer.
The activation functions $\phi_l$ are piecewise polynomials with at most $p + 1$
pieces and degree at most $k \geq 1$.
The input to the first layer is the data $\vi_1 = \vx \in \mathbb{R}^{n_1}$, and the
output of the last layer is a real number $i_{L+1} \in \mathbb{R}$.
The intermediate layer computation has the form:
\begin{equation*}
  i_{l+1} = \phi_l(\vM_l \vi_l + \vb_l) \quad \text{(applied elementwise)}, \qquad \text{where } \vM_l
  \in \mathbb{R}^{n_{l-1} \times n_{l}},\ \vb_l \in \mathbb{R}^{n_{l}}.
\end{equation*}
We assume the activation function of the final layer is the identity.

Each weight matrix $\vM_l$ is defined through some set of parameters;
for example, traditional unconstrained matrices are parametrized by their entries,
and our formulation~\eqref{eq:reconstruction} is parametrized by the entries of some
operator matrices $\vA_l, \vB_l$ and low-rank matrix $\vG_l\vH_l^T$.
We collectively refer to all the parameters of the neural network (including the biases $b_l$)
as $\theta \in \mathbb{R}^{W}$, where $W$ is the number of parameters.

\paragraph{Bounding the polynomial degree}
The crux of the proof of the VC dimension bound is that the entries of $\vM \in
\mathbb{R}^{n \times m}$
are polynomials in terms of the entries of its parameters ($\vA$, $\vB$, $\vG$, and $\vH$).
of total degree at most $c_1 m^{c_2}$ for universal constants $c_1,c_2$.
This allows us to bound the total degree of all of the layers and apply Warren's
lemma to bound the VC dimension.

We will first show this for the specific class of matrices that we use, where the matrix $\vM$
is defined through equation~\eqref{eq:reconstruction}.

\begin{lemma}
  Suppose that $\vM \in \R^{m \times m}$ is defined as
  \begin{equation*}
    \vM = \sum_{i=1}^r \cK(\vA,\vg_i)\cK(\vB^T,\vh_i).
  \end{equation*}
  Then the entries of $\vM$ are polynomials of the entries of $\vA,\vB,\vG,\vH$ with total degree at most $2m$.
  \label{lem:degree_bound_ldr}
\end{lemma}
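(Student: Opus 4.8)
The plan is to track the polynomial degree of each intermediate quantity as it is assembled from the parameter entries, starting from the matrix powers $\vA^j$ and propagating the degree through the Krylov columns, the matrix product, and finally the sum over $i$. I would treat every entry of $\vA, \vB$ (the operators) and of $\vG, \vH$ (whose columns are $\vg_i, \vh_i$) as a formal indeterminate, and bound the total degree of each entry of $\vM$ as a polynomial in these indeterminates.

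First I would show by induction on $j$ that each entry of $\vA^j$ is a homogeneous polynomial of degree $j$ in the entries of $\vA$: since $[\vA^j]_{pq} = \sum_{s} [\vA^{j-1}]_{ps}[\vA]_{sq}$, the claim follows from the degree-$(j-1)$ bound on $\vA^{j-1}$ together with the fact that each $[\vA]_{sq}$ has degree $1$. Consequently the $j$-th Krylov column $\vA^j \vg_i$ has entries $[\vA^j\vg_i]_p = \sum_q [\vA^j]_{pq}[\vg_i]_q$ of total degree at most $j+1$: degree $j$ from $\vA^j$ and degree $1$ from the entry of $\vg_i$. Because $\cK(\vA,\vg_i)$ is $m \times m$, its columns range over $j = 0, \dots, m-1$, so every entry of $\cK(\vA,\vg_i)$ is a polynomial of total degree at most $m$ in the entries of $\vA$ and $\vg_i$. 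The identical argument gives that every entry of $\cK(\vB^T,\vh_i)$ has total degree at most $m$ in the entries of $\vB$ and $\vh_i$ (transposition does not change entrywise degrees, so the exact placement of the transpose in the statement is immaterial here).

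Next I would pass to the product. Each entry of $\cK(\vA,\vg_i)\cK(\vB^T,\vh_i)$ is a sum of products of one entry drawn from each factor, hence a sum of products of two polynomials each of total degree at most $m$; its total degree is therefore at most $2m$. Summing over $i = 1, \dots, r$ only takes a sum of such polynomials and cannot raise the degree, so every entry of $\vM$ is a polynomial of total degree at most $2m$ in the entries of $\vA, \vB, \vG, \vH$, which is the claim.

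The argument is essentially bookkeeping and I do not anticipate a genuine obstacle; the one place demanding care is pinning down the maximal power of $\vA$ (and of $\vB^T$) that actually appears. Since the Krylov matrix has $m$ columns indexed $0$ through $m-1$, the largest power is $\vA^{m-1}$, contributing degree $m-1$, and exactly one factor of degree $1$ is added by the vector $\vg_i$, giving $m$ rather than $m+1$; carrying this ``$+1$'' correctly through both Krylov factors is what yields the clean bound $2m$ instead of a looser estimate. The same reasoning, tracking the constants $c_1, c_2$ of the surrounding discussion, is what would let one state the more general $c_1 m^{c_2}$ degree bound for the broader operator classes referenced in the text.
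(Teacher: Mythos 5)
Your proposal is correct and follows essentially the same route as the paper's proof: bound each entry of $\vA^k$ by degree $k$, conclude that entries of $\cK(\vA,\vg_i)$ and $\cK(\vB^T,\vh_i)$ have total degree at most $m$, and observe that the product contributes degree at most $2m$ while the sum over $i$ does not increase it. Your version simply makes the induction on matrix powers and the $(m-1)+1$ bookkeeping explicit where the paper states them without proof.
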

\begin{proof}
  Since $\cK(\vA, \vg_i) = \begin{bmatrix} \vg_i & \vA \vg_i & \dots & \vA^{m-1}
    \vg_i \end{bmatrix}$, and each entry of $\vA^k$ is a polynomial of the entries
  of $\vA$ with total degree at most $k$, the entries of $\cK(\vA, \vg_i)$ are
  polynomials of the entries of $A$ and $\vg_i$ with total degree at most $m$.
  Similarly the entries of $\cK(\vB^T, \vh_i)$ are polynomials of the entries of
  $\vB$ and $\vh_i$ with total degree at most $m$.
  Hence the entries of $\cK(\vA, \vg_i) \cK(\vB^T, \vh_i)$ are polynomials of the
  entries of $\vA, \vB, \vG, \vH$ with total degree at most $2m$.
  We then conclude that the entries of $\vM$ are polynomials of the entries of $\vA,
  \vB, \vG, \vH$ with total degree at most $2m$.
\end{proof}

\begin{lemma}
  Suppose that the LDR weight matrices $M_l$ of a neural network have entries
  that are polynomials in their parameters with total degree at most
  $c_1 n_{l-1}^{c_2}$ for some universal constants $c_1, c_2 \geq 0$.
  For a fixed data point $\vx$, at the $l$-th layer of a neural network with LDR
  weight matrices, each entry of $\vM_l \vi_l + \vb_l$ is a piecewise polynomial
  of the network parameters $\theta$, with total degree at most $d_l$, where
  \begin{equation*}
    d_0 = 0, \qquad d_l = k d_{l-1} + c_1 n_{l-1}^{c_2} \quad \text{for }\ l = 1, \dots, L.
  \end{equation*}
  Thus entries of the output $\phi_l(\vM_l \vi_l + \vb_l)$ are piecewise polynomials of
  $\theta$ with total degree at most $k d_l$.
  Moreover,
  \begin{equation}
    d_l \leq c_1 k^{l-1} \sum_{j=0}^{l-1} n_j^{c_2}.
    \label{eqn:bound_d_l}
  \end{equation}
  \label{lem:degree_bound_nn}
\end{lemma}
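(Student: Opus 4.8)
The plan is to prove both the recurrence and the closed-form bound \eqref{eqn:bound_d_l} by induction on the layer index $l$, tracking how the total polynomial degree (in the parameters $\theta$) of each intermediate quantity accumulates as signal passes through the network. The right inductive quantity to carry is the degree of the \emph{input} $\vi_l$ to layer $l$: since $\vi_l = \phi_{l-1}(\vM_{l-1}\vi_{l-1} + \vb_{l-1})$, if the pre-activation $\vM_{l-1}\vi_{l-1} + \vb_{l-1}$ has entries that are piecewise polynomials of total degree at most $d_{l-1}$, then composing with the activation $\phi_{l-1}$ (piecewise polynomial of degree at most $k$) yields entries of $\vi_l$ of total degree at most $k d_{l-1}$ on each polynomial piece. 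The convention that $\vi_1 = \vx$ is a fixed constant supplies the base case: $\vi_1$ has degree $0 = k d_0$, consistent with $d_0 = 0$.

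For the inductive step I would expand $(\vM_l\vi_l + \vb_l)_i = \sum_j (\vM_l)_{ij}(\vi_l)_j + (\vb_l)_i$. By the standing assumption each $(\vM_l)_{ij}$ is a polynomial of degree at most $c_1 n_{l-1}^{c_2}$ (which, for the class in Lemma~\ref{lem:degree_bound_ldr}, holds with the bound $2m$), and by the inductive hypothesis each $(\vi_l)_j$ has degree at most $k d_{l-1}$, so each product $(\vM_l)_{ij}(\vi_l)_j$ has degree at most $c_1 n_{l-1}^{c_2} + k d_{l-1}$. The bias entry $(\vb_l)_i$ is a single coordinate of $\theta$ and contributes degree $1$, which is dominated, so the whole pre-activation entry has total degree at most $k d_{l-1} + c_1 n_{l-1}^{c_2} = d_l$. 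Applying $\phi_l$ once more multiplies the degree by at most $k$, which both establishes the claimed $k d_l$ bound on the output $\phi_l(\vM_l\vi_l + \vb_l)$ and feeds the hypothesis for layer $l+1$.

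To obtain the closed form, I would unroll the linear recurrence $d_l = k d_{l-1} + c_1 n_{l-1}^{c_2}$ with $d_0 = 0$, giving $d_l = c_1 \sum_{j=0}^{l-1} k^{l-1-j} n_j^{c_2}$, and then use $k \ge 1$ together with $j \ge 0$ to bound $k^{l-1-j} \le k^{l-1}$, which yields $d_l \le c_1 k^{l-1}\sum_{j=0}^{l-1} n_j^{c_2}$. The computation is routine; the only point genuinely requiring care is the bookkeeping of the piecewise structure. Because the activation breakpoints depend on $\theta$, the map $\theta \mapsto \phi_l(\vM_l\vi_l+\vb_l)$ is piecewise polynomial with regions of parameter space carved out by the pre-activations crossing thresholds, so I would state each degree claim as holding on every such region rather than globally. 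This localized formulation is exactly what the subsequent application of Warren's lemma will consume, so making it explicit here is the main thing to get right.
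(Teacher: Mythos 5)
Your proof is correct and takes essentially the same route as the paper's: induction on the layer index with the degree recurrence $d_l = k d_{l-1} + c_1 n_{l-1}^{c_2}$, composition with the degree-$k$ piecewise polynomial activation to get the $k d_l$ bound, and the closed form~\eqref{eqn:bound_d_l} via $k \geq 1$ (the paper proves the closed form by induction rather than by unrolling, an immaterial difference). Your explicit handling of the degree-$1$ bias term and of the per-region piecewise bookkeeping is, if anything, slightly more careful than the paper's own writeup.
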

By Lemma~\ref{lem:degree_bound_ldr}, Lemma~\ref{lem:degree_bound_nn} applies to
the specific class of matrices that we use, for $c_1 = 2$ and $c_2 = 1$.
As we will see, it also applies to very general classes of structured matrices.

\begin{proof}
  We induct on $l$.
  For $l = 1$, since $\vi_1 = \vx$ is fixed, the entries of $\vM_1$ are polynomials of
  $\theta$ of degree at most $c_1 n_0^{c_2}$, and
  so the entries of $\vM_1 \vi_1 + \vb_1$ are polynomials of $\theta$ with total degree at
  most $d_1 = c_1 n_0^{c_2}$.
  As $\phi$ is a piecewise polynomials of degree at most $k$, each entry the output
  $\phi_1(\vM_1 \vi_1 + \vb_1)$ is a piecewise polynomial of $\theta$ with total degree at
  most $2n_0k$.
  The bound~\eqref{eqn:bound_d_l} holds trivially.

  Suppose that the lemma is true for some $l - 1 \geq 1$.
  Since the entries of $\vi_l$ are piecewise polynomials of $\theta$ with total degree
  at most $k d_{l-1}$ and entries of $\vM_{l}$ are polynomials of $\theta$ with total
  degree at most $c_1 n_{l-1}^{c_2}$, the
  entries of $\vM_l \vi_l + \vb_l$ are piecewise polynomials of $\theta$ with total degree
  at most $d_l = k d_{l-1} + c_1 n_{l-1}^{c_2}$.
  Thus $\phi_l(\vM_l \vi_l + \vb_l)$ have entries that are piecewise polynomials of $\theta$
  with total degree at most $k d_l$.

  We can bound
  \begin{equation*}
    d_l = k d_{l-1} + c_1 n_{l-1}^{c_2} \leq k c_1 k^{l-2} \sum_{j=0}^{l-2} n_j^{c_2} + c_1 n_{l-1}^{c_2} \leq
    c_1 k^{l-1} \sum_{j=0}^{l-1} n_j^{c_2},
  \end{equation*}
  where we have used the fact that $k \geq 1$, so $c_1 n_{l-1}^{c_2} \leq c_1 k^{l-1}
  n_{l-1}^{c_2}$.
  This concludes the proof.
\end{proof}

\paragraph{Bounding the VC dimension}
Now we are ready to bound the VC dimension of the neural network.

\begin{theorem}
  For input $x \in \mathcal{X}$ and parameter $\theta \in \mathbb{R}^{W}$, let $f(x, \theta)$
  denote the output of the network.
  Let $\cF$ be the class of functions $\{ x \to f(x, \theta) \colon \theta \in \mathbb{R}^{W}
  \}$.
  Denote $\sign \cF \defeq \{ x \to \sign f(x, \theta) \colon \theta \in \mathbb{R}^W \}$.
  Let $W_l$ be the number of parameters up to layer $l$ (i.e., the total number
  of parameters in layer $1, 2, \dots, l$).
  Define the effective depth as
  \begin{equation*}
    \bar{L} \defeq \frac{1}{W} \sum_{l=1}^{L} W_l,
  \end{equation*}
  and the total number of computation units (including the input dimension) as
  \begin{equation*}
    U \defeq \sum_{l=0}^{L} n_l.
  \end{equation*}
  Then
  \begin{equation*}
    \VCdim(\sign \mathcal{F}) = O(\bar{L} W \log (p U) + \bar{L} L W \log k).
  \end{equation*}
  In particular, if $k = 1$ (corresponding to piecewise linear networks) then
  \begin{equation*}
    \VCdim(\sign \mathcal{F}) = O(\bar{L} W \log (pU)) = O(L W \log W).
  \end{equation*}
  \label{thm:vc_bound}
\end{theorem}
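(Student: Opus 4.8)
The plan is to follow the classical route for bounding the VC dimension of networks with piecewise-polynomial activations (Goldberg--Jerrum; Bartlett--Maiorov--Meir), whose only nonstandard ingredient here is the degree bound of Lemma~\ref{lem:degree_bound_nn}. Fix $m$ inputs $\vx_1, \dots, \vx_m$ that are shattered by $\sign \cF$. I would upper bound the number of distinct sign vectors $(\sign f(\vx_1, \theta), \dots, \sign f(\vx_m, \theta))$ realizable as $\theta$ ranges over $\R^W$; shattering forces this count to be at least $2^m$, and solving the resulting inequality for $m$ yields the bound. The main tool is Warren's lemma: a family of $N$ polynomials of degree at most $D$ in $W$ variables realizes at most $(cND/W)^W$ sign patterns (for $N \ge W$ and an absolute constant $c$).

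The core step is a layer-by-layer partition of parameter space. Starting from the single region $\R^W$, I would iteratively refine a partition $\mathcal{P}_l$ so that after processing layers $1, \dots, l$, within each region every neuron of those layers lies on a fixed polynomial piece of its activation; consequently, on each such region the layer-$l$ preactivation at each data point is a genuine polynomial in $\theta$ of total degree at most $d_l$ by Lemma~\ref{lem:degree_bound_nn}. Which of the $\le p+1$ pieces of $\phi_l$ a neuron occupies is determined by the signs of $p$ such polynomials, so refining a single region across all $m$ points and all $n_l$ neurons involves $m n_l p$ polynomials of degree $\le d_l$ in the $W_l$ relevant variables (those of layers $1,\dots,l$). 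Warren's lemma then bounds the number of subregions by $(c\, m n_l p\, d_l / W_l)^{W_l}$, and multiplying over the layers gives $|\mathcal{P}_L| \le \prod_{l=1}^L (c\, m n_l p\, d_l / W_l)^{W_l}$. Using $W_l$ rather than $W$ variables at each stage is precisely what produces the effective-depth quantity $\bar L W = \sum_l W_l$ in the exponent instead of $LW$.

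Within each region of $\mathcal{P}_L$ the $m$ outputs $f(\vx_i, \theta)$ are polynomials of degree $\le d_L$ in the $W$ variables (the final activation being the identity), contributing a further factor $(c\, m\, d_L / W)^W$ by Warren's lemma. Setting $2^m$ below the product of these counts, taking logarithms, and substituting $d_l \le c_1 k^{l-1}\sum_{j=0}^{l-1} n_j^{c_2} \le c_1 k^{l-1} U^{c_2}$ from Lemma~\ref{lem:degree_bound_nn}, I would bound each logarithmic term by $O(\log m + \log(pU) + l\log k)$. Invoking $\sum_l W_l = \bar L W$ and $\sum_l l W_l \le L \bar L W$ collapses the sum to
\[
  m \le O(\bar L W)\log_2 m + O\big(\bar L W \log(pU) + \bar L L W \log k\big).
\]
Solving this via the implication $m \le A \log_2 m + B \Rightarrow m = O(A\log A + B)$, and noting $\log(\bar L W) = O(\log W) = O(\log(pU))$ since the parameter count is polynomial in the number of units, gives $\VCdim(\sign\cF) = O(\bar L W \log(pU) + \bar L L W \log k)$; the case $k=1$ annihilates the second term and, with $\bar L \le L$, reduces to $O(LW\log W)$.

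I expect the principal obstacle to be the careful bookkeeping in the partition argument: ensuring the degrees $d_l$ compound correctly through the layers, applying Warren's lemma with the correct variable count ($W_l$, not $W$) at each stage so that the exponents sum to $\bar L W$, and then performing the final simplification so that the transcendental inequality resolves exactly into the stated form. The LDR-specific content is entirely isolated in Lemmas~\ref{lem:degree_bound_ldr} and~\ref{lem:degree_bound_nn} (the degree being $O(n_{l-1})$ rather than $1$, owing to the powers $\vA^i, \vB^i$ in the Krylov reconstruction~\eqref{eq:reconstruction}); once that degree bound is in hand the remaining machinery is standard and insensitive to the matrix structure, which is why the argument extends verbatim to the general classes of~\cite{desa2018two}.
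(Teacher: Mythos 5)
Your proposal is correct and follows essentially the same argument as the paper's proof: a layer-by-layer refinement of the parameter space via Warren's lemma applied to $mn_lp$ polynomials of degree $d_l$ in $W_l$ variables (yielding the $\bar{L}W$ exponent), the degree recursion of Lemma~\ref{lem:degree_bound_nn}, a final Warren application for the output layer, and inversion of the resulting growth inequality. The only differences are cosmetic bookkeeping: the paper moves the logarithm inside via Jensen's inequality and bounds $\sum_l n_l d_l \le c_1 U^{c_2+2}k^L$ before invoking Lemma~\ref{lem:growthtovc}, whereas you bound each logarithmic term separately and use the equivalent standard implication $m \le A\log_2 m + B \Rightarrow m = O(A\log A + B)$.
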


We adapt the proof of the upper bound from \citet{bartlett1999almost,
  bartlett2017nearly}.
The main technical tool is Warren's lemma \cite{warren1968lower}, which bounds
the growth function of a set of polynomials.
We state a slightly improved form here from \citet[Theorem
8.3]{anthony2009neural}.

\begin{lemma}\label{lem:warren}
	Let $p_1, \ldots, p_m$ be polynomials of degree at most $d$ in $n \leq m$ variables.
    Define
    \[
    	K \defeq |\{(\sign(p_1(\vx)), \ldots, \sign(p_m(\vx)) : \vx \in \R^n \}|,
    \]
    i.e., $K$ is the number of possible sign vectors given by the polynomials.
    Then $K \leq 2(2emd/n)^n$.
\end{lemma}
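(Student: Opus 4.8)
The statement to prove is Warren's lemma: a bound of $2(2emd/n)^n$ on the number $K$ of distinct sign vectors $(\sign p_1(\vx), \dots, \sign p_m(\vx))$, $\vx \in \R^n$, realized by $m \ge n$ polynomials of degree $\le d$. The plan is to translate this counting problem into a question about the number of \emph{cells} (connected components of fixed sign type) of the arrangement formed by the real hypersurfaces $Z_i = \{\vx : p_i(\vx) = 0\}$, and then to bound the number of cells by charging each one to a low-dimensional algebraic feature — an intersection of exactly $n$ of the hypersurfaces — whose count is controlled by Bézout's theorem. The $1/n^n$ savings that distinguishes this bound from the trivial $(md)^n$ estimate comes precisely from the observation that, in $\R^n$, only $n$ hypersurfaces can simultaneously bound a cell at a generic vertex, so cells get charged to $n$-subsets of the $p_i$ rather than to all of them.

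First I would reduce to counting full-dimensional cells under a genericity assumption. A strict sign vector $s \in \{-1,+1\}^m$ is realized iff the open set $U_s = \{\vx : s_i p_i(\vx) > 0 \ \forall i\}$ is nonempty; distinct $s$ give disjoint open sets, each both open and closed in the complement $\Omega = \R^n \setminus \bigcup_i Z_i$ (the sign function is locally constant on $\Omega$), hence each a union of connected components of $\Omega$. Thus the number of realized strict sign vectors is at most the number of connected components $b_0(\Omega)$. Since each $U_s$ is open, a sufficiently small perturbation of the coefficients of the $p_i$ cannot destroy any realized strict pattern, so it suffices to prove the bound after perturbing the $p_i$ into \emph{general position}, where the $Z_i$ are smooth and any $k$ of them meet transversally in codimension $k$ (in particular no $n+1$ share a point). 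Sign vectors containing zeros are handled at the end and absorbed into the leading factor of $2$.

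Next I would bound $b_0(\Omega)$ by the charging argument. Fix a generic linear functional $\ell$. For each \emph{bounded} component $C$ of $\Omega$, $\ell$ attains its minimum on $\overline{C}$ at a point $v$ which, by genericity, is a \emph{vertex} of the arrangement, lying on exactly $n$ of the hypersurfaces, say $Z_{i_1}, \dots, Z_{i_n}$. At such a vertex the $n$ active polynomials carve out $2^n$ local sign orthants, so at most $2^n$ cells can have $v$ as their $\ell$-minimizer; thus $b_0(\Omega) \le 2^n \cdot (\#\text{vertices})$. Each vertex lies in $Z_{i_1} \cap \dots \cap Z_{i_n}$ for some $n$-subset, and by Bézout a transversal intersection of $n$ hypersurfaces of degree $\le d$ has at most $d^n$ real points, so $\#\text{vertices} \le \binom{m}{n} d^n$. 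Using $\binom{m}{n} \le (em/n)^n$ (valid since $m \ge n$), the bounded cells number at most
\[
  2^n \binom{m}{n} d^n \le 2^n (em/n)^n d^n = (2emd/n)^n.
\]
To account for unbounded components and for sign vectors containing zeros, I would cap the arrangement with one extra large sphere $\{\|\vx\|^2 = \rho^2\}$ (turning unbounded cells bounded while increasing $m$ by one and leaving the degree $\le \max(d,2)$), or argue by induction on $n$ via intersection with a sphere at infinity; a standard accounting shows these contributions at most double the count, giving $K \le 2(2emd/n)^n$.

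The main obstacle is the charging step and the genericity it requires: making rigorous that a generic $\ell$ has a unique minimizer on each bounded cell, that this minimizer is an $n$-fold vertex, and that each vertex is charged by at most $2^n$ cells, all depend on a Morse-theoretic, Sard-type general-position argument that must be arranged so the perturbation neither creates nor destroys cells in a way that breaks the inequality. The Bézout estimate ($d^n$ real points per transversal $n$-fold intersection) and the binomial bound $\binom{m}{n} \le (em/n)^n$ are routine once general position is in hand; the delicate part is the reduction to general position together with the bookkeeping for unbounded and degenerate (zero-containing) sign patterns, which is where the factor of $2$ and the hypothesis $m \ge n$ enter.
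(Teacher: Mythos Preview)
The paper does not prove this lemma at all: it is quoted as a known result (a slightly sharpened form of Warren's theorem) with a citation to \citet[Theorem~8.3]{anthony2009neural} and to Warren's original paper, and is then used as a black box inside the proof of Theorem~\ref{thm:vc_bound}. So there is no in-paper proof to compare your proposal against.

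For what it is worth, your sketch is along the classical lines of Warren's argument---reduce the sign-pattern count to a connected-component count on the complement of $\bigcup_i Z_i$, pass to general position, charge bounded cells to $n$-fold intersection vertices via a generic linear functional, and bound the number of vertices by $\binom{m}{n} d^n$ through B\'ezout and $\binom{m}{n} \le (em/n)^n$. That outline is sound, and you correctly identify the genuinely delicate steps (the general-position reduction, the Morse-type argument that each bounded cell has a vertex $\ell$-minimizer shared by at most $2^n$ cells, and the accounting for unbounded cells and for sign vectors containing zeros). Your treatment of those last two points is only asserted rather than carried out; in the Anthony--Bartlett presentation the zeros are handled by a separate perturbation trick (replacing each $p_i$ by $p_i - \varepsilon$ and $p_i + \varepsilon$) rather than being folded into the factor of $2$ as you suggest, but either bookkeeping can be made to give the stated bound. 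Since the paper treats the lemma as a citation, a fully worked proof here would go beyond what the paper itself supplies.
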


\begin{proof}[Proof of Theorem~\ref{thm:vc_bound}]
  Fixed some large integer $m$ and some inputs $\vx_1, \dots, \vx_m$.
  We want to bound the number of sign patterns that the neural network can
  output for the set of input $\vx_1, \dots, \vx_m$:
  \begin{equation*}
    K \defeq \abs{ \{ (\sign f(\vx_1, \theta), \dots, \sign f(\vx_m, \theta)) \colon \theta \in \mathbb{R}^W \}}.
  \end{equation*}
  We want to partition the parameter space $\mathbb{R}^{W}$ so that for a fixed
  $\vx_j$, the output $f(\vx_j, \theta)$ is a polynomial on each region in the partition.
  Then we can apply Warren's lemma to bound the number of sign patterns.
  Indeed, for any partition $\cS = \{ P_1, \dots, P_N \}$ of the parameter space
  $\mathbb{R}^W$, we have
  \begin{equation}
    K \leq \sum_{j=1}^{N} \abs{ \{ (\sign f(\vx_1, \theta), \dots, \sign f(\vx_m, \theta)) \colon \theta
      \in P_j \}}.
    \label{eqn:sum_partition}
  \end{equation}

  We construct the partitions iteratively layer by layer, through a sequence
  $\cS_0, \cS_1, \dots, \cS_{L-1}$ of successive refinements, satisfying two
  properties:
  \begin{enumerate}
    \item $\abs{\cS_0} = 1$ and for each $1 \leq l \leq L - 1$,
    \begin{equation*}
      \abs{\cS_l} \leq \abs{\cS_{l-1}} 2 \left( \frac{2 e m p n_l d_l}{W_l} \right)^{W_l},
    \end{equation*}
    where $n_l$ is the dimension of the output of the $l$-th layer, $d_l$ is the
    bound on the total degree of $\vM_l \vi_l + \vb_l$ as piecewise polynomials of $\theta$
    as defined in Lemma~\ref{lem:degree_bound_nn}, and $W_l$ is the number of
    parameters up to layer $l$ (i.e., the total number of parameters in layer
    $1, 2, \dots, l$).

    \item For each $l = 0, \dots, L-1$, for each element $S$ of $\cS_{l}$, for
    each fixed data point $\vx_j$ (with $j = 1, \dots, m$), the entries of the
    output $\phi_l(\vM_l \vi_l + \vb_l)$ when restricted to $S$ are polynomials of $\theta$
    with total degree at most $k d_{l-1}$.
  \end{enumerate}

  We can define $\cS_0=\mathbb{R}^{W}$, which satisfies property 2, since at
  layer 1, the entries of $\vi_1 = \vx_j$ (for fixed $\vx_j$) are polynomials of $\theta$
  of degree $d_0 = 0$.

  Suppose that we have constructed $\cS_0,\dots,\cS_{l-1}$, and we want to
  define $\cS_l$.
  For any $h\in[n_l],j\in[m],$ and $S\in \cS_{l-1}$, let $p_{h,\vx_j,S}(\theta) = (\vM_l \vi_l +
  \vb_l)_h|_S$ be the $h$-th entry of $\vM_l \vi_l + \vb_l$ restricted to the region
  $S$.
  By the inductive hypothesis, for each $S \in \cS_{l-1}$, the entries of $i_l$
  when restricted to $S$ are polynomials of $\theta$ of total degree at most $k
  d_{l-1}$.
  Thus by Lemma~\ref{lem:degree_bound_nn}, the entries of $\vM_l \vi_l + \vb_l$ when
  restricted to $S$ are polynomials of $\theta$ with total degree at most $k d_{l-1}
  + c_1 n_{l-1}^{c_2} = d_l$, and depends on at most $W_{l}$ many variables.

  Since the activation function is piecewise polynomial with at most $p$ pieces,
  let $\{t_1,\dots,t_p\}$ be the set of breakpoints.
  For any fixed $S\in\cS_{l-1}$, by Lemma~\ref{lem:warren}, the polynomials
  \begin{equation*}
    \left\{p_{h,\vx_j,S}(\theta)-t_i : h \in [n_l], j\in[m], i \in[p] \right\}
  \end{equation*}
  can have at most
  \begin{equation*}
    \Pi \defeq 2 \left( \frac{2 e (n_l m p) d_l}{W_l} \right)^{W_l}
  \end{equation*}
  distinct sign patterns when $\theta \in \mathbb{R}^W$.
  We can then partition $\mathbb{R}^W$ into this many regions so that within
  each region, all these polynomials have the same signs.
  Intersecting all these regions with $S$ yields a partition of $S$ into at most
  $\Pi$ subregions.
  Applying this for all $S\in\cS_{l-1}$ gives a partition $\cS_{l}$ that satisfies
  the property 1.

  Fix some $S'\in \cS_n$.
  When $\theta$ is restricted to $S'$, by construction, all the polynomials
  \begin{equation*}
    \left\{p_{h,\vx_j,S}(\theta)-t_i : h \in [n_l], j\in[m], i \in[p] \right\}
  \end{equation*}
  have the same sign.
  This means that the entries of $\vM_l \vi_l + \vb_l$ lie between two
  breakpoints of the activation function, and so the entries of the output
  $\phi_l(\vM_l \vi_l + \vb_l)$ are fixed polynomials in $W_l$ variables of degree
  at most $k d_l$.

  By this recursive construction, $\cS_{L-1}$ is a partition of $\mathbb{R}^W$
  such that for $S\in\cS_{L-1}$ the network output for any input $\vx_j$ is a
  fixed polynomial of $\theta \in S$ of degree at most $k d_{L-1} + c_1
  n_{L-1}^{c_2} = d_L$ (recall that we assume the activation function of the
  final layer is the identity).
  Hence we can apply Lemma~\ref{lem:warren} again:
  \begin{equation*}
    \abs{ \{ ( \sign f(\vx_1, \theta), \dots, \sign f(\vx_m, \theta)) \colon \theta \in S \}} \le 2
    \left(\frac{2em k d_L}{ W_L}\right)^{ W_L}.
  \end{equation*}
  By property 1, we can bound the size of $\cS_{L-1}$:
  \begin{equation*}
    \abs{\cS_{L}} \leq \prod_{l=1}^{L-1} 2\left(\frac{2emn_l p d_l}{ W_l} \right)^{ W_l}.
  \end{equation*}
  Combining the two bounds along with equation~\eqref{eqn:sum_partition} yields
  \begin{equation*}
    K  \le \prod_{l=1}^{L}  2\left(\frac{2empn_l d_l}{ W_l} \right)^{ W_l}.
  \end{equation*}
  We can take logarithm and apply Jensen's inequality, with $\bar W \defeq
  \sum_{l=1}^{L} W_l$:
  \begin{align*}
    \log_2 K
    &\leq L + \sum_{l=1}^{L} W_l \log_2 \frac{2 em p n_l d_l}{W_l} \\
    &= L + \bar W \sum_{l=1}^{L} \frac{W_l}{\bar W} \log_2 \frac{2 em p n_l d_l}{W_l} \\
    &\leq L + \bar W \log_2 \left( \sum_{l=1}^{L} \frac{W_l}{\bar W}\frac{2em p n_l
      d_l}{W_l} \right) && \text{(Jensen's inequality)}\\
    &= L + \bar{W} \log_2 \frac{2em p \sum_{l=1}^L n_l d_l}{\bar{W}}.
  \end{align*}
  We can bound $\sum n_l d_l$ using the bound on $d_l$ from
  Lemma~\ref{lem:degree_bound_nn}:
  \begin{equation*}
    \sum_{l=1}^{L} n_l d_l \leq \sum_{l=1}^{L} n_l c_1 k^{l-1} \sum_{j=0}^{l-1} n_j^{c_2} \leq L U c_1
    k^{L-1} U^{c_2} \leq c_1 U^{c_2+2} k^{L},
  \end{equation*}
  where we used the fact that $L \leq U$.
  Thus
  \begin{equation*}
    \log_2 K \leq L + \bar{W} \log_2 \frac{2 c_1 em p U^{2 + c_2} k^L}{\bar{W}}.
  \end{equation*}

  To bound the VC-dimension, recall that by definition, if $\VCdim(\sign \cF) =
  m$ then exists $m$ data points $\vx_1, \dots, \vx_m$ such that the output of the
  model can have $2^n$ sign patterns.
  The bound on $\log_2 K$ then implies
  \begin{equation*}
    \VCdim(\sign \mathcal{F}) \leq L + \bar{W} \log_2 \frac{2 c_1 e p U^{2+c_2} k^L
      \VCdim(\sign \mathcal{F})}{\bar{W}}.
  \end{equation*}
  We then use Lemma~\ref{lem:growthtovc} below, noting that $2 c_1 ep U^{2+c_2}
  k^L \geq 16$, to conclude that
  \begin{align*}
    \VCdim(\sign \cF) \le L + \bar{W} \log_2(2 c_1 ep U^{2+c_2} k^L \log_2 (2 c_1 ep U^{2+c_2} k^L))
    = O(\bar L W \log (pU) + \bar L L W \log k),
  \end{align*}
  completing the proof.

\end{proof}

A bound on the VC dimension immediate yields a bound on the sample complexity of
learning from this class of neural networks with LDR matrices~\cite{vapnik1998statistical}.
\begin{corollary}
  The class of neural network with LDR matrices as weights and piecewise linear
  activation is $(\epsilon, \delta)$-PAC-learnable with a sample of size
  \begin{equation*}
    O \left( \frac{L W \log W + \log \frac{1}{\delta}}{\epsilon} \right).
  \end{equation*}
\end{corollary}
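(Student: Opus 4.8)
The plan is to obtain this corollary as an immediate consequence of the VC dimension bound already established in Theorem~\ref{thm:vc_bound}, combined with the classical conversion from VC dimension to PAC sample complexity. No new structural facts about LDR matrices are needed; the work has all been done in bounding the polynomial degrees and invoking Warren's lemma.

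First I would specialize Theorem~\ref{thm:vc_bound} to piecewise linear activations. These correspond to degree $k = 1$, which is exactly the branch of Theorem~\ref{thm:vc_bound} giving $\VCdim(\sign \cF) = O(\bar L W \log(pU)) = O(L W \log W)$: the term $\bar L L W \log k$ vanishes when $k=1$, and using $\bar L \le L$ (since every $W_l \le W$) together with $\log(pU) = O(\log W)$ (the number of breakpoints $p$ is constant and the total number of units $U$ is polynomial in the network size) collapses the bound to the stated $O(L W \log W)$. This is precisely the quantity reported in Theorem~\ref{thm:vc}.

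Next I would invoke the standard VC-dimension sample complexity theorem~\cite{vapnik1998statistical}: any hypothesis class $\mathcal{H}$ of $\{0,1\}$-valued functions with $\VCdim(\mathcal{H}) = d$ is $(\epsilon,\delta)$-PAC-learnable, in the realizable setting via empirical risk minimization, from a sample of size $O\left( \frac{1}{\epsilon}\left( d + \log\frac{1}{\delta} \right) \right)$. Applying this to $\mathcal{H} = \sign \cF$ with $d = O(L W \log W)$ yields
\[
  O\left( \frac{L W \log W + \log\frac{1}{\delta}}{\epsilon} \right),
\]
which is the claimed bound.

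The only point requiring care — and it is minor rather than a genuine obstacle — is matching the precise form of the classical guarantee. The textbook realizable-PAC bound sometimes carries an extra $\log(1/\epsilon)$ multiplying the VC term, while the agnostic version replaces $1/\epsilon$ by $1/\epsilon^2$. I would therefore appeal to the tightened distribution-free form in which the VC dimension enters linearly and $1/\epsilon$ appears to the first power, so that the statement matches exactly. Since everything else is a direct substitution of the bound from Theorem~\ref{thm:vc_bound}, there is no further technical difficulty.
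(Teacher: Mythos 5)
Your proposal is correct and follows exactly the route the paper takes: the paper derives this corollary in one line by combining the $k=1$ case of Theorem~\ref{thm:vc_bound} (which gives $\VCdim(\sign\mathcal{F}) = O(LW\log W)$) with the standard VC-dimension-to-sample-complexity conversion cited from \citet{vapnik1998statistical}. Your added care about using the form of the realizable PAC bound in which the VC dimension enters linearly in $1/\epsilon$ (rather than the classical version carrying an extra $\log(1/\epsilon)$ factor, or the agnostic $1/\epsilon^2$ rate) is in fact more precise than the paper, which elides this point entirely.
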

Since the number of parameters $W$ is around the square root of the number of
parameters of a network with unstructured layers (assuming fixed rank
of the LDR matrices), the sample complexity of LDR networks is much smaller than
that of general unstructured networks.

\begin{lemma}[Lemma 16 of \cite{bartlett2017nearly}]
  \label{lem:growthtovc}
  Suppose that $2^m \leq 2^t (mr/w)^w$ for some $r\geq16$ and $m \geq w\geq t \geq 0$.
  Then, $m \leq t + w \log_2 (2r \log_2 r)$.
\end{lemma}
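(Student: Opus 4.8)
The plan is to linearize the hypothesis, reduce everything to a single scalar inequality of the form $u - \log_2 u \le \text{const}$, and then invert that inequality using the monotonicity of $x \mapsto x - \log_2 x$. First I would take $\log_2$ of both sides of $2^m \le 2^t(mr/w)^w$ to obtain
\begin{equation*}
  m \le t + w\log_2\!\left(\frac{mr}{w}\right).
\end{equation*}
Dividing through by $w$ and writing $u \defeq m/w$ (so $u \ge 1$, since $m \ge w$), this becomes $u - t/w \le \log_2 u + \log_2 r$, i.e.
\begin{equation*}
  g(u) \defeq u - \log_2 u \le \log_2 r + \frac{t}{w}.
\end{equation*}

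Next I would define the target value $u^\star \defeq \log_2(2r\log_2 r) + t/w = 1 + \log_2 r + \log_2\log_2 r + t/w$ and argue by contradiction that $u \le u^\star$, which upon multiplying by $w$ gives exactly $m \le t + w\log_2(2r\log_2 r)$. The function $g$ is strictly increasing on $[1/\ln 2, \infty)$, and since $r \ge 16$ forces $\log_2 r \ge 4$ and $\log_2\log_2 r \ge 2$, we have $u^\star \ge 7 > 1/\ln 2$. If $u > u^\star$, monotonicity gives $g(u) > g(u^\star)$, so it suffices to establish $g(u^\star) \ge \log_2 r + t/w$; this would contradict the displayed bound on $g(u)$ and force $u \le u^\star$.

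The crux is therefore the single inequality $g(u^\star) \ge \log_2 r + t/w$. Expanding $g(u^\star) = u^\star - \log_2 u^\star$ and cancelling the common $\log_2 r + t/w$ terms, this reduces to $\log_2(2\log_2 r) \ge \log_2 u^\star$, equivalently $2\log_2 r \ge 1 + \log_2 r + \log_2\log_2 r + t/w$. At this point I would invoke the hypothesis $t \le w$ (hence $t/w \le 1$) to reduce the goal to $\log_2 r \ge 2 + \log_2\log_2 r$. Writing $\rho \defeq \log_2 r$, the map $\rho \mapsto \rho - \log_2\rho$ is increasing for $\rho \ge 1/\ln 2$ and equals exactly $2$ at $\rho = 4$; since $r \ge 16$ gives $\rho \ge 4$, the map is at least $2$ throughout, which is precisely the claim.

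I expect this final constant-chasing step to be the main obstacle, because the bound is tight at the boundary $r = 16,\ t = w$ (where $\rho - \log_2\rho = 2$ with equality), so all the intermediate inequalities must be kept non-strict and tracked carefully rather than loosened. The only other point requiring a little care is confirming that both $u$ and $u^\star$ lie in the increasing regime of $g$ under the contradiction hypothesis, which is handled by the crude estimate $u^\star \ge 7$ noted above; the small-$u$ case is vacuous since then $u \le u^\star$ already.
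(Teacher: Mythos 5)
The paper does not actually prove this lemma: it is imported verbatim as Lemma~16 of \citet{bartlett2017nearly} and used as a black box in the proof of Theorem~\ref{thm:vc_bound}, so there is no internal proof to compare yours against. Judged on its own, your argument is correct. The reduction $g(u) \defeq u - \log_2 u \le \log_2 r + t/w$ with $u = m/w$ follows correctly from taking logarithms, and the contradiction step is sound: under the hypothesis $u > u^\star$ you have $u > u^\star \ge 7 > 1/\ln 2$, so the strict monotonicity of $g$ on $[1/\ln 2,\infty)$ applies on the relevant interval, and the case $u \le 7$ is indeed vacuous. The algebra in the crux also checks out: $u^\star - (\log_2 r + t/w) = 1 + \log_2\log_2 r = \log_2(2\log_2 r)$, so $g(u^\star) \ge \log_2 r + t/w$ is equivalent to $2\log_2 r \ge u^\star$, which after $t/w \le 1$ reduces to $\rho - \log_2 \rho \ge 2$ for $\rho = \log_2 r \ge 4$ — true with equality at $\rho = 4$, and your observation that the whole chain is tight at $r=16$, $t=w$ is accurate (there $u^\star = 8$, $g(8) = 5 = \log_2 r + t/w$, matching the conclusion $m \le t + 7w$ exactly), confirming that none of your inequalities could have been loosened. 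Two minor points worth making explicit if this were written out in full: you divide by $w$ and form $t/w$, so you are implicitly assuming $w \ge 1$ (the degenerate case $w = t = 0$ forces $m = 0$ under the usual convention $(\cdot)^0 = 1$, and the conclusion holds trivially); and $m > 0$ is needed for $\log_2(mr/w)$ to be defined, with $m = 0$ again trivial.
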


\paragraph{Extension to rational functions.}

We now show that Theorem~\ref{thm:vc_bound} holds for matrices where the entries are rational functions---rather than polynomials---of its parameters, incurring only a constant in the bound.
To define the function class $\sign \mathcal{F}$, we account for the possibility of poles by defining $\sign(a/0) = 0$.

We only need to check that Lemma~\ref{lem:degree_bound_nn} and Lemma~\ref{lem:warren} still hold when polynomials are replaced by rational functions everywhere, and the degree of a rational function is defined as the usual $\deg(a/b) = \max\{\deg a, \deg b\}$.
To show Lemma~\ref{lem:degree_bound_nn} still holds, it suffices that the compositional degree bound $\deg(f \circ g) \le \deg(f)\deg(g)$ holds for rational functions $f,g$, just as in the polynomial case.
To show Lemma~\ref{lem:warren} in the case when $p_i = a_i/b_i$ are rational functions, we note that $\sign(p_i(x)) = \sign(a_i(x)b_i(x))$, and furthermore $\deg(a_ib_i) \le 2\deg(p_i)$.
Appealing to the polynomial version of Lemma~\ref{lem:warren} shows that it holds in the rational function setting with a slightly weaker upper bound $K \le 2(4emd/n)^n$.
This gets converted to a constant factor in the result of Theorem~\ref{thm:vc_bound}.

Next, we extend Lemma~\ref{lem:degree_bound_ldr} by showing that generic LDR matrices have entries which are rational functions of their parameters.
This immediately lets us conclude that neural networks built from any LDR matrices satisfy the VC dimension bounds of Theorem~\ref{thm:vc_bound}.

\begin{lemma}
  If $\vM \in \mathbb{R}^{m \times m}$ satisfies $\vA\vM - \vM\vB = \vG\vH^T$,
  then the entries of $\vM$ are rational functions of the entries
  of $\vA, \vB, \vG, \vH$ with total degree at most $c_1 m^{c_2}$ for some universal
  constants $c_1, c_2 > 0$.
  \label{lem:degree_bound_gen}
\end{lemma}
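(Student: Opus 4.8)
The plan is to turn the Sylvester equation into a single linear system and read off the entries of $\vM$ by Cramer's rule. First I would vectorize the displacement equation $\vA\vM - \vM\vB = \vG\vH^T$. Using the standard identities $\vect(\vA\vM) = (\vI_m \otimes \vA)\vect(\vM)$ and $\vect(\vM\vB) = (\vB^T \otimes \vI_m)\vect(\vM)$ (column-stacking convention), this becomes
\begin{equation*}
  \vS \, \vect(\vM) = \vect(\vG\vH^T), \qquad \vS \defeq \vI_m \otimes \vA - \vB^T \otimes \vI_m,
\end{equation*}
where $\vS$ is an $m^2 \times m^2$ matrix each of whose entries is an affine (degree $\le 1$) function of the entries of $\vA$ and $\vB$.

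Next I would record that $\vS$ is invertible. Since $\vI_m \otimes \vA$ and $\vB^T \otimes \vI_m$ commute, the eigenvalues of $\vS$ are exactly the differences $\lambda_i(\vA) - \lambda_j(\vB)$, which are all nonzero precisely because $\vA$ and $\vB$ have disjoint spectra (the same hypothesis used to guarantee recoverability of $\vM$; c.f.\ Section~\ref{sec:DR-background}). Hence $\vect(\vM) = \vS^{-1}\vect(\vG\vH^T)$, and by Cramer's rule each entry of $\vS^{-1}$ is a cofactor of $\vS$ divided by $\det \vS$. In particular every entry $\vM_{ij}$ is a rational function of the entries of $\vA,\vB,\vG,\vH$ sharing the common denominator $\det \vS$.

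It then remains to bound the degrees. Because the entries of $\vS$ have degree $\le 1$ in $(\vA,\vB)$, the determinant of the $m^2 \times m^2$ matrix $\vS$ has degree $\le m^2$, and each cofactor (an $(m^2-1)\times(m^2-1)$ minor) has degree $\le m^2 - 1$. The entries of the residual $\vG\vH^T$ are bilinear, hence of degree $\le 2$, in the entries of $\vG,\vH$. Writing $\vM_{ij} = (\det \vS)^{-1}\sum_{k,\ell}(\text{cofactor})_{ij,k\ell}\,(\vG\vH^T)_{k\ell}$, the numerator has degree at most $(m^2-1) + 2 = m^2 + 1$ and the denominator degree at most $m^2$. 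Taking the degree of a rational function to be the maximum of numerator and denominator degree, I get a bound of $m^2 + 1 \le 2m^2$, so the claim holds with $c_1 = 2$ and $c_2 = 2$.

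This is essentially a clean application of Cramer's rule, so I do not anticipate a genuine obstacle. The one point requiring care is the degree bookkeeping: namely verifying that $\det \vS$ is a nonzero polynomial so that the rational functions are well defined (with poles handled by the convention $\sign(a/0)=0$ introduced above), and checking that the affine dependence of $\vS$ on $(\vA,\vB)$ forces the $O(m^2)$ determinant degree rather than something larger. Together with the preceding remark that Lemma~\ref{lem:degree_bound_nn} and Lemma~\ref{lem:warren} survive the passage from polynomials to rational functions, this extends the VC dimension bound of Theorem~\ref{thm:vc_bound} to neural networks built from arbitrary LDR weight matrices.
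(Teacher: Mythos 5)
Your proposal is correct and follows essentially the same route as the paper's proof: vectorize the Sylvester equation into the Kronecker system $(\vI\otimes\vA - \vB^T\otimes\vI)\vect(\vM) = \vect(\vG\vH^T)$ and bound degrees through the inverse, with the paper simply citing that the entries of an $n\times n$ matrix inverse are rational of degree $n$ where you derive it explicitly via Cramer's rule. Your added checks (invertibility from disjoint spectra, and the explicit cofactor/determinant degree bookkeeping yielding $c_1=2$, $c_2=2$) are consistent with, and slightly more detailed than, the paper's $n^2+2$ bound.
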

\begin{proof}
  The vectorization of the Sylvester equation $\vA\vM-\vM\vB=\vR$ is $(\vI\otimes\vA - \vB^\top\otimes\vI)\vect(\vM) = \vect(\vR)$, where $\vect{}$ denotes the vectorization operation by stacking a matrix's columns, and $\otimes$ is the Kronecker product.
  Note that the entries of $\vN^{-1}$ for an arbitrary matrix $\vN \in \R^{n \times n}$
  are rational functions with degree $n$ in the entries of $\vN$,
  and $\vR=\vG\vH^\top$ has degree $2$ in the entries of $\vG,\vH$.
  Therefore the entries of
  \[
    \vect(\vM) = (\vI\otimes\vA - \vB^\top\otimes\vI)^{-1}\vect(\vR)
  \]
  have degree $n^2+2$ in the entries of $\vA,\vB,\vG,\vH$.
\end{proof}

Note that many other classes of matrices satisfy this lemma.
For example, a large class of matrices satisfying a property called \emph{low recurrence width}
was recently introduced as a way of generalizing many known structured matrices~\cite{desa2018two}.
The low recurrence width matrices are explicitly defined through a polynomial recurrence and satisfy the bounded degree condition.
Additionally, Lemma~\ref{lem:degree_bound_gen} holds when the parameters $\vA,\vB$ themselves are structured matrices
with entries having polynomial degree in terms of some parameters.
This includes the case when they are quasiseparable matrices, the most general class of LDR previously analyzed~\cite{desa2018two}.

\section{Additional results}
\label{sec:additional-results}

\subsection{Additional baselines and comparisons at multiple budgets}
In Tables~\ref{table:images-extended-shl} and~\ref{table:images-extended-cnn} we compare to baselines at parameter budgets corresponding to both the LDR-TD and LDR-SD classes in the SHL and CNN models.
In Tables~\ref{table:pruning-shl} and~\ref{table:pruning-cnn}, we also compare to two additional baselines, network pruning~\cite{han2015learning} and a baseline used in~\cite{chen2015compressing}, in which the number of hidden units is reduced to meet the parameter budget. We refer to this baseline as RHU ("reduced hidden units").
We show consistent improvements of LDR-SD over both methods at several budgets. We note that unlike the structured matrix methods which provide compression benefits during both training and inference, pruning requires first training the original model, followed by retraining with a fixed sparsity pattern.

\begin{table}[ht!]
  \centering
  \caption{Test accuracy when replacing the hidden layer with structured classes in the \textbf{single hidden layer} architecture, at parameter budgets corresponding to LDR-TD and LDR-SD rank one. Rank is in parentheses. The first group of structured methods (in orange) all have compression factors (relative to a general unstructured layer) of 98 on MNIST-bg-rot and MNIST-noise, and 128 on CIFAR-10 and NORB. The second group of structured methods (in blue) all have compression factors of 196 on MNIST-bg-rot and MNIST-noise, and 256 on CIFAR-10 and NORB.}
  \begin{tabular}{@{}lllll@{}}
    \toprule
    \textbf{Method} &  \textbf{MNIST-bg-rot} & \textbf{MNIST-noise} & \textbf{CIFAR-10} & \textbf{NORB}\\ \midrule
        Unstructured      & 44.08 & 65.15 & 46.03 &  59.83                 \\
        \hhline{=====}
   \cellcolor{orange!25}LDR-TD ($r=1$)  &    \cellcolor{orange!25}\textbf{45.81} & \cellcolor{orange!25}\textbf{78.45} & \cellcolor{orange!25}\textbf{45.33} & \cellcolor{orange!25}\textbf{62.75}                     \\
     \cellcolor{orange!25}Toeplitz-like ~\cite{sindhwani2015structured} ($r=4$)         & \cellcolor{orange!25}42.67 & \cellcolor{orange!25}75.75 & \cellcolor{orange!25}41.78 &  \cellcolor{orange!25}59.38                    \\ 
       \cellcolor{orange!25}Hankel-like ($r=4$)     &     \cellcolor{orange!25}42.23 & \cellcolor{orange!25}73.65 & \cellcolor{orange!25}41.40 & \cellcolor{orange!25}60.09                \\
    \cellcolor{orange!25}Vandermonde-like ($r=4$)      & \cellcolor{orange!25}37.14 & \cellcolor{orange!25}59.80 & \cellcolor{orange!25}33.93 & \cellcolor{orange!25}48.98                   \\
        \cellcolor{orange!25}Low-rank ~\cite{denil2013predicting} ($r=4$)     & \cellcolor{orange!25}35.67 & \cellcolor{orange!25}52.25 & \cellcolor{orange!25}32.28 & \cellcolor{orange!25}43.66                   \\
         \midrule
   \cellcolor{blue!25}LDR-SD ($r=1$)  &    \cellcolor{blue!25}\textbf{44.74} & \cellcolor{blue!25}\textbf{78.80} & \cellcolor{blue!25}\textbf{43.29} & \cellcolor{blue!25}\textbf{63.78}                     \\
     \cellcolor{blue!25}Toeplitz-like ~\cite{sindhwani2015structured} ($r=2$)         & \cellcolor{blue!25}42.07 & \cellcolor{blue!25}74.25 & \cellcolor{blue!25}40.68 & \cellcolor{blue!25}57.27                   \\ 
       \cellcolor{blue!25}Hankel-like ($r=2$)     &     \cellcolor{blue!25}41.01 & \cellcolor{blue!25}71.20 & \cellcolor{blue!25}40.46 & \cellcolor{blue!25}57.95                \\
      \cellcolor{blue!25}Vandermonde-like ($r=2$)      & \cellcolor{blue!25}33.56 & \cellcolor{blue!25}50.85 & \cellcolor{blue!25}28.99 & \cellcolor{blue!25}43.21                   \\
       \cellcolor{blue!25}Low-rank ~\cite{denil2013predicting} ($r=2$)     & \cellcolor{blue!25}32.64 & \cellcolor{blue!25}38.85 & \cellcolor{blue!25}24.93 & \cellcolor{blue!25}37.03                  \\
    \bottomrule
  \end{tabular}
  \label{table:images-extended-shl}
\end{table}

\begin{table}[ht!]
  \centering
  \caption{Test accuracy when replacing the fully-connected layer with structured classes in the \textbf{CNN} architecture, at parameter budgets corresponding to LDR-TD and LDR-SD rank one. Rank is in parentheses. The first group of structured methods (in orange) all have compression factors (relative to a general unstructured layer) of 98 on MNIST-bg-rot and MNIST-noise, and 128 on CIFAR-10 and NORB. The second group of structured methods (in blue) all have compression factors of 196 on MNIST-bg-rot and MNIST-noise, and 256 on CIFAR-10 and NORB.}
  \begin{tabular}{@{}lllll@{}}
    \toprule
    \textbf{Method} &  \textbf{MNIST-bg-rot} & \textbf{MNIST-noise} & \textbf{CIFAR-10} & \textbf{NORB}\\ \midrule
        Fully-connected      & 67.94 & 90.30 & 68.09 & 75.16                \\
        \hhline{=====}
\cellcolor{orange!25}LDR-TD ($r=1$)  &    \cellcolor{orange!25}\textbf{68.79} & \cellcolor{orange!25}\textbf{92.55} & \cellcolor{orange!25}66.63 & \cellcolor{orange!25}\textbf{74.23}                     \\
     \cellcolor{orange!25}Toeplitz-like~\cite{sindhwani2015structured} ($r=4$)         & \cellcolor{orange!25}63.23 & \cellcolor{orange!25}91.60 & \cellcolor{orange!25}67.10 & \cellcolor{orange!25}72.25                   \\ 
       \cellcolor{orange!25}Hankel-like ($r=4$)     &     \cellcolor{orange!25}64.21 & \cellcolor{orange!25}90.80 & \cellcolor{orange!25}\textbf{68.10} & \cellcolor{orange!25}71.23                \\
   \cellcolor{orange!25}Vandermonde-like ($r=4$)      & \cellcolor{orange!25}61.76 & \cellcolor{orange!25}90.40 & \cellcolor{orange!25}63.63 & \cellcolor{orange!25}72.11                   \\
       \cellcolor{orange!25}Low-rank~\cite{denil2013predicting} ($r=4$)     & \cellcolor{orange!25}60.35 & \cellcolor{orange!25}87.30 & \cellcolor{orange!25}60.90 & \cellcolor{orange!25}71.47                   \\
         \midrule
   \cellcolor{blue!25}LDR-SD ($r=1$)  &    \cellcolor{blue!25}\textbf{67.40} & \cellcolor{blue!25}\textbf{92.20} &  \cellcolor{blue!25}65.48 & \cellcolor{blue!25}\textbf{73.63}                     \\
     \cellcolor{blue!25}Toeplitz-like~\cite{sindhwani2015structured} ($r=2$)         & \cellcolor{blue!25}63.63 & \cellcolor{blue!25}91.45 & \cellcolor{blue!25}67.15 & \cellcolor{blue!25}71.64                    \\ 
       \cellcolor{blue!25}Hankel-like ($r=2$)     &     \cellcolor{blue!25}64.08 & \cellcolor{blue!25}90.65 & \cellcolor{blue!25}\textbf{67.49} & \cellcolor{blue!25}71.21                \\
      \cellcolor{blue!25}Vandermonde-like ($r=2$)      & \cellcolor{blue!25}51.38 & \cellcolor{blue!25}86.50 & \cellcolor{blue!25}58.00 & \cellcolor{blue!25}68.08                   \\
      \cellcolor{blue!25}Low-rank~\cite{denil2013predicting} ($r=2$)     & \cellcolor{blue!25}41.91 & \cellcolor{blue!25}71.15 & \cellcolor{blue!25}48.48 & \cellcolor{blue!25}65.34                   \\
    \bottomrule
  \end{tabular}
  \label{table:images-extended-cnn}
\end{table}

\begin{table}[htbp]
\centering

\caption{On the MNIST variants, in the \textbf{single hidden layer} architecture, we compare LDR-SD, pruning~\cite{han2015learning}, and a baseline which reduces the number of hidden units (denoted RHU), at multiple budgets. At each budget, we adjust the number of pruned weights or hidden units to match as closely as possible the parameter budget of LDR-SD. Parameter counts of fully-connected layers for LDR-SD and pruning at ranks 1,2,4,8,12, and 16 are 10986, 12554, 15690, 21962, 28234, and 34506 respectively, and 11126, 12714, 15890, 22242, 28594, 34946 for RHU (for which parameter count cannot be controlled exactly). As shown above, we find that the classification accuracy of LDR-SD consistently exceeds that of both methods.}
\begin{subtable}{.5\textwidth}
\centering
    \begin{tabular}{@{}lllll@{}}
    \toprule
    \textbf{Rank of LDR-SD} &  LDR-SD & Pruning~\cite{han2015learning} & RHU~\cite{chen2015compressing}\\ \midrule
       1    & \textbf{44.74}  & 40.41  & 37.18      \\
  2  &   \textbf{44.46}      & 41.18    & 37.60    \\
   4 &  \textbf{47.72}       & 42.45   & 37.98     \\
   8 &  \textbf{48.76}          & 43.52  & 39.77  \\
   12 &   \textbf{48.90}      &  43.19   & 40.56    \\
   16 &   \textbf{49.51}      & 43.58    & 40.70    \\
    \bottomrule
  \end{tabular}  
\caption{MNIST-bg-rot}
\end{subtable}\hfil \\
\vspace{1em}
\begin{subtable}{.5\textwidth}
\centering
\begin{tabular}{@{}lllll@{}}
    \toprule
  \textbf{Rank of LDR-SD} &  LDR-SD          & Pruning~\cite{han2015learning} & RHU~\cite{chen2015compressing}\\ \midrule
  1                       & \textbf{78.80}   & 67.75                          & 62.85      \\
  2                       &   \textbf{77.95} & 69.35                          & 62.55       \\
  4                       &  \textbf{78.32}  & 68.25                          & 63.40      \\
  8                       &  \textbf{78.63}  & 67.25                          & 64.45   \\
  12                      &   \textbf{78.33} &  67.30                         & 63.85      \\
  16                      &   \textbf{78.08} & 66.95                          & 66.10      \\
    \bottomrule
\end{tabular}
\caption{MNIST-noise}
\end{subtable}
  \label{table:pruning-shl}
\end{table}

\begin{table}[htbp]
\centering
  \caption{On the MNIST variants, in the \textbf{CNN} architecture, we compare LDR-SD, pruning~\cite{han2015learning}, and a baseline which reduces the number of hidden units (denoted RHU), at multiple budgets. At each budget, we adjust the number of pruned weights or hidden units to match as closely as possible the parameter budget of LDR-SD. Parameter counts of fully-connected layers for LDR-SD and pruning at ranks 1,2,4,8,12, and 16 are 11770, 13338, 16474, 22746, 29018, and 35290 respectively, and 11935, 13525, 16705, 23065, 29425, 35785 for RHU (for which parameter count cannot be controlled exactly). 
 As shown above, we find that the classification accuracy of LDR-SD consistently exceeds that of both methods.}
\begin{subtable}{.5\textwidth}
\centering
    \begin{tabular}{@{}lllll@{}}
    \toprule
    \textbf{Rank of LDR-SD} &  LDR-SD & Pruning~\cite{han2015learning} & RHU~\cite{chen2015compressing}\\ \midrule
       1    & \textbf{67.40} & 64.25     & 64.03     \\
  2  &   \textbf{67.53}      & 64.05    & 64.67      \\
   4 &  \textbf{67.96}       & 65.50    & 66.37     \\
   8 &  \textbf{67.21}          & 64.12   & 64.70   \\
   12 &   \textbf{68.54}      &  65.65      & 65.99    \\
   16 &   \textbf{67.00}      & 65.59      & 66.47     \\
    \bottomrule
  \end{tabular} 
\caption{MNIST-bg-rot}
\end{subtable}\hfil \\
\vspace{1em}
\begin{subtable}{.5\textwidth}
\centering
  \begin{tabular}{@{}lllll@{}}
    \toprule
    \textbf{Rank of LDR-SD} &  LDR-SD & Pruning~\cite{han2015learning} & RHU~\cite{chen2015compressing}\\ \midrule
       1    & \textbf{92.20} & 90.80     & 90.95     \\
  2  &   \textbf{92.75}      & 91.65     & 91.00     \\
   4 &  \textbf{91.30}       & 90.60      & 91.25    \\
   8 &  \textbf{91.95}          & 91.05    & 90.65  \\
   12 &   \textbf{92.10}      &  90.00      & 90.85    \\
   16 &   \textbf{93.20}      & 90.55       & 90.40    \\
    \bottomrule
  \end{tabular} 
\caption{MNIST-noise}
\end{subtable}
  \label{table:pruning-cnn}
\end{table}

\subsection{Sample complexity and generalization}
As shown in Tables~\ref{table:sample-complexity-shl} and~\ref{table:sample-complexity-cnn}, we investigated how the performance of the structured and general unstructured fully-connected layers varied with the amount of training data. On the MNIST variants, we trained both the single hidden layer and CNN models with random subsamples of 25\%, 50\%, and 75\% of the training set, with 15\% of the training set used for validation in all settings. 
In addition, in Table~\ref{table:gen-error}, we compare the generalization error of structured classes with an unstructured model, and find that the structured classes have consistently lower generalization error.

\begin{table}[htb]
\centering

  \caption{On the MNIST variants, in the \textbf{single hidden layer} architecture, we show how the number of training samples affects the performance of the unstructured model and the structured classes. Columns correspond to models trained on 25\%, 50\%, 75\% and 100\% of the training data (randomly subsampled). LDR-TD and LDR-SD consistently outperform the structured baselines at the tested subsampling ratios. On MNIST-bg-rot, LDR-TD only needs 75\% of the training data to outperform the unstructured model trained on 100\% of the training data. On MNIST-noise, both LDR-TD and LDR-SD only need 25\% of the training data to outperform the unstructured layer. All are rank one.}

\begin{subtable}{.5\textwidth}
\centering
  \begin{tabular}{@{}lllll@{}}
    \toprule
    \textbf{Method} &  \textbf{25\%} & \textbf{50\%} & \textbf{75\%} & \textbf{100\%}\\ \midrule
       Unstructured     & 34.46 & 38.80 & 43.35 & 44.08          \\
        \midrule
   LDR-TD  &    34.01  & 39.59 & \textbf{44.35} & \textbf{45.81}           \\
   LDR-SD &   \textbf{35.64} & \textbf{39.78} & 42.72 & 44.74                 \\
     Toeplitz-like         & 33.71 & 36.44 & 39.32 & 41.12         \\ 
        Low-rank      & 21.44 & 23.46 & 23.48 & 25.06                \\
    \bottomrule
  \end{tabular}
\caption{MNIST-bg-rot}
\label{subtable:bgrot}
\end{subtable}\hfil
\begin{subtable}{.5\textwidth}
\centering
  \begin{tabular}{@{}lllll@{}}
    \toprule
    \textbf{Method} &  \textbf{25\%} & \textbf{50\%} & \textbf{75\%} & \textbf{100\%}\\ \midrule
        Unstructured     & 59.30 & 61.85 & 65.35 & 65.15             \\
        \midrule
  LDR-TD  &    65.45 & \textbf{74.60} & \textbf{77.45} & 78.45               \\
   LDR-SD &   \textbf{67.90} & 71.15 & 76.95 & \textbf{78.80}                 \\
     Toeplitz-like         & 56.15 & 67.75 & 72.30 &73.95          \\ 
        Low-rank      & 24.25 & 26.20 & 26.85 & 26.40                  \\
    \bottomrule
  \end{tabular}
\caption{MNIST-noise}
\label{subtable:noise}
\end{subtable}
  \label{table:sample-complexity-shl}
\end{table}

\begin{table}[htb]
\centering
  \caption{On the MNIST variants, in the \textbf{CNN} architecture, we show how the number of training samples affects the performance of the unstructured model and the structured classes. Columns correspond to models trained on 25\%, 50\%, 75\% and 100\% of the training data (randomly subsampled). LDR-TD and LDR-SD consistently outperform the structured baselines at the tested subsampling ratios. On MNIST-noise, both LDR-TD and LDR-SD only need 50\% of the training data to outperform the unstructured layer. All are rank one.}
\begin{subtable}{.5\textwidth}
\centering
  \begin{tabular}{@{}lllll@{}}
    \toprule
    \textbf{Method} &  \textbf{25\%} & \textbf{50\%} & \textbf{75\%} & \textbf{100\%}\\ \midrule
       Unstructured    & 54.12 & 62.53 & 67.52 & 67.94            \\
        \midrule
   LDR-TD  &    \textbf{53.66} & \textbf{62.15} & \textbf{67.25} & \textbf{68.79}               \\
   LDR-SD &   50.72 & 61.92 & 65.93 & 67.40                  \\
     Toeplitz-like         & 49.10 & 57.20 & 61.53 & 63.00          \\ 
        Low-rank      & 26.98 & 27.97 & 28.97 & 29.63                  \\
    \bottomrule
  \end{tabular}
\caption{MNIST-bg-rot}
\end{subtable}\hfil
\begin{subtable}{.5\textwidth}
\centering
  \begin{tabular}{@{}lllll@{}}
    \toprule
    \textbf{Method} &  \textbf{25\%} & \textbf{50\%} & \textbf{75\%} & \textbf{100\%}\\ \midrule
      Unstructured    & 81.85 & 88.25 & 89.75 & 90.30            \\
        \midrule
   LDR-TD  &    86.45  & \textbf{91.35} & \textbf{93.00} & \textbf{92.55}               \\
   LDR-SD &   \textbf{86.95} & 90.90 & 91.55 & 92.20                 \\
     Toeplitz-like         & 81.65 & 88.15 & 90.90 & 90.95         \\ 
        Low-rank      & 33.15 & 38.40 & 42.55 & 44.55                 \\
    \bottomrule
  \end{tabular}
\caption{MNIST-noise}
\end{subtable}

  \label{table:sample-complexity-cnn}
\end{table}

\begin{table}[ht!]
  \centering
  \caption{Generalization error for unstructured, LDR-TD, LDR-SD, Toeplitz-like, low-rank classes on the single hidden layer architecture. Consistent with Theorem~\ref{thm:vc}, the structured classes have consistently lower generalization error than the unstructured model. All are rank one.}
  \begin{tabular}{@{}lllll@{}}
    \toprule
    \textbf{Method} &  \textbf{MNIST-bg-rot} & \textbf{MNIST-noise} & \textbf{CIFAR-10} & \textbf{NORB}\\
    \midrule
        Unstructured      & 55.78 & 21.63 & 34.32 & 40.03             \\
    \midrule
   LDR-TD &  13.52 & 11.36 & 7.10 & 9.51                \\
   LDR-SD &   12.87 & 12.65 & 6.29 & 8.68                \\
     Toeplitz-like~\cite{sindhwani2015structured}         & 7.98 & 15.80 & 5.59 & 7.87         \\ 
        Low-rank~\cite{denil2013predicting}      & 8.40 & 0.31 & 0.09 & 2.59                  \\
    \bottomrule
  \end{tabular}
  \label{table:gen-error}
\end{table}

\subsection{Additional visualizations}
In Figure~\ref{fig:visualization-NORB}, we visualize the learned subdiagonal on NORB along with images from the dataset.

\begin{figure}[ht!]
  \centering
  \begin{subfigure}{0.5\linewidth}
    \centering
    \includegraphics[width=0.5\linewidth]{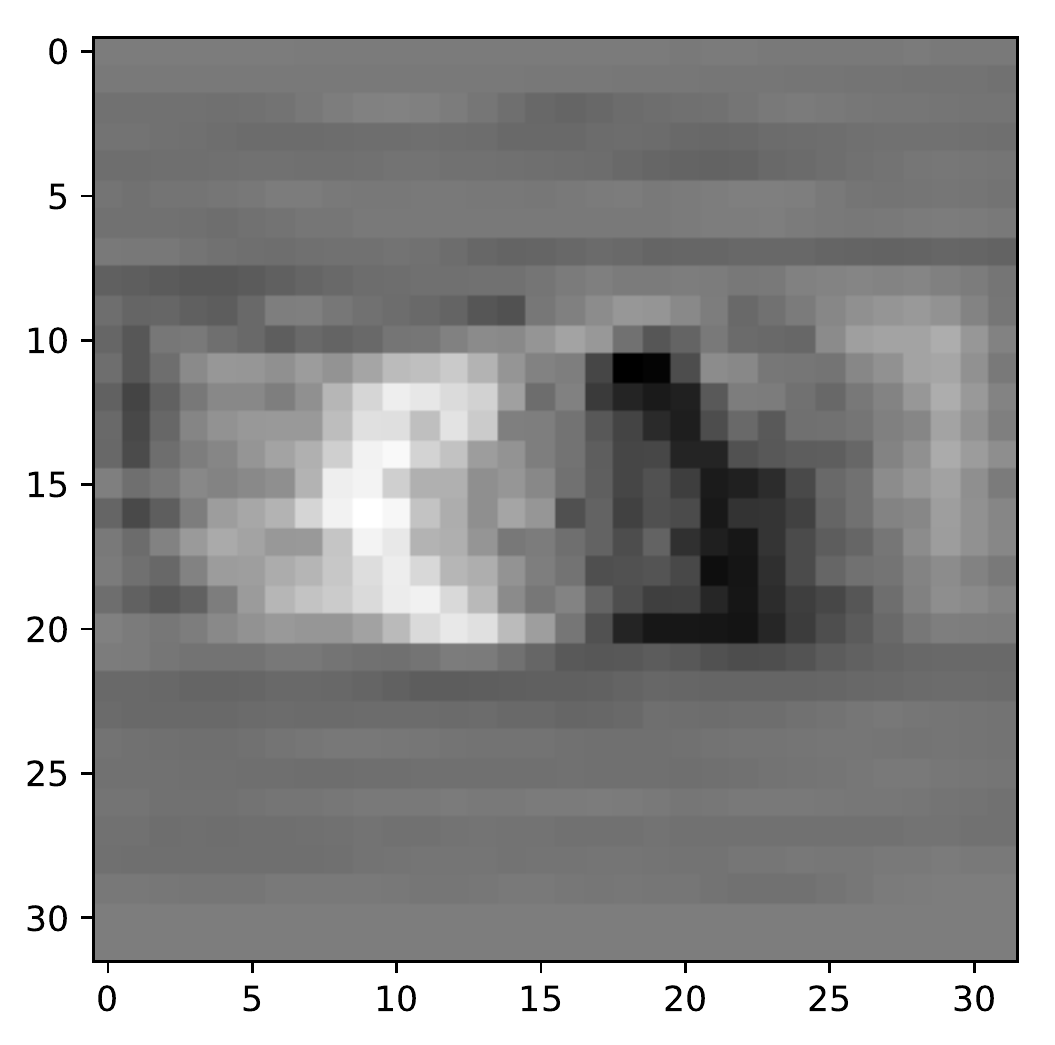}
    \caption{Subdiagonal of $\vB$ (NORB)}
  \end{subfigure}\hfill
  \begin{subfigure}{0.5\linewidth}
    \centering
    \includegraphics[width=0.5\linewidth]{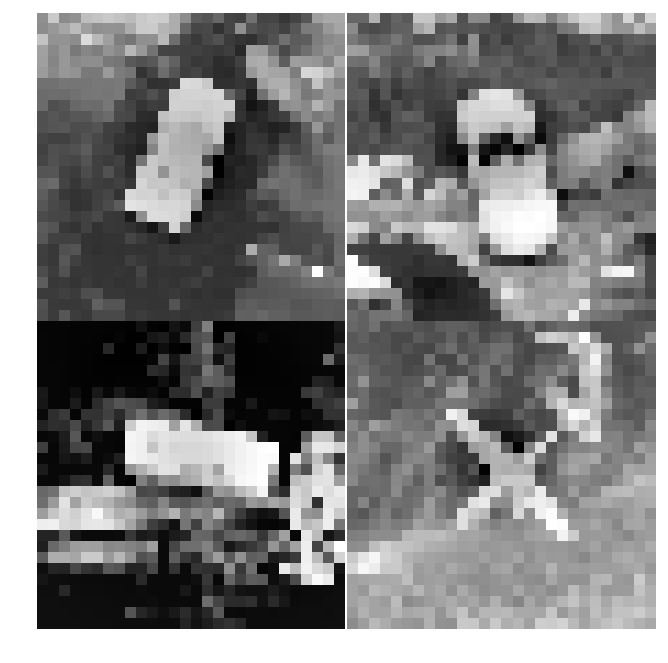}
    \caption{Images from NORB}
  \end{subfigure}\hfill
  \caption{We visualize the learned subdiagonal of the operator $\vB$ and images from the NORB dataset. We observe a centering phenomenon similar to that described in Figure~\ref{fig:visualization}.} 
  \label{fig:visualization-NORB}
\end{figure}

On the MNIST-bg-rot dataset~\cite{larochelle2007empirical}, we note that~\citet{chen2015compressing} also tested several methods on this dataset, including Random Edge Removal~\cite{cirecsan2011high}, Low Rank Decomposition~\cite{denil2013predicting}, Dark Knowledge~\cite{hinton2015distilling}, HashedNets~\cite{chen2015compressing}, and HashedNets with Dark Knowledge, and reported test errors of  73.17, 80.63, 79.03, 77.40, 59.20, and 58.25, where each method had 12406 parameters in the architecture. We found that our LDR-SD class, with 10986 parameters in the architecture, achieved a test error of 55.26, as shown in Table~\ref{table:images-extended-shl}, outperforming all methods evaluated by~\citet{chen2015compressing}.~\citet{sindhwani2015structured} later also tested on this dataset, and reported test errors of 68.4, 62.11, and 55.21 for Fastfood (10202 parameters), Circulant (8634 parameters), and Toeplitz-like, $r=2$ (10986 parameters).  LDR-SD exceeds their reported results for Fastfood and Circulant~\cite{cheng2015exploration}, but not that of Toeplitz-like. We did find that our proposed classes consistently exceeded the performance of our own implementation of Toeplitz-like on this dataset (Table~\ref{table:images}, Figure~\ref{fig:rank-vs-accuracy}, and Tables~\ref{table:images-extended-shl} and~\ref{table:images-extended-cnn}).

\subsection{Rectangles dataset}

We provide an interesting example of a case where LDR-TD and LDR-SD do not exceed the performance of the fixed operator classes in the single hidden layer architecture.
In this simple dataset from~\citet{larochelle2007empirical},
the task is to classify a binary image of a rectangle as having a greater length or width.
We show examples of the dataset in Figure~\ref{fig:rectangles}.
On this dataset, in contrast to the more challenging datasets (MNIST-bg-rot~\citep{larochelle2007empirical}, MNIST-noise~\citep{larochelle2007empirical}, CIFAR-10~\citep{krizhevsky2009learning}, and NORB~\citep{lecun2004learning}) we tested on, every structured class outperforms an unconstrained model (622506 parameters), including the circulant class~\cite{cheng2015exploration} which compresses the hidden layer by 784x, and expanding the class beyond Toeplitz-like does not improve performance.
We hypothesize that this is because the Toeplitz-like class may enforce the right structure, in the sense that it is sufficiently expressive to fit a perfect model on this dataset, but not expansive enough to lead to overfitting.
For example, while the Toeplitz-like operators model approximate shift equivariance (discussed in Section~\ref{sec:equivariance} and Proposition~\ref{prop:equivariance} in Section~\ref{sec:group_rep}),
the additional scaling that subdiagonal operators provide is unnecessary on these binary inputs.
\begin{figure}[h!]
  \centering
\includegraphics[width=\textwidth]{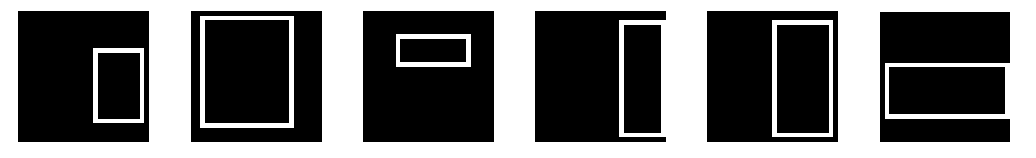}
  \caption{Examples of images from the rectangles dataset~\cite{larochelle2007empirical}.}
  \label{fig:rectangles}
\end{figure}

\begin{table}[ht!]
  \centering
  \caption{Test accuracy when replacing the hidden layer with structured classes on the rectangles dataset~\cite{larochelle2007empirical}. Where applicable, rank ($r$) is in parentheses, and the number of parameters in the architecture is in italics below each method.}
  \begin{tabular}{@{}lllll@{}}
    \toprule
    \textbf{Method} & Test Accuracy\\ \midrule
        Unconstrained      & 91.94               \\
                 & \textit{\textcolor{gray}{622506}}                     \\ 
        \hhline{=====}
   LDR-TD ($r=1$)  &    98.53                  \\
                 & \textit{\textcolor{gray}{14122}}                     \\ 
                 \midrule
   LDR-SD ($r=1$)  &    98.39                  \\
                 & \textit{\textcolor{gray}{10986}}    \\   
	\midrule
     Toeplitz-like ($r=4$)~\cite{sindhwani2015structured}         & \textbf{99.29}          \\ 
                 & \textit{\textcolor{gray}{14122}}                     \\ 
	\midrule
       Hankel-like ($r=4$)     &     97.77           \\
                 & \textit{\textcolor{gray}{14122}}                     \\ 
 	\midrule
    Vandermonde-like ($r=4$)      & 94.11             \\
                 & \textit{\textcolor{gray}{14122}}                     \\ 
 	\midrule
        Low-rank ($r=4$)~\cite{denil2013predicting}     & 92.80                   \\
                 & \textit{\textcolor{gray}{14122}}                     \\ 
	\midrule
      Fastfood~\cite{yang2015deep}     & 92.20                 \\
                 & \textit{\textcolor{gray}{10202}}                     \\ 
	\midrule
      Circulant~\cite{cheng2015exploration}     & 95.58                    \\
                 & \textit{\textcolor{gray}{8634}}                     \\ 
    \bottomrule
  \end{tabular}
  \label{table:rectangle-dataset}
\end{table}

\subsection{Acceleration at inference time}
\label{sec:speed}
We empirically study the acceleration obtained at inference time (on CPU) with our \href{https://github.com/HazyResearch/structured-nets}{implementation} of the algorithms for multiplication by LDR-SD described in Appendix~\ref{sec:algo}. We generated random parameters for each class and ran each multiplication algorithm 1000 times to compare the speedup of each class over an unstructured multiply. Each test was repeated 10 times, and the minimum total runtime over the 10 tests was used for each class.
As shown in Figure~\ref{fig:speed} and Table~\ref{table:speed}, at $n \geq 4096$, our simple Python implementation is 3.34-46.06x faster than the highly optimized unstructured matrix-vector multiply (a BLAS level 2 operation). We also compare with two other structured classes, low-rank and Toeplitz-like, at $r=1,2,4,8,16$. A batch size of one was used in all tests. The time complexity of multiplication by low-rank and Toeplitz-like is $O(nr)$ and $O(nr \log n)$ respectively, compared to $O(nr \log^2 n)$ for LDR-SD.

\begin{comment}
\begin{table}[ht!]
  \centering
  \begin{tabular}{ c | c c c c c }
    \toprule
     &  \multicolumn{5}{c}{Rank} \\ \midrule
    \textbf{$n$} &  1 & 2 & 4 & 8 & 16\\ \midrule
       $2^9$    & X & X & X & X  & X     \\
       $2^{10}$    & X & X & X & X   & X          \\
       $2^{11}$    & X & X & X & X   & X          \\
       $2^{12}$    & X & X & X & X   & X          \\
       $2^{13}$   & X & X & X & X    & X         \\
       $2^{14}$    & X & X & X & X   & X          \\
       $2^{15}$    & X & X & X & X   & X          \\
    \bottomrule
      \end{tabular}
  \caption{Speedup of structured classes over unstructured matrix-vector multiply at inference time. All methods are at rank one.}
  \label{table:speed}
\end{table}
\end{comment}

\begin{table}[htbp]
\centering
\caption{Acceleration of $n \times n$ structured classes over unstructured matrix-vector multiply at inference time. Experimental details are in Appendix~\ref{sec:speed}.}
\begin{subtable}{\textwidth}
  \centering
  \begin{tabular}{ c | c | c | c | c | c }
    \toprule
     &  \multicolumn{5}{c}{Rank} \\ \midrule
     $n$ &  1 & 2 & 4 & 8 & 16\\ \midrule
       $2^9$    & $5.15 \times 10^1$ & $2.43 \times 10^1$ & $2.46 \times 10^1$ & $2.08 \times 10^1$  & $1.81 \times 10^1$     \\
       $2^{10}$    & $1.39 \times 10^2$ & $5.41 \times 10^1$ &  $5.66 \times 10^1$ & $4.62 \times 10^1$   & $3.43 \times 10^1$         \\
       $2^{11}$    & $4.14 \times 10^2$ & $1.60 \times 10^2$ & $1.71 \times 10^2$ & $1.05 \times 10^2$   & $6.90 \times 10^1$          \\
       $2^{12}$    & $2.38 \times 10^3$ & $8.71 \times 10^2$ & $7.46 \times 10^2$  & $4.73 \times 10^2$   & $3.59 \times 10^2$           \\
       $2^{13}$   & $5.96 \times 10^3$ & $1.75 \times 10^3$  & $1.65 \times 10^3$ & $1.13 \times 10^3$    & $8.86 \times 10^2$         \\
       $2^{14}$    & $8.35 \times 10^3$ & $3.44 \times 10^3$  & $3.40 \times 10^3$ & $2.29 \times 10^3$   & $1.74 \times 10^3$          \\
       $2^{15}$    & $1.79 \times 10^4$ &$7.50 \times 10^3$ & $7.53 \times 10^3$ & $4.91 \times 10^3$   & $3.70 \times 10^3$          \\
    \bottomrule
      \end{tabular}
  \caption{Low-rank}\hspace{10em}
\end{subtable}
\begin{subtable}{\textwidth}
  \centering
  \begin{tabular}{ c | c | c | c | c | c }
    \toprule
     &  \multicolumn{5}{c}{Rank} \\ \midrule
     $n$ &  1 & 2 & 4 & 8 & 16\\ \midrule
       $2^9$    & $3.06 \times 10^{-1}$ & $2.60 \times 10^{-1}$ & $2.32 \times 10^{-1}$ & $1.86 \times 10^{-1}$  & $1.61 \times 10^{-1}$     \\
       $2^{10}$    & $7.34 \times 10^{-1}$  & $6.21 \times 10^{-1}$ & $5.18 \times 10^{-1}$ & $4.00 \times 10^{-1}$   & $3.28 \times 10^{-1}$          \\
       $2^{11}$    & $1.90 \times 10^{0}$  & $1.71 \times 10^{0}$ & $1.38 \times 10^{0}$ & $1.08 \times 10^{0}$   & $8.46 \times 10^{-1}$          \\
       $2^{12}$    & $1.23 \times 10^{1}$ & $1.01 \times 10^{1}$ & $7.92 \times 10^{0}$ & $5.97 \times 10^{0}$   & $4.62 \times 10^{0}$          \\
       $2^{13}$   & $3.34 \times 10^{1}$ & $2.73 \times 10^{1}$ & $2.26 \times 10^{1}$ & $1.52 \times 10^{1}$    & $1.23 \times 10^{1}$         \\
       $2^{14}$    & $6.96 \times 10^{1}$ & $5.68 \times 10^{1}$ & $4.19 \times 10^{1}$ &  $3.00 \times 10^{1}$   & $2.26 \times 10^{1}$          \\
       $2^{15}$    & $1.49 \times 10^{2}$ & $1.19 \times 10^{2}$ & $9.07 \times 10^{1}$ & $5.46 \times 10^{1}$   & $3.82 \times 10^{1}$          \\
    \bottomrule
      \end{tabular}
  \caption{Toeplitz-like}\hspace{10em}
\end{subtable}
\begin{subtable}{\textwidth}
  \centering
  \begin{tabular}{ c | c | c | c | c | c }
    \toprule
     &  \multicolumn{5}{c}{Rank} \\ \midrule
    $n$ &  1 & 2 & 4 & 8 & 16\\ \midrule
       $2^9$    & $6.68 \times 10^{-2}$ & $4.63 \times 10^{-2}$ & $4.05 \times 10^{-2}$& $3.10 \times 10^{-2}$  & $2.56 \times 10^{-2}$     \\
       $2^{10}$    & $1.49 \times 10^{-1}$ & $1.20 \times 10^{-1}$ & $9.45 \times 10^{-2}$ & $6.73 \times 10^{-2}$   & $5.24 \times 10^{-2}$          \\
       $2^{11}$    & $4.99 \times 10^{-1}$ & $4.32 \times 10^{-1}$ & $3.02 \times 10^{-1}$ & $1.94 \times 10^{-1}$   & $1.37 \times 10^{-1}$          \\
       $2^{12}$    & $3.34 \times 10^{0}$ & $2.57 \times 10^{0}$ & $1.61 \times 10^{0}$  & $1.06 \times 10^{0}$   & $7.52 \times 10^{-1}$            \\
       $2^{13}$   & $9.71 \times 10^{0}$ & $6.61 \times 10^{0}$ & $4.40 \times 10^{0}$ & $2.46 \times 10^{0}$    & $1.68 \times 10^{0}$          \\
       $2^{14}$    & $2.12 \times 10^{1}$ &$1.41 \times 10^{1}$  & $8.38 \times 10^{0}$ & $4.35 \times 10^{0}$   & $3.00 \times 10^{0}$          \\
       $2^{15}$    & $4.61 \times 10^{1}$ & $2.82 \times 10^{1}$ & $1.60 \times 10^{1}$ & $8.58 \times 10^{0}$   & $5.70 \times 10^{0}$          \\
    \bottomrule
      \end{tabular}
  \caption{LDR-SD}
\end{subtable}\hfil
  \label{table:speed}
\end{table}

\begin{figure}[h!]
  \centering
  \includegraphics[width=0.4\textwidth]{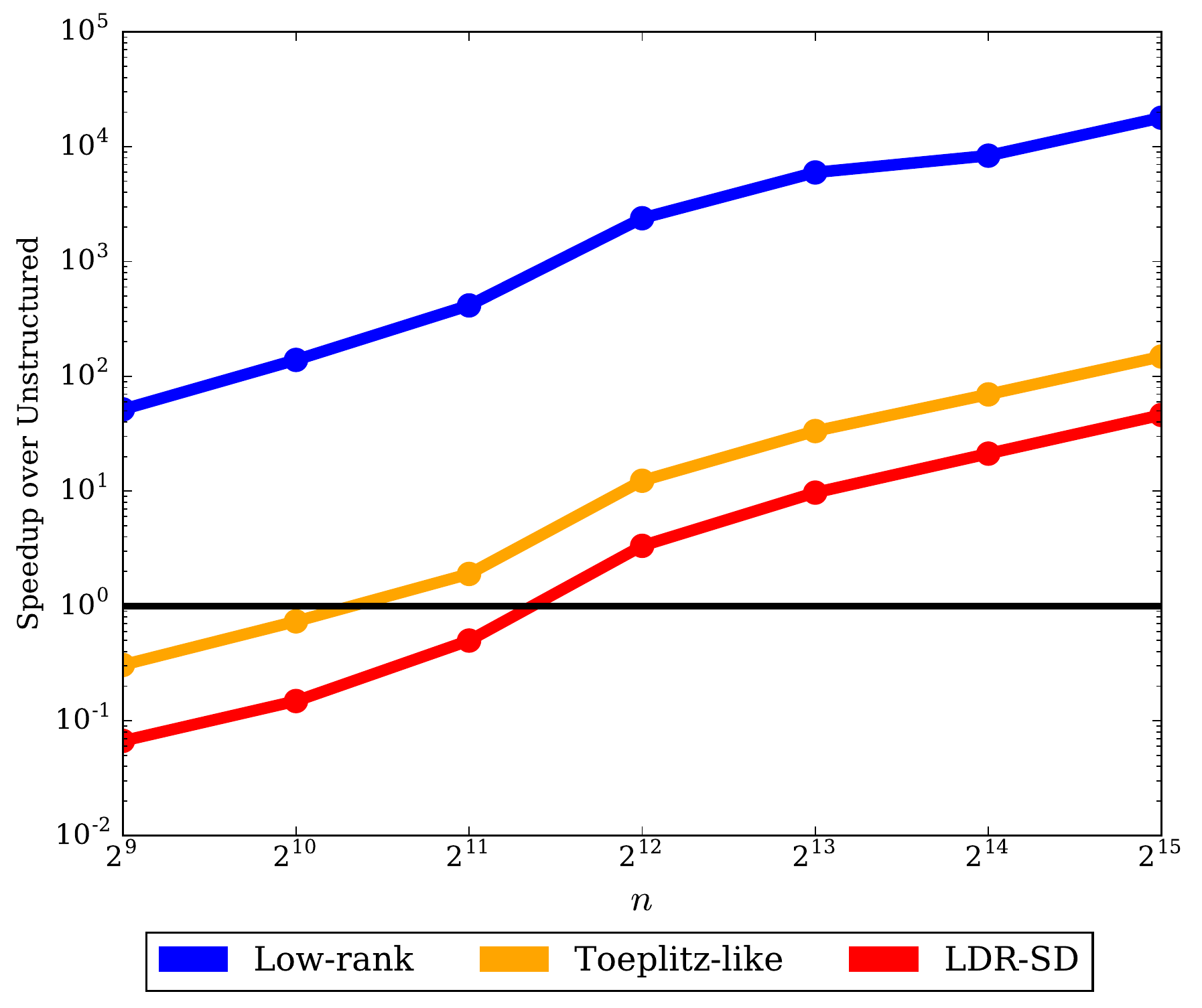}
  \caption{Acceleration of $n \times n$  structured classes over unstructured matrix-vector multiply at inference time. At $n \geq 4096$, LDR-SD ($r=1$) achieves a speedup of 3.34-46.06x over unstructured. Data for higher ranks are shown in Table~\ref{table:speed}. The comparison to the low-rank and Toeplitz-like classes illustrates a tradeoff involved in broadening the class of structured matrices we learn over. Though LDR-SD consistently outperforms these classes on downstream quality, its computational cost of multiplication is $O(nr \log^2 n)$, compared to $O(nr)$ and $O(nr \log n)$ for low-rank and Toeplitz-like respectively. Experimental details are in Appendix~\ref{sec:speed}.}
  \label{fig:speed}
\end{figure}

\section{Experimental details}
\label{sec:exp-details}

\subsection{Image classification}

In Table~\ref{table:datasets}, we provide details on the datasets we use for evaluation. For all our experiments, batch sizes were chosen to be 50. NORB was downsampled to $32\times32$, and the left stereo image was used. Training was performed with stochastic gradient descent with momentum, with the number of epochs set to 50 on all datasets. 15\% of the training data was used for the validation set on all experiments. We fixed momentum at 0.9 for all methods for all experiments, and performed a grid search over learning rate. Unless otherwise stated, for each method, we tested the learning rates \{0.0002, 0.0005, 0.001, 0.002\}, with three trials (with random initializations) per learning rate. For each trial, we test on the validation set at each epoch, and report the test accuracy of the model with the highest validation accuracy, over all learning rates, trials, and epochs.

In Figure~\ref{fig:rank-vs-accuracy}, for each method and each of the four learning rates, we perform five trials with random initializations and report the average and standard deviation of the test accuracy of the learning rate with the highest average validation accuracy.

\begin{table}[ht!]
  \centering
  \caption{Overview of the image classification datasets used in this work. For all datasets, 15\% of the training set was used for the validation set.}
  \begin{tabular}{@{}lllll@{}}
    \toprule
    \textbf{Dataset} & Training Examples & Test Examples & Number of Classes\\ \midrule
        MNIST-bg-rot~\cite{larochelle2007empirical}      & 12000 & 50000 & 10               \\
   MNIST-noise~\cite{larochelle2007empirical}  &    12000 & 2000 & 10                   \\
   CIFAR-10~\cite{krizhevsky2009learning}  &    50000 & 10000 & 10                  \\
   NORB~\cite{lecun2004learning}  &    291600 & 58320 & 6                  \\
   Rectangles~\cite{larochelle2007empirical}  &    1200 & 50000 & 2                   \\
    \bottomrule
  \end{tabular}
  \label{table:datasets}
\end{table}

\paragraph{Single hidden layer architecture}
In these experiments, we used an architecture consisting of a fully-connected hidden layer, followed by a fully-connected softmax layer. In order to be consistent with the architecture used in~\citet{sindhwani2015structured}, we do not use a bias term in the hidden layer.

\paragraph{CNN architecture}
In these experiments, shown in Table~\ref{table:images-extended-cnn} in Appendix~\ref{sec:additional-results}, we tested on a LeNet-based architecture. The architecture has 2 convolution/pool layers with 6 and 16 channels respectively, followed by a fully-connected layer, followed by fully-connected logit/softmax layer. We replaced the second to last fully-connected layer, which was of dimensions $784\times784$ for the MNIST-bg-rot and MNIST-noise datasets, and $1024\times1024$ for the CIFAR-10 and NORB experiments.

\paragraph{Replacing convolutional layers}

This experiment corresponds to Table~\ref{table:conv-filter}.

Here, we investigated whether the convolutional layers of CNNs can be learned automatically. 
For our experiments, we test on the simplest possible multi-channel CNN model on the CIFAR-10 dataset.
The model consists of one layer of convolutional channels ($3$ RGB in channels, $3$ out channels, stride $5$), followed by a fully-connected layer and a final FC+softmax layer (total of 4 layers).
We replace the convolutions with various structured matrices of the same dimensions, keeping the same $3 \times 3$ channel structure (e.g.\ it would consist of $3 \cdot 3 = 9$ square structured matrices) and number of hidden units.\footnote{The convolutions are padded to ensure their input and output dimensions are equal.}

The LDR classes benefit from being composed with LDR matrices of the same type (due to the composition property, Proposition~\ref{prop:closure}\ref{prop:closure:product}), so we additionally replace the later FC layer with the same structured matrix type.

By Proposition~\ref{prop:closure}\ref{prop:closure:block}, channels of Toeplitz-like matrices form a larger Toeplitz-like matrix of the same size.
Using this insight, we consider replacing the channel structure of the convolutional layer with either channels of structured matrices or a single wide structured matrix.
(Also, note that this is able to leverage the asymptotic fast nature of our structured classes.)

Because it seems that convolutional layers are strongly dependent on pooling -- our structured matrices outperform them in isolation -- we compare against a version of the CNN with an additional pooling layer after the convolutional channels.
Note that this comparison is the same basic four layer model with a structured matrix vs.\ a five layer convolutional model with pooling.
Since the architectures are quite different and difficult to directly compare, we also experimented with adding more hidden units to the pooling model.

\subsection{Language modeling}

For a language modeling application\footnote{Code available at \url{https://github.com/pytorch/examples/tree/master/word_language_model}.}, we explored replacing weight matrices in a recurrent neural network with structured matrices. We evaluate on a single layer LSTM architecture, defined by the update equations:

\begin{align*}
        i &= \sigma(W_{ii} x + b_{ii} + W_{hi} h + b_{hi}) \\
        f &= \sigma(W_{if} x + b_{if} + W_{hf} h + b_{hf}) \\
        g &= \tanh(W_{ig} x + b_{ig} + W_{hg} h + b_{hg}) \\
        o &= \sigma(W_{io} x + b_{io} + W_{ho} h + b_{ho}) \\
        c' &= f * c + i * g \\
        h' &= o \tanh(c') \\
\end{align*}

In our experiments we replace the matrices $W_{ii}, W_{if}, W_{ig}, W_{io}$ with structured matrices. We use a hidden layer of size 128, and word embedding size of 128. We evaluate on the Wikitext-2  dataset, which consists of Wikipedia articles (2,088,628 training, 217,646 validation, and 245,569 test tokens). The total vocabulary is of size 33,278. We use the default hyperparameters and train using stochastic gradient descent with an initial learning rate of 20. The learning rate is annealed 4x after each epoch if performance does not improve on the validation set. Results are shown in Table~\ref{table:lstm}.

\end{document}